\pdfoutput=1
\pdfpagewidth=\paperwidth
\pdfpageheight=\paperheight
\documentclass[11pt, letterpaper]{article}

\DeclareFixedFont{\MyTitleFont}{OT1}{\familydefault}{m}{n}{18pt}
\DeclareFixedFont{\MyAuthorFont}{OT1}{\familydefault}{m}{n}{12pt}
\DeclareFixedFont{\MyAbstractTitleFont}{OT1}{\familydefault}{m}{n}{12pt}
\DeclareFixedFont{\MyAbstractFont}{OT1}{\familydefault}{m}{it}{11pt}
\DeclareFixedFont{\MySubtitleFont}{OT1}{\familydefault}{m}{n}{13pt}
\DeclareFixedFont{\MySubSubtitleFont}{OT1}{\familydefault}{m}{n}{12pt}
\DeclareFixedFont{\MySubSubSubtitleFont}{OT1}{\familydefault}{m}{it}{11pt}
\DeclareFixedFont{\MyTextFont}{OT1}{\familydefault}{m}{n}{11pt}


\usepackage{geometry}
\geometry{letterpaper, top = 1in, right = 1in, bottom = 1in, left = 1in}
\linespread{1.25}


\usepackage{titling}

\setlength{\droptitle}{-5em}

\title{\MyTitleFont Distributed Algorithms for Linearly-Solvable Optimal Control in Networked Multi-Agent Systems \vspace{0em}}
\author{\MyAuthorFont Neng Wan$^{1}$, Aditya Gahlawat$^{1}$, Naira Hovakimyan$^{1}$, \\
	Evangelos A. Theodorou$^{2}$, and Petros G. Voulgaris$^{3}$}
\date{}

\usepackage{titlesec}

\usepackage[marginal]{footmisc}
\setlength{\footnotemargin}{0pt}

\usepackage{abstract}

\setlength{\absleftindent}{0em}
\setlength{\absrightindent}{0em}
\setlength{\abstitleskip}{-1.5em}

\usepackage[sort]{cite}

\usepackage{indentfirst}
\parindent = 17pt

\usepackage{amsthm}
\usepackage{amsfonts}
\usepackage{amsmath}
\usepackage{booktabs}
\usepackage{bm}
\usepackage{subfigure}
\usepackage{enumerate}
\usepackage{float}
\usepackage[margin = 2em]{caption}
\captionsetup[figure]{font=footnotesize}

\usepackage{algorithm}
\usepackage{algcompatible}

\usepackage[pdftex]{color,graphicx}
\usepackage[dvipsnames]{xcolor}

\usepackage[pdftex, bookmarks]{hyperref}
\hypersetup{colorlinks = true, citecolor = red, linkcolor = blue, urlcolor = black}

\theoremstyle{remark}

\newtheorem{remark}{Remark}
\theoremstyle{definition}
\newtheorem{lemma}{Lemma}

\newtheorem{theorem}{Theorem}
\newtheorem{proposition}[theorem]{Proposition}

\newcommand{\neweq}[1]{\stackrel{\smash{\footnotesize\mathrm{#1}}}{=}}
\newcommand{\newapprox}[1]{\stackrel{\smash{\footnotesize\mathrm{#1}}}{\approx}}

\usepackage{tikz}
\newcommand*\circled[1]{\tikz[baseline=(char.base)]{
		\node[shape=circle,draw,inner sep=1.5pt] (char) {#1};}}

\newcommand*\squared[1]{\tikz[baseline=(char.base)]{
		\node[shape=rectangle,draw,inner sep=2.5pt] (char) {#1};}}

\makeatletter
\newenvironment{breakablealgorithm}
{
	\begin{center}
		\refstepcounter{algorithm}
		\hrule height.8pt depth0pt \kern2pt
		\renewcommand{\caption}[2][\relax]{
			{\raggedright\textbf{\ALG@name~\thealgorithm} ##2\par}%
			\ifx\relax##1\relax 
			\addcontentsline{loa}{algorithm}{\protect\numberline{\thealgorithm}##2}%
			\else 
			\addcontentsline{loa}{algorithm}{\protect\numberline{\thealgorithm}##1}%
			\fi
			\kern2pt\hrule\kern2pt
		}
	}{
		\kern2pt\hrule\relax
	\end{center}
}
\makeatother


\begin{document}
	
	\maketitle
	
	\footnotetext[0]{$^{1}$Neng Wan, Aditya Gahlawat, and Naira Hovakimyan are with the Department of Mechanical Science and Engineering, University of Illinois at Urbana-Champaign, Urbana, IL 61801. \{nengwan2, gahlawat, nhovakim\}@illinois.edu.\\
	$^{2}$Evangelos A. Theodorou is with the Department of Aerospace Engineering, Georgia Institute of Technology, Atlanta, GA 30332. evangelos.theodorou@gatech.edu.\\
	$^{3}$Petros G. Voulgaris is with the Department of Mechanical Engineering, University of Nevada, Reno, NV 89557. pvoulgaris@unr.edu.}
	\vspace{-4em}
	
	\begin{abstract}
		{Distributed algorithms for both discrete-time and continuous-time linearly solvable optimal control (LSOC) problems of networked multi-agent systems (MASs) are investigated in this paper. A distributed framework is proposed to partition the optimal control problem of a networked MAS into several local optimal control problems in factorial subsystems, such that each (central) agent behaves optimally to minimize the joint cost function of a subsystem that comprises a central agent and its neighboring agents, and the local control actions (policies) only rely on the knowledge of local observations. Under this framework, we not only preserve the correlations between neighboring agents, but moderate the communication and computational complexities by decentralizing the sampling and computational processes over the network. For discrete-time systems modeled by Markov decision processes, the joint Bellman equation of each subsystem is transformed into a system of linear equations and solved using parallel programming. For continuous-time systems modeled by It\^o diffusion processes, the joint optimality equation of each subsystem is converted into a linear partial differential equation, whose solution is approximated by a path integral formulation and a sample-efficient relative entropy policy search algorithm, respectively. The learned control policies are generalized to solve the unlearned tasks by resorting to the compositionality principle, and illustrative examples of cooperative UAV teams are provided to verify the effectiveness and advantages of these algorithms.}
	\end{abstract}
	\vspace{-0.5em}
	{\it Keywords}: Multi-Agent Systems, Linearly-Solvable Optimal Control, Path Integral Control, Relative Entropy Policy Search, Compositionality.
	
	\titleformat*{\section}{\centering\MySubtitleFont}
	\titlespacing*{\section}{0em}{1.25em}{1.25em}[0em]
	\section{Introduction}
	
    The research of control and planning in multi-agent systems (MASs) has been developing rapidly during the past decade with the growing demands from areas, such as cooperative vehicles~\cite{Mahony_RAM_2012, Cichella_CSM_2016}, Internet of Things~\cite{Ota_TSG_2012}, intelligent infrastructures~\cite{Blaabjerg_TIE_2006, Guerrero_TIE_2013, Dorfler_TCNS_2016}, and smart manufacturing~\cite{Leitao_EAAI_2009}. Distinct from other control problems, control of MASs is characterized by the issues and challenges, which include, but are not limited to, a great diversity of possible planning and execution schemes, limited information and resources of the local agents, constraints and randomness of communication networks, optimality and robustness of joint performance. A good summary of recent progress in multi-agent control can be found in~\cite{Amato_CDC_2013, Bensoussan_2013, Cao_TII_2013, Frank_2013, Oh_Auto_2015, Qin_TIE_2017, Zhang_arxiv_2019}. Building upon these results and challenges, this paper puts forward a distributed optimal control scheme for stochastic MASs by extending the linearly-solvable optimal control algorithms to MASs subject to stochastic dynamics in the presence of an explicit communication network and limited feedback information.
    
    Linearly-solvable optimal control (LSOC) generally refers to the model-based stochastic optimal control (SOC) problems that can be linearized and solved with the facilitation of Cole-Hopf transformation, \textit{i.e.} exponential transformation of value function~\cite{Fleming_AMO_1977, Todorov_NIPS_2007}. Compared with other model-based SOC techniques, since LSOC formulates the optimality equations in linear form, it enjoys the superiority of analytical solution~\cite{Pan_NIPS_2015} and superposition principle~\cite{Todorov_NIPS_2009}, which makes LSOC a popular control scheme for robotics~\cite{Kupcsik_AI_2017, Williams_TRO_2018}. LSOC technique was first introduced to linearize and solve the Hamilton–Jacobi–Bellman (HJB) equation for continuous-time SOC problems~\cite{Fleming_AMO_1977}, and the application to discrete-time SOC, also known as the linearly-solvable Markov decision process (LSMDP), was initially studied in~\cite{Todorov_NIPS_2007}. More recent progress on single-agent LSOC problems can be found in~\cite{Peters_CAI_2010, Theodorou_JMLR_2010, Gomez_KDD_2014, Guan_TAC_2014, Pan_NIPS_2015, Williams_JGCD_2017}. 
    
    Different from many prevailing distributed control algorithms~\cite{Cao_TII_2013, Frank_2013, Oh_Auto_2015, Qin_TIE_2017}, such as consensus and synchronization that usually assume a given behavior, multi-agent SOC allows agents to have different objectives and optimizes the action choices for more general scenarios~\cite{Amato_CDC_2013}. Nevertheless, it is not straightforward to extend the single-agent SOC methods to multi-agent problems. The exponential growth of dimensionality in MASs and the consequent surges in computation and data storage demand more sophisticated and preferably distributed planning and execution algorithms. The involvement of communication networks (and constraints) requires the multi-agent SOC algorithms to achieve stability and optimality subject to local observation and more involved cost function. While the multi-agent Markov decision process (MDP) problem has received plenty of attention from both the fields of computer science and control engineering \cite{Guestrin_2002_NIPS, Becker_JAIR_2004, Amato_CDC_2013, Zhang_ICML_2018, Zhang_arxiv_2019, Zhang_arxiv_2019b}, there are relatively fewer results focused on multi-agent LSMDP. A recent result on multi-agent LSMDP represented the MAS problem as a single-agent problem by stacking the states of all agents into a joint state vector, and the scalability of the problem was addressed by parameterizing the value function~\cite{Daniel_2017}; however, since the planning and execution of the control action demand the knowledge of global states as well as a centralized coordination, the parallelization scheme of the algorithm was postponed in that paper. While there are more existing results focused on the multi-agent LSOC in continuous-time setting, most of these algorithms still depend on the knowledge of the global states, \textit{i.e.} a fully connected communication network, which may not be feasible or affordable to attain in practice. Some multi-agent LSOC algorithms also assume that the joint cost function can be factorized over agents, which basically simplifies the multi-agent control problem into multiple single-agent problems, and some features and advantages of MASs are therefore forfeited. Broek \textit{et al.} investigated the multi-agent LSOC problem for continuous-time systems governed by It\^o diffusion process~\cite{Broek_JAIR_2008}; a path integral formula was put forward to approximate the optimal control actions, and a graphical model inference approach was adopted to predict the optimal path distribution; nonetheless, the optimal control law assumed an accurate and complete knowledge of global states, and the inference was performed on the basis of mean-field approximation, which assumes that the cost function can be disjointly factorized over agents and ignores the correlations between agents. A distributed LSOC algorithm with infinite-horizon and discounted cost was studied in~\cite{Anderson_Robotica_2014} for solving a distance-based formation problem of nonholonomic vehicular network without explicit communication topology. The multi-agent LSOC problem was also recently discussed in~\cite{Williams_JGCD_2017} as an accessory result for a novel single-agent LSOC algorithm; an augmented dynamics was built by piling up the dynamics of all agents, and a single-agent LSOC algorithm was then applied to the augmented system. Similar to the discrete-time result in~\cite{Daniel_2017}, the continuous-time result resorting to augmented dynamics also presumes the fully connected network and faces the challenge that the computation and sampling schemes that originated from single-agent problem may become inefficient and possibly fail as the dimensions of augmented state and control grow exponentially in the number of agents.
    
    To address the aforementioned challenges, this paper investigates the distributed LSOC algorithms for discrete-time and continuous-time MASs with consideration of local observation, correlations between neighboring agents, efficient sampling and parallel computing. A distributed framework is put forward to partition the connected network into multiple factorial subsystems, each of which comprises a (central) agent and its neighboring agents, such that the local control action of each agent, depending on the local observation, optimizes the joint cost function of a factorial subsystem, and the sampling and computational complexities of each agent are related to the size of the factorial subsystem instead of the entire network. Sampling and computation are parallelized to expedite the algorithms and exploit the resource in network, with state measurements, intermediate solutions, and sampled data exchanged over the communication network. For discrete-time multi-agent LSMDP problem, we linearize the joint Bellman equation of each factorial subsystem into a system of linear equations, which can be solved with parallel programming, making both the planning and execution phases fully decentralized. For continuous-time multi-agent LSOC problem, instead of adopting the mean-field assumption and ignoring the correlations between neighboring agents, joint cost functions are permitted in the subsystems; the joint optimality equation of each subsystem is first cast into a joint stochastic HJB equation, and then solved with a distributed path integral control method and a sample-efficient relative entropy policy search (REPS) method, respectively. The compositionality of LSOC is utilized to efficiently generate a composite controller for unlearned task from the existing controllers for learned tasks. Illustrative examples of coordinated UAV teams are presented to verify the effectiveness and advantages of multi-agent LSOC algorithms. Building upon our preliminary work on distributed path integral control for continuous-time MASs~\cite{Wan_arXiv_2020}, this paper not only integrates the distributed LSOC algorithms for both discrete-time and continuous-time MASs, but supplements the previous result with a distributed LSMDP algorithm for discrete-time MASs, a distributed REPS algorithm for continuous-time MASs, a compositionality algorithm for task generalization, and more illustrative examples.

    The paper is organized as follows: \hyperref[sec2]{Section~2} introduces the preliminaries and formulations of multi-agent LSOC problems; \hyperref[sec3]{Section~3} presents the distributed LSOC algorithms for discrete-time and continuous-time MASs, respectively; \hyperref[sec4]{Section~4} shows the numerical examples, and \hyperref[sec5]{Section~5} draws the conclusions. Some notations used in this paper are defined as follows: For a set $\mathcal{S}$, $|\mathcal{S}|$ represents the cardinality of the set $\mathcal{S}$; for a matrix $X$ and a vector $v$, $\det X$ denotes the determinant of matrix $X$, and weighted square norm $\|v\|^2_X := v^\top X v$.

	\section{Preliminaries and Problem Formulation}\label{sec2}
	
	Preliminaries on MASs and LSOC are introduced in this section. The communication networks underlying MASs are represented by graphs, and the discrete-time and continuous-time LSOC problems are extended from single-agent scenario to MASs under a distributed planning and execution framework.

	\titleformat*{\subsection}{\MySubSubtitleFont}
	\titlespacing*{\subsection}{0em}{0.75em}{0.75em}[0em]
	\subsection{Multi-Agent Systems and Distributed Framework}\label{sec2.1}
	For a MAS consisting of $N \in \mathbb{N}$ agents, $\mathcal{D} = \{1, 2, \cdots, N\}$ denotes the index set of agents, and the communication network among agents is described by an undirected graph $\mathcal{G} = \{\mathcal{V}, \mathcal{E}\}$, which implies that the communication channel between any two agents is bilateral. The communication network $\mathcal{G}$ is assumed to be connected. An agent $i \in \mathcal{D}$ in the network is denoted as a vertex $v_i \in \mathcal{V} = \{v_1, v_2, \cdots, v_N\}$, and an undirected edge $(v_i, v_j) \in \mathcal{E} \subset \mathcal{V} \times \mathcal{V}$ in graph $\mathcal{G}$ implies that the agents $i$ and $j$ can measure the states of each other, which is denoted by $x_i = [x_{i(1)}, x_{i(2)}, \cdots, x_{i(M)}]^\top \in \mathbb{R}^{M}$ for agent $i \in \mathcal{D}$. Agents $i$ and $j$ are neighboring or adjacent agents, if there exists a communication channel between them, and the index set of agents neighboring to agent $i$ is denoted by $\mathcal{N}_i$ with $\bar{\mathcal{N}}_i = \mathcal{N}_i \cup \{ i \}$. We use column vectors $\bar{x}_i = [x_i^\top, x^\top_{j \in \mathcal{N}_i}]^\top \in \mathbb{R}^{M \cdot |\mathcal{\bar N}_i|   }$ to denote the joint state of agent $i$ and its adjacent agents $\mathcal{N}_i$, which together group up the factorial subsystem $\bar{\mathcal{N}}_i$ of agent $i$, and $x = [x_1^\top, x_2^\top, \cdots, x^\top_n]^\top \in \mathbb{R}^{M \cdot N}$  denotes the global states of MAS. \hyperref[fig1]{Figure~1} shows a MAS and all its factorial subsystems.
	
	To optimize the correlations between neighboring agents while not intensifying the communication and computational complexities of the network or each agent, our paper proposes a distributed planning and execution framework, as a trade-off scheme between the easiness of implementation and optimality of performance, for multi-agent LSOC problems. Under this distributed framework, the local control action $u_i$ of agent $i \in N$ is computed by solving the local LSOC problem defined in factorial subsystem $\bar{\mathcal{N}}_i$. Instead of requiring the cost functions fully factorized over agents \cite{Broek_JAIR_2008} or the knowledge of global states \cite{Daniel_2017, Williams_JGCD_2017}, joint cost functions are permitted in every factorial subsystem, which captures the correlations and cooperation between neighboring agents, and the local control action $u_i$ only relies on the local observation $\bar{x}_i$ of agent $i$, which also simplifies the structure of communication network. Meanwhile, the global computational complexity is no longer exponential with respect to the total amount of agents in network $|\mathcal{D}|$, but becomes linear with respect to it in general, and the computational complexity of local agent $i$ is only related to the number of agents in factorial subsystem $\bar{\mathcal{N}}_i$. However, since the local control actions are computed from the local observations with only partial information, the distributed LSOC law obtained under this framework is usually a sub-optimal solution, despite that it coincides with the global optimal solution when the communication network is fully connected. More explanations and discussions on this will be given at the end of this section. Before that, we first reformulate the discrete-time and continuous-time LSOC problems from the single-agent scenario to a multi-agent setting that is compatible with our distributed framework.
	\begin{figure}[htpb]
		\centering 
		\includegraphics[width=0.75\textwidth]{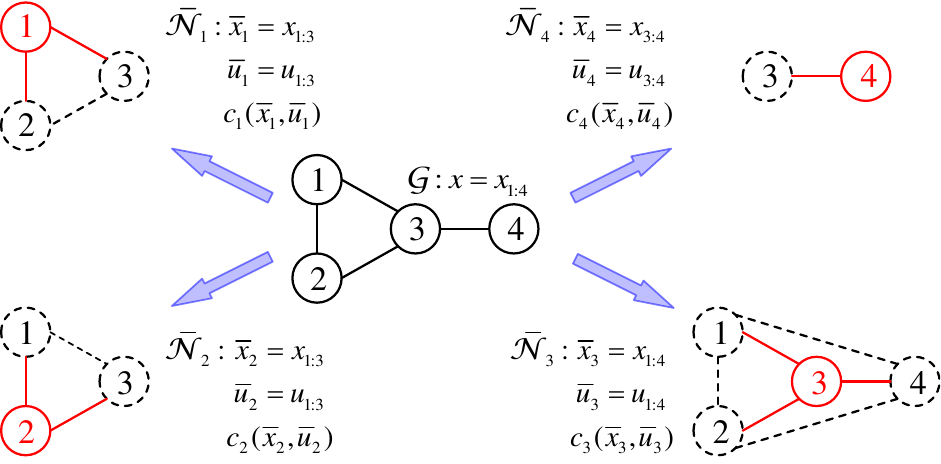}
		\caption{An example of MAS and factorial subsystems. MAS $\mathcal{G}$ with four agents can be partitioned into four factorial subsystems $\bar{\mathcal{N}}_1, \bar{\mathcal{N}}_2, \bar{\mathcal{N}}_3$, and $\bar{\mathcal{N}}_4$, and each subsystem is assumed to be fully connected.}\label{fig1}
	\end{figure}

\subsection{Discrete-Time Dynamics}\label{Sec2_2}
Discrete-time SOC for single-agent systems, also known as the single-agent MDP, is briefly reviewed and then generalized to the networked MAS scenario. We consider the single-agent MDPs with finite state space and continuous control space in the first-exit setting. For a single agent $i \in \mathcal{D}$, the state variable $x_i$ belongs to a finite set $\mathcal{S}_i = \{s^i_1, s^i_2, \cdots\} = \mathcal{I}_i \cup \mathcal{B}_i$, which may be generated from an infinite-dimensional state problem by an appropriate coding scheme~\cite{Sutton_2018}. $\mathcal{I}_i \subset \mathcal{S}_i$ denotes the set of interior states of agent $i$, and $\mathcal{B}_i \subset \mathcal{S}_i$ denotes the set of boundary states. Without communication and interference from other agents, the passive dynamics of agent $i$ follows the probability distribution
\begin{equation*}
	x'_i \sim p_i( \cdot  |  x_i),
\end{equation*} 
where $x_i, x'_i \in \mathcal{S}_i$, and $p_i(x'_i  |  x_i)$ denotes the transition probability from state $x_i$ to $x'_i$. When taking control action $u_i$ at state $x_i$, the controlled dynamics of agent $i$ is described by the distribution mapping
\begin{equation}\label{single_agent_controlled}
	x_i' \sim  u_i (  \cdot | x_i) = p_i(  \cdot  |  x_i, u_i),
\end{equation}
where $u_i(x'_i |  x_i)$ or $p_i(x'_i  |  x_i, u_i)$ denotes the transition probability from state $x_i$ to state $x'_i$ subject to control $u_i$ that belongs to a continuous space. We require that $u_i(x'_i  |  x_i) = 0$, whenever $p_i(x'_i  |  x_i) = 0$, to prevent the direct transitions to the goal states. When $x_i \in \mathcal{I}_i \subset \mathcal{S}_i$, the immediate or running cost function of LSMDPs is designed as:
\begin{equation}\label{eq3}
	c_i(x_i, u_i) = q_i(x_i) + \textrm{KL}( u_i(\cdot  | x_i) \parallel p_i( \cdot  |  x_i)),
\end{equation}
where the state cost $q_i(x_i)$ can be an arbitrary function encoding how (un)desirable different states are, and the KL-divergence\footnote{The KL-divergence (relative entropy) between two discrete probability mass functions $p(x)$ and $q(x)$ is defined as\begin{equation*}\label{KLD}
		\textrm{KL}(p \parallel q) = \sum_{x \in \mathcal{X}} p(x) \log[{p(x)} / {q(x)}],
	\end{equation*}which has an absolute minimum $0$ when $p(x) = q(x), \forall x \in \mathcal{X}$. For two continuous probability density functions $p(x)$ and $q(x)$, the KL-divergence is defined as 
	\begin{equation*}
		\textrm{KL}(p  \parallel  q) = \int_{x\in \chi}  p(x) \log[p(x) / q(x)] dx.
\end{equation*}} measures the cost of control actions. When $x_i \in \mathcal{B}_i \subset \mathcal{S}_i$, the final cost function is defined as $\phi_i(x_i) \geq 0$.  The cost-to-go function of first-exit problem starting at state-time pair $(x_i^{t_0}, t_0)$ is defined as
\begin{equation}\label{eq7}
	J^{u_i}_i(x^{t_0}_i, t_0) = \mathbb{E}^{u_i} \bigg[ \phi_i(x_{i}^{t_f}) + \sum_{\tau = t_0}^{t_f - 1} c_i(x_i^\tau, u^\tau_i)  \bigg], 
\end{equation}
where $(x^\tau_i, u_i^\tau)$ is the state-action pair of agent $i$ at time step $\tau$, $x^{t_f}_i$ is the terminal or exit state, and the expectation $\mathbb{E}^{u_i}$ is taken with respect to the probability measure under which $x_i$ satisfies~\eqref{single_agent_controlled} given the control law $u_i = (u^{t_0}_i, u^{t_1}_i, \cdots, u^{t_f - 1}_i)$ and initial condition $x_i^{t_0}$. The objective of discrete-time stochastic optimal control problem is to find the optimal policy $u_i^*$ and value functions $V_i(x_i)$ by solving the Bellman equation
\begin{equation}\label{eq8}
	V_i(x_i) = \min_{u_i} \left\{ c_i(x_i, u_i) + \mathbb{E}_{x'_i \sim u_i(\cdot | x_i)}[V_i(x'_i)]  \right\},
\end{equation}
where the value function $V_i(x_i)$ is defined as the expected cumulative cost for starting at state $x_i$ and acting optimally thereafter, \textit{i.e.} $V_i(x_i) = \min_{u_i} J^{u_i}_i(x^{t_0}_i, t_0)$.

Based on the formulations of single-agent LSMDP and augmented dynamics, we introduce a multi-agent LSMDP formulation subject to the distributed framework of the factorial subsystem. For simplicity, we assume that the passive dynamics of agents in MAS are homogeneous and mutually independent, \textit{i.e.} agents without control are governed by identical dynamics and do not interfere or collide with each other. This assumption is also posited in many previous papers on multi-agent LSOC or distributed control~\cite{Olfati-Saber_PIEEE_2007, Broek_JAIR_2008, Frank_2013, Williams_JGCD_2017}. Since agent $i \in \mathcal{D}$ can only observe the states of neighboring agents $\mathcal{N}_i$, we are interested in the subsystem $\bar{\mathcal{N}}_i = \mathcal{N}_i \cup \{i\}$ when computing the control law of agent $i$. Hence, the autonomous dynamics of subsystem $\bar{\mathcal{N}}_i$ follow the distribution mapping
\begin{equation}\label{eq1}
	\bar{x}'_i \sim \bar{p}_i(  \cdot | \bar{x}_i) =\prod_{j \in {\mathcal{\bar{N}}}_i } p_j(  \cdot  |  x_j),
\end{equation}
where the joint state $\bar{x}_i, \bar{x}'_i \in \prod_{j\in\mathcal{\bar{N}}_i} S_j = \bar{\mathcal{S}}_i = \{\bar{s}^i_1, \bar{s}^i_2, \cdots \}= \bar{\mathcal{I}}_i \cup \bar{\mathcal{B}}_i$, and distribution information $\bar{p}_i( \cdot  |  \bar{x}_i)$ are generally only accessible to agent $i$. Similarly, the global state of all agents $x' \sim p( \cdot  |  x) = \prod_{i=1}^{N} p_i(  \cdot |  x_i)$, which is usually not available to a local agent $i$ unless it can receive the information from all other agents $\mathcal{D} \backslash \{i\}$, \textit{e.g.} agent 3 in~\hyperref[fig1]{Figure~1}. Since the local control action $u_i$ only relies on the local observation of agent $i$, we assume that the joint posterior state $\bar{x}'_i$ of subsystem $\bar{\mathcal{N}}_i$ is exclusively determined by the joint prior state $\bar{x}_i$ and joint control $\bar{u}_i$ when computing the optimal control actions in subsystem $\bar{\mathcal{N}}_i$. More intuitively, under this assumption, the local LSOC algorithm in subsystem $\bar{\mathcal{N}}_i$ only requires the measurement of joint state $\bar{x}_i$ and treats subsystem $\bar{\mathcal{N}}_i$ as a complete connected network, as shown in~\hyperref[fig1]{Figure~1}. When each agent in $\mathcal{\bar{N}}_i$ samples their control action independently, the joint controlled dynamics of the factorial subsystem $\bar{\mathcal{N}}_i$ satisfies
\begin{equation}\label{eq2}
	\bar{x}'_i \sim \bar{u}_i( \cdot | \bar{x}_i) =  \prod_{j \in \mathcal{\bar{N}}_i} u_j ( \cdot  |  \bar{x}_i) = \prod_{j\in\mathcal{\bar{N}}_i} p_j(  \cdot  |  x_i, x_{j\in\mathcal{N}_i}, u_j),
\end{equation}
where the joint state $\bar{x}_i$ and joint distribution $\bar{u}_i(\cdot | \bar{x}_i)$ are only accessible to agent $i$ in general. Once we figure out the joint control distribution $\bar{u}_i(\cdot | \bar{x}_i)$ for subsystem $\bar{\mathcal{N}}_i$, the local control distribution $u_i(\cdot | \bar{x}_i)$ of agent $i$ can be retrieved by calculating the marginal distribution. The joint immediate cost function for subsystem $\bar{\mathcal{N}}_i$ when $\bar{x}_i \in \mathcal{\bar{I}}_i$ is defined as follows 
\begin{equation}\label{eq4}
	c_i(\bar{x}_i, \bar{u}_i) =q_i(\bar{x}_i)  + \textrm{KL}(\bar{u}_i(  \cdot  | \bar{x}_i) \parallel  \bar{p}_i(  \cdot  | \bar{x}_i)) = q_i(\bar{x}_i)  + \sum_{j \in \bar{\mathcal{N}}_i}\textrm{KL}(u_j(  \cdot  | \bar{x}_i) \parallel  p_j(  \cdot  | x_j)) , 
\end{equation}
where the state cost $q_i(\bar{x}_i)$ can be an arbitrary function of joint state $\bar{x}_i$, \textit{i.e.} a constant or the norm of disagreement vector, and the second equality follows from \eqref{eq1} and \eqref{eq2}, which implies that the joint control cost is the cumulative sum of the local control costs. When $\bar{x}_i \in \mathcal{\bar{B}}_i$, the exit cost function $\phi_i(\bar{x}_i) = \sum_{j\in\mathcal{\bar{N}}_i} \omega^i_{j} \cdot \phi_j(x_j)$, where $\omega^i_{j} > 0$ is a weight measuring the priority of assignment on agent $j$. In order to improve the success rate in application, it is preferable to assign $\omega_i^i$ as the largest weight when computing the control distribution $\bar{u}_i$ in subsystem $\mathcal{\bar{N}}_i$. Subsequently, the joint cost-to-go function of first-exit problem in subsystem $\bar{\mathcal{N}}_i$ becomes
\begin{equation}\label{J_CTG_Discrete_Time}
	J^{\bar{u}_i}_i(\bar{x}^{t_0}_i, t_0) = \mathbb{E}^{\bar{u}_i} \bigg[ \phi_i(\bar{x}_{i}^{t_f}) + \sum_{\tau = t_0}^{t_f - 1} c_i(\bar{x}_i^\tau, \bar{u}^\tau_i)  \bigg].
\end{equation}
Some abuses of notations occur when we define the cost functions $c_i$ and $\phi_i$, cost-to-go function $J_i$, and value function $V_i$ in single-agent setting and the factorial subsystem; one can differentiate the different settings from the arguments of these functions. Derived from the single-agent Bellman equation~\eqref{eq8}, the joint optimal control action $\bar{u}_i^*( \cdot  | \bar{x}_i)$ subject to the joint cost function~\eqref{eq4} can be solved from the following joint Bellman equation in subsystem $\bar{\mathcal{N}}_i$
\begin{equation}\label{CompBellman}
	V_i(\bar{x}_i) = \min_{\bar u_i} \left\{ c_i(\bar{x}_i, \bar{u}_i) + \mathbb{E}_{\bar{x}'_i \sim \bar{u}_i(\cdot | \bar{x}_i)}[V_i(\bar{x}'_i)]  \right\},
\end{equation}
where $V_i(\bar{x}_i)$ is the (joint) value function of joint state $\bar{x}_i$. A linearization method as well as a parallel programming method for solving~\eqref{CompBellman} will be discussed in~\hyperref[sec3]{Section~3}.

\subsection{Continuous-Time Dynamics}
For continuous-time LSOC problems, we first consider the dynamics of single agent $i$ described by the following It\^o diffusion process 
\begin{equation}\label{eq10}
	dx_i = f_i(x_i, t)dt + B_i(x_i) [u_i(x_i, t)dt + \sigma_i d w_i],
\end{equation}
where $x_i\in \mathbb{R}^{M}$ is agent $i$'s state vector from an uncountable state space; $f_i(x_i, t) + B_i(x_i) \cdot u _i(x_i, t) \in \mathbb{R}^{M}$ is the deterministic drift term with passive dynamics $f_i(x_i, t)$, control matrix $B_i(x_i) \in \mathbb{R}^{M\times P}$ and control action $u_i(x_i, t) \in \mathbb{R}^P$; noise $dw_i \in \mathbb{R}^P$ is a vector of possibly correlated\footnote{When the components of $d{\tilde w}_i = [d{\tilde w}_{i, (1)}, \cdots, d{\tilde w}_{i, (P)}]^\top$ are correlated and satisfy a multi-variate normal distribution $N(0, \Sigma_i)$, by using the Cholesky decomposition $\Sigma_i = \sigma_i \sigma^\top_i$, we can rewrite $d\tilde{w}_i = \sigma_i dw_i$, where $dw_i$ is a vector of Brownian components with zero drift and unit-variance rate.} Brownian components with zero mean and unit rate of variance, and the positive semi-definite matrix $\sigma_i \in \mathbb{R}^{P \times P}$ denotes the covariance of noise $dw_i$.  When $x_i \in \mathcal{I}_i$, the running cost function is defined as
\begin{equation}\label{eq11}
	c_i(x_i, u_i) = q_i(x_i) + \dfrac{1}{2}u_i(x_i, t)^\top R_i u_i(x_i, t),
\end{equation}
where $q_i(x_i) \geq 0$ is the state-related cost, and $u_i^\top R_iu_i$ is the control-quadratic term with matrix $R_i \in \mathbb{R}^{P\times P}$ being positive definite. When $x^{t_f}_i \in \mathcal{B}_i$, the terminal cost function is $\phi_i(x^{t_f}_i)$, where $t_f$ is the exit time. Hence, the cost-to-go function of first-exit problem is defined as
\begin{equation}\label{J_CTG_Cont_Time}
	J^{u_i}_i(x_i^t, t) = \mathbb{E}^{u_i}_{x_i^t, t} \left[ \phi_i(x_{i}^{t_f}) + \int_{t}^{t_f}  c_i(x_i(\tau), u_i(\tau)) \ d\tau \right],
\end{equation}
where the expectation is taken with respect to the probability measure under which $x_i$ is the solution to~\eqref{eq10} given the control law $u_i$ and initial condition $x_i(t)$. The value function is defined as the minimal cost-to-go function $V_i(x_i, t) = \min_{u_i} J_i^{u_i}(x_i^t, t)$. Subject to the dynamics~\eqref{eq10} and running cost function~\eqref{eq11}, the optimal control action $u_i^*$ can be solved from the following single-agent stochastic Hamilton–Jacobi–Bellman (HJB) equation:
\begin{align}\label{singleHJB}
	- \partial_t V_i(x_i, t) = \min_{u_i} \Big\{ c_i(x_i, u_i)   +  [f_i(x_i, t)  & + B_i(x_i)u_i(x_i,t)]^\top \cdot \nabla_{x_i}  V_i(x_i, t) \\
	&  + \frac{1}{2} \textrm{tr} \left[B_i(x_i)\sigma_i\sigma^\top_iB_i(x_i)^\top \cdot \nabla^2_{x_ix_i} V_i(x_i, t) \right] \Big\}, \nonumber
\end{align}
where $\nabla_{x_i}$ and $\nabla^2_{x_i x_i}$ respectively refer to the gradient and Hessian matrix with $\nabla_{x_i} V_i = [ \partial V_i / \partial x_{i(1)}, \break \cdots,  \partial V_i / \partial x_{i(M)}]^\top$ and elements $[\nabla^2_{x_ix_i}V_i]_{m,n} = \partial^2 V_i / \partial x_{i(m)} \partial x_{i(n)}$. A few methods have been proposed to solve the stochastic HJB in~\eqref{singleHJB}, such as the approximation methods via discrete-time MDPs or eigenfunction~\cite{Todorov_IEEESADPRL_2009} and path integral approaches~\cite{Kappen_PRL_2005, Broek_JAIR_2008, Theodorou_JMLR_2010}.

Similar to the extension of discrete-time LSMDP from single-agent setting to MAS, the joint continuous-time dynamics for factorial subsystem $\bar{\mathcal{N}}_i$ is described by
\begin{equation}\label{eq13}
	d \bar{x}_i = \bar{f}_i(\bar{x}_i,t) dt + \bar{B}_i(\bar{x}_i)  \left[ \bar{u}_i(\bar{x}_i,t) dt + \bar{\sigma}_i d\bar{w}_i  \right],
\end{equation}
where the joint passive dynamics vector is denoted by $\bar{f}_i(\bar{x}_i,t) = [f_i(x_i, t)^\top, f_{j\in\mathcal{N}_i}(x_j,t)^\top]^\top \in \mathbb{R}^{M \cdot |\mathcal{\bar N}_i|}$, the joint control matrix is denoted by $\bar{B}_i(\bar{x}_i) = \textrm{diag}\{ B_i(x_i), B_{j\in\mathcal{N}_i}(x_{j}) \} \in \break \mathbb{R}^{M \cdot |\bar{\mathcal{N}}_i| \times P \cdot |\bar{\mathcal{N}}_i| }$,  $\bar{u}_i(\bar{x}_i,t) = [u_i(\bar{x}_i,t)^\top,  u_{j \in \mathcal{N}_i}(\bar{x}_i, t)^\top]^\top \in \mathbb{R}^{P \cdot |\bar{\mathcal{N}}_i|}$ is the joint control action,  $d\bar{w}_i = [dw^\top_i,   dw^\top_{j\in\mathcal{N}_i}]^\top  \in  \mathbb{R}^{P\cdot|\bar{\mathcal{N}}_i|}$ is the joint noise vector, and the joint covariance matrix is denoted by $\bar{\sigma}_i = \textrm{diag}\{ \sigma_i, \sigma_{j\in\mathcal{N}_i} \} \in \mathbb{R}^{P \cdot |\bar{\mathcal{N}}_i| \times P \cdot |\bar{\mathcal{N}}_i|}$. Analogous to the discrete-time scenario, we assume that the passive dynamics of agents are homogeneous and mutually independent, and for the local planning algorithm on agent $i$ or subsystem $\bar{\mathcal{N}}_i$, which computes the local control action $u_i(\bar{x}_i, t)$ for agent $i$ and joint control action $\bar{u}_i(\bar{x}_i, t)$ in subsystem $\bar{\mathcal{N}}_i$, the evolution of joint state $\bar{x}_i$ only depends on the current values of $\bar{x}_i$ and joint control $\bar{u}_i(\bar{x}_i, t)$. When $\bar{x}_i \in \mathcal{\bar I}_i$, the joint immediate cost function for subsystem $\bar{\mathcal{N}}_i$ is defined as
\begin{equation}\label{cont_cost}
	c_i(\bar{x}_i, \bar{u}_i) = q_i(\bar{x}_i) + \frac{1}{2}\bar{u}_i(\bar{x}_i, t)^\top \bar{R}_i \bar{u}_i(\bar{x}_i, t),
\end{equation}
where the state-related cost $q_i(\bar{x}_i)$ can be an arbitrary function measuring the (un)desirability of different joint states $\bar{x}_i \in \mathcal{\bar S}_i$, and $\bar{u}_i^\top \bar{R}_i \bar{u}_i$ is the control-quadratic term with matrix $\bar{R}_i \in \mathbb{R}^{P \cdot |\bar{\mathcal{N}}_i| \times P \cdot |\bar{\mathcal{N}}_i|}$ being positive definite. When $\bar{R}_i = \textrm{diag}\{R_i, R_{j \in \mathcal{N}_i} \}$ with $R_i$ and $R_j$ defined in~\eqref{eq11}, the joint control cost term in~\eqref{cont_cost} satisfies $\bar{u}_i^\top \bar{R}_i \bar{u}_i = \sum_{j\in\mathcal{\bar{N}}_i}u_j^\top R_ju_j$, which is symmetric with respect to the relationship of discrete-time control costs in~\eqref{eq4}. When $\bar{x}_i \in \bar{\mathcal{B}}_i$, the terminal cost function is defined as $\phi_i(\bar{x}_i) = \sum_{j\in\mathcal{\bar{N}}_i}\omega_j^i \cdot \phi_j(x_j)$, where $\omega_j^i > 0$ is the weight measuring the priority of assignment on agent $j$, and we let the weight $\omega_i^i$ dominate other weights $\omega_{j \in \mathcal{N}_i}^i$ to improve the success rate. Compared with the cost functions fully factorized over agents, the joint cost functions in~\eqref{cont_cost} can gauge and facilitate the correlation and cooperation between neighboring agents. Subsequently, the joint cost-to-go function of first-exit problem in subsystem $\bar{\mathcal{N}}_i$ is defined as
\begin{equation*}
	J^{\bar{u}_i}_i(\bar{x}_i^t, t) = \mathbb{E}^{\bar{u}_i}_{\bar{x}_i^t, t} \left[ \phi_i(\bar{x}_{i}^{t_f}) + \int_{t}^{t_f} c_i(\bar{x}_i(\tau), \bar{u}_i(\tau)) \ d\tau  \right].
\end{equation*}
Let the (joint) value function $V_i(\bar{x}_i, t)$ be the minimal cost-to-go function, \textit{i.e.} $V_i(\bar{x}_i, t) = \min_{\bar{u}_i} J_i^{\bar{u}_i}(\bar{x}_i^t, t)$. We can then compute the joint optimal control action $\bar{u}_i^*$ of subsystem $\mathcal{\bar{N}}_i$ by solving the following joint optimality equation
\begin{equation}\label{ContBellman}
	V_i(\bar{x}_i, t) = \min_{\bar{u}_i} \mathbb{E}^{\bar{u}_i}_{\bar{x}_i^t, t} \left[ \phi_i(\bar{x}_{i}^{t_f}) + \int_{t}^{t_f} c_i(\bar{x}_i(\tau), \bar{u}_i(\tau)) \ d\tau  \right].
\end{equation}
A distributed path integral control algorithm and a distributed REPS algorithm for solving~\eqref{ContBellman} will be respectively discussed in~\hyperref[sec3]{Section~3}. Discussion on the relationship between discrete-time and continuous-time LSOC dynamics can be found in~\cite{Todorov_NIPS_2007, Todorov_PNAS_2009}.

\begin{remark}
	Although each agent $i \in \mathcal{D}$ under this framework acts optimally to minimize a joint cost-to-go function defined in their subsystem $\mathcal{\bar{N}}_i$, the distributed control law obtained from solving the local problem \eqref{CompBellman} or \eqref{ContBellman} is still a sub-optimal solution unless the communication network $\mathcal{G}$ is fully connected. Two main reasons account for this sub-optimality. First, when solving \eqref{CompBellman} or \eqref{ContBellman} for the joint (or local) optimal control $\bar{u}_i^*$ (or $u_i^*$), we ignore the connections of agents outside the subsystem $\bar{\mathcal{N}}_i$ and assume that the evolution of joint state $\bar{x}_i$ only relies on the current values of $\bar{x}_i$ and joint control $\bar{u}_i$. This simplification is reasonable and almost accurate for the central agent $i$ of subsystem $\bar{\mathcal{N}}_i$, but not for the non-central agents $j \in \mathcal{N}_i$, which are usually adjacent to other agents in $\mathcal{N}_j \backslash \mathcal{N}_i$. Therefore, the local optimal control actions $\bar{u}_j^*$ of other agents $j \in \mathcal{D} \backslash \{i\}$ are respectively computed from their own subsystems $\bar{\mathcal{N}}_j$, which may contradict the joint optimal control $\bar{u}_i$ solved in subsystem $\bar{\mathcal{N}}_i$ and result in a sub-optimal solution. Similar conflicts widely exist in the distributed control and optimization problems subject to limited communication and partial observation, and some serious and heuristic studies on the global- and sub-optimality of distributed subsystems have been conducted in~\cite{Johari_MOR_2004, Nedic_TAC_2010, Frank_2013, Voulgaris_CDC_2017}. We will not dive into those technical details in this paper, as we believe that the executability of a sub-optimal plan with moderate communication and computational complexities should outweigh the performance gain of a global-optimal but computationally intractable plan in practice. In this regard, the distributed framework built upon factorial subsystems, which captures the correlations between neighboring agents while ignoring the further connections outside subsystems, provides a trade-off alternative between the optimality and complexity and is analogous to the structured prediction framework in supervised learning.
\end{remark}

	\section{Distributed Linearly-Solvable Optimal Control}\label{sec3}
	Subject to the multi-agent LSOC problems formulated in~\hyperref[sec2]{Section~2}, the linearization methods and distributed algorithms for solving the joint discrete-time Bellman equation~\eqref{CompBellman} and joint continuous-time optimality equation~\eqref{ContBellman} are discussed in this section.
	
	\subsection{Discrete-Time Systems}\label{Sec3_1}
	We first consider the discrete-time MAS with dynamics~\eqref{eq2} and immediate cost function~\eqref{eq4}, which give the joint Bellman equation~\eqref{CompBellman} in subsystem $\mathcal{\bar{N}}_i$
	\begin{equation*}
		V_i(\bar{x}_i) = \min_{\bar u_i} \left\{ c_i(\bar{x}_i, \bar{u}_i) + \mathbb{E}_{\bar{x}'_i \sim \bar{u}_i(\cdot | \bar{x}_i)}[V_i(\bar{x}'_i)]  \right\}. 
	\end{equation*}
	In order to compute the value function $V_i(\bar{x}_i)$ and optimal control action $\bar{u}_i^*( \cdot  | \bar{x}_i)$ from equation~\eqref{CompBellman}, an exponential or Cole-Hopf transformation is employed to linearize~\eqref{CompBellman} into a system of linear equations, which can be cast into a decentralized programming and solved in parallel. Local optimal control distribution $u_i^*( \cdot  | \bar{x}_i)$ that depends on the local observation of agent $i$ is then derived by marginalizing the joint control distribution $\bar{u}_i^*( \cdot  | \bar{x}_i)$. 
	
	\subsubsection*{A. Linearization of Joint Bellman Equation}
	Motivated by the exponential transformation employed in~\cite{Todorov_PNAS_2009} for single-agent system, we define the desirability function $Z_i(\bar{x}_i)$ for joint state $\bar{x}_i \in \mathcal{\bar I}_i$ in subsystem $\mathcal{\bar{N}}_i$ as
	\begin{equation}\label{ExpTrans}
		Z_i(\bar{x}_i) = \exp[-V_i(\bar{x}_i)], 
	\end{equation} 
	which implies that the desirability function $Z_i(\bar{x}_i)$ is negatively correlated with the value function $V_i(\bar{x}_i)$, and the value function can also be written conversely as a logarithm of the desirability function, $V_i(\bar{x}_i) = \log 1 / Z_i(\bar{x}_i)$. For boundary states $\bar{x}_i \in \mathcal{\bar B}_i$, the desirability functions are defined as $Z_i(\bar{x}_i) = \exp[-\phi_i(\bar{x}_i)]$. Based on the transformation~\eqref{ExpTrans}, a linearized joint Bellman equation~\eqref{CompBellman} along with the joint optimal control distribution is presented in \hyperref[thm1]{Theorem~1}.

	\setcounter{theorem}{0}
	\begin{theorem}\label{thm1}
		With exponential transformation~\eqref{ExpTrans}, the joint Bellman equation~\eqref{CompBellman} for subsystem $\mathcal{\bar{N}}_i$ is equivalent to the following linear equation with respect to the desirability function	
		\begin{equation}\label{eq_prop1}
			Z_i(\bar{x}_i) =  \exp(-q_i(\bar{x}_i)) \cdot \sum_{\bar{x}'_i}\bar{p}_i(\bar{x}'_i|\bar{x}_i)Z_i(\bar{x}'_i),
		\end{equation}
		where $q_i(\bar{x}_i)$ is the state-related cost defined in~\eqref{cont_cost}, and $\bar{p}_i(\bar{x}_i'|\bar{x}_i)$ is the transition probability of passive dynamics in~\eqref{eq1}. The joint optimal control action $\bar{u}_i^*( \cdot | \bar{x}_i)$ solving~\eqref{CompBellman} satisfies
		\begin{equation}\label{OptimalControl}
			\bar{u}_i^*( \cdot | \bar{x}_i) = \frac{\bar{p}_i(  \cdot   | \bar{x}_i)Z_i(\cdot)}{\sum_{\bar{x}'_i}\bar{p}_i(\bar{x}'_i|\bar{x}_i)Z_i(\bar{x}'_i)},
		\end{equation}
		where $\bar{u}_i^*(\bar{x}_i' | \bar{x}_i)$ is the transition probability from $\bar{x}_i$ to $\bar{x}_i'$ in controlled dynamics~\eqref{eq2}.
	\end{theorem}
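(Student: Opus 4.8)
The plan is to follow the Cole--Hopf linearization of Todorov, now carried out for the joint subsystem $\bar{\mathcal{N}}_i$ rather than a single agent. First I would substitute the joint cost function~\eqref{eq4} into the joint Bellman equation~\eqref{CompBellman} and write both the KL term and the expectation explicitly as sums over the posterior joint state $\bar{x}'_i$, giving
\begin{equation*}
V_i(\bar{x}_i) = q_i(\bar{x}_i) + \min_{\bar{u}_i} \sum_{\bar{x}'_i} \bar{u}_i(\bar{x}'_i | \bar{x}_i) \left[ \log \frac{\bar{u}_i(\bar{x}'_i | \bar{x}_i)}{\bar{p}_i(\bar{x}'_i | \bar{x}_i)} + V_i(\bar{x}'_i) \right].
\end{equation*}
Substituting the exponential transformation~\eqref{ExpTrans} in the form $V_i(\bar{x}'_i) = -\log Z_i(\bar{x}'_i)$ lets me fold the value function into the logarithm, collapsing the bracketed term into the single ratio $\log[\bar{u}_i(\bar{x}'_i | \bar{x}_i) / (\bar{p}_i(\bar{x}'_i | \bar{x}_i) Z_i(\bar{x}'_i))]$.

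The central step is then to recognize this sum as a KL divergence up to an additive constant. Introducing the normalizer $G_i(\bar{x}_i) := \sum_{\bar{x}'_i} \bar{p}_i(\bar{x}'_i | \bar{x}_i) Z_i(\bar{x}'_i)$, the quantity $\bar{p}_i(\bar{x}'_i | \bar{x}_i) Z_i(\bar{x}'_i) / G_i(\bar{x}_i)$ is a bona fide probability distribution over $\bar{x}'_i$, and using $\sum_{\bar{x}'_i} \bar{u}_i(\bar{x}'_i | \bar{x}_i) = 1$ I can rewrite the minimand as $\textrm{KL}(\bar{u}_i(\cdot | \bar{x}_i) \parallel \bar{p}_i(\cdot | \bar{x}_i) Z_i(\cdot) / G_i(\bar{x}_i)) - \log G_i(\bar{x}_i)$. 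The absolute-continuity requirement $u_i(\bar{x}'_i | \bar{x}_i) = 0$ whenever $p_i(\bar{x}'_i | \bar{x}_i) = 0$ guarantees this divergence is well-defined.

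I would then invoke the information (Gibbs) inequality: the KL divergence is nonnegative and vanishes exactly when its two arguments coincide. Hence the minimizing control distribution is $\bar{u}_i^*(\bar{x}'_i | \bar{x}_i) = \bar{p}_i(\bar{x}'_i | \bar{x}_i) Z_i(\bar{x}'_i) / G_i(\bar{x}_i)$, which is precisely~\eqref{OptimalControl}, and at this optimum only the $-\log G_i(\bar{x}_i)$ term survives, so $V_i(\bar{x}_i) = q_i(\bar{x}_i) - \log G_i(\bar{x}_i)$. Applying $Z_i = \exp(-V_i)$ once more and exponentiating yields $Z_i(\bar{x}_i) = \exp(-q_i(\bar{x}_i)) \cdot G_i(\bar{x}_i)$, which is~\eqref{eq_prop1}.

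The only genuinely delicate point, and the step I expect to require the most care, is verifying that $\bar{p}_i(\cdot | \bar{x}_i) Z_i(\cdot) / G_i(\bar{x}_i)$ is indeed a normalized distribution so that the KL-divergence reformulation is legitimate and that $\bar{u}_i^*$ is an admissible control; this hinges on $G_i(\bar{x}_i)$ being finite and strictly positive, which follows from $Z_i > 0$ on the interior and the boundary condition $Z_i(\bar{x}_i) = \exp(-\phi_i(\bar{x}_i))$ terminating the recursion on $\bar{\mathcal{B}}_i$. Everything else is algebraic bookkeeping; the product structure of $\bar{p}_i$ and $\bar{u}_i$ from~\eqref{eq1}--\eqref{eq2} is not needed for the linearization itself and only becomes relevant when marginalizing $\bar{u}_i^*$ to recover the local policy $u_i^*$.
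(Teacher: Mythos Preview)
Your proposal is correct and follows essentially the same argument as the paper's proof: substitute the KL-form cost into the Bellman equation, use $V_i = -\log Z_i$ to collapse the two terms into a single log-ratio, introduce the normalizer (the paper calls it $\mathcal{W}_i$, you call it $G_i$) to rewrite the minimand as a KL divergence minus $\log G_i$, and then invoke nonnegativity of KL to read off both the optimal $\bar u_i^*$ and the linear recursion for $Z_i$. If anything, your remarks on absolute continuity and on $G_i>0$ being needed for the KL reformulation to be legitimate are more careful than the paper's own proof, which simply asserts that $\bar p_i Z_i/\mathcal{W}_i$ is a well-defined distribution.
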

	\begin{proof}
		See \hyperref[appA]{Appendix~A} for the proof.
	\end{proof}

	Joint optimal control action $\bar{u}^*_i(\bar{x}_i' | \bar{x}_i)$ in~\eqref{OptimalControl} only relies on the joint state $\bar{x}_i$ of subsystem $\mathcal{\bar{N}}_i$, \textit{i.e.} local observation of agent $i$. To execute this control action~\eqref{OptimalControl}, we still need to figure out the values of desirability functions or value functions, which can be solved from~\eqref{eq_prop1}. A conventional approach for solving~\eqref{eq_prop1}, which was adopted in~\cite{Todorov_NIPS_2007, Todorov_NIPS_2009, Todorov_PNAS_2009} for single-agent LSMDP, is to rewrite~\eqref{eq_prop1} as a recursive formula and approximate the solution by iterations. We can approximate the desirability functions of interior states $\bar{x}_i \in \mathcal{\bar{I}}_i$ by recursively executing the following update law
	\begin{equation}\label{Z_update}
		Z_{\mathcal{I}} =  \Theta Z_{\mathcal{I}} + \Omega Z_\mathcal{B},
	\end{equation} 
	where $Z_\mathcal{I}$ and $Z_\mathcal{B}$ are respectively the desirability vectors of interior states and boundary states; the diagonal matrix $\Theta = \textrm{diag}\{\exp(-q_{\mathcal{I}})\} \cdot  P_{\mathcal{II}}$ with $q_{\mathcal{I}}$ denoting the state-related cost of interior states, $P_{\mathcal{II}} = [p_{mn}]$ denoting the transition probability matrix between interior states, and $p_{mn} = \bar{p}_i(\bar{x}_i' = \bar{s}^i_n \ | \ \bar{x}_i = \bar{s}^i_m)$ for $\bar{s}^i_n, \bar{s}^i_m \in \mathcal{\bar I}_i$; the matrix $\Omega = \textrm{diag}\{ \exp(-q_{\mathcal{I}}) \} \cdot P_{\mathcal{IB}}$ with $P_{\mathcal{IB}} = [p_{mn}]$ denoting the transition probability matrix from interior states to boundary states, and $p_{mn} = \bar{p}_i(\bar{x}_i' = \bar{s}^i_n \ | \ \bar{x}_i = \bar{s}^i_m)$ for $\bar{s}^i_m \in \mathcal{\bar I}_i$ and $\bar{s}^i_n \in \mathcal{\bar{B}}_i$. Assigning an initial value to the desirability vector $Z_{\mathcal{I}}$, the recursive formula~\eqref{Z_update} is guaranteed to converge to a unique solution, since the spectral radius of matrix $\Theta$ is less than $1$. More detailed convergence analysis on this iterative solver of LSMDPs has been given in~\cite{Todorov_NIPS_2007}. However, this centralized solver is inefficient when dealing with the MAS problems with high-dimensional state space, which requires the development of a distributed solver that exploits the resource of network and expedites the computation.

	\subsubsection*{B. Distributed Planning Algorithm} 
	While most of the distributed SOC algorithms are executed by local agents in a decentralized approach, a great number of these algorithms still demand a centralized solver in planning phase, which becomes a bottleneck for their implementations when the amount of agents scales up~\cite{Amato_CDC_2013}. For a fully connected MAS with $N$ agents, when each agent has $|\mathcal{I}|$ interior states, the dimension of the vector $Z_\mathcal{I}$ in \eqref{Z_update} is $|\mathcal{I}|^N$, and as the number of agents $N$ grows up, it will become more intractable for a central computation unit to store all the data and execute all the computation required by~\eqref{Z_update} due to the \textit{curse of dimensionality}. Although the subsystem-based distributed framework can alleviate this problem by demanding a less complex network and making the dimension and computational complexity only related to the sizes of factorial subsystems, it is still preferable to utilize the resources of MAS by distributing the data and computational task of~\eqref{Z_update} to each local agent in $\bar{\mathcal{N}}_i$, instead of relying on a central planning agent. Hence, we rewrite the linear equation~\eqref{Z_update} in the following form
	\begin{equation}\label{eq22}
		(I - \Theta) Z_{\mathcal{I}} = \Omega Z_{\mathcal{B}},
	\end{equation}
	and formulate~\eqref{eq22} into a parallel programming problem. In order to solve the desirability vector $Z_{\mathcal{I}}$ from~\eqref{eq22} via a distributed approach, each agent in subsystem $\mathcal{\bar{N}}_i$ only needs to know (store) a subset (rows) of the partitioned matrix $\left[ I - \Theta, \hspace{5pt} \Omega Z_\mathcal{B} \right]$. Subject to the equality constraints laid by its portion of coefficients, agent $j \in \mathcal{\bar{N}}_i$ first initializes its own version of solution $Z_{\mathcal{I}, j}^{(0)}$ to~\eqref{eq22}. Every agent has access to the solutions of its neighboring agents, and the central agent $i$ can access the solutions of agents in $\mathcal{N}_i$. A consensus of $Z_{\mathcal{I}}$ in \eqref{eq22} can be reached among $Z_{\mathcal{I}, j\in\mathcal{\bar{N}}_i}$, when implementing the following synchronous distributed algorithm on each computational agent in $\mathcal{\bar{N}}_i$~\cite{Mou_TAC_2015}:
	\begin{equation}\label{dist_alge}
		Z^{(n+1)}_{\mathcal{I}, j} = Z^{(n)}_{\mathcal{I}, j} - P_j\left( Z^{(n)}_{\mathcal{I}, j} - \dfrac{1}{d_j} \sum_{k \in \mathcal{N}_j \cap \mathcal{\bar{N}}_i } Z^{(n)}_{\mathcal{I}, k} \right),
	\end{equation}
	where $P_j$ is the orthogonal projection matrix on the kernel of $[I- \Theta]_j$, rows of the matrix $I-\Theta$ stored in agent $j$; $d_j = |\mathcal{N}_j \cap \mathcal{\bar{N}}_i|$ is the amount of neighboring agents of agent $j$ in subsystem $\mathcal{\bar{N}}_i$; and $n$ is the index of update iteration. An asynchronous distributed algorithm~\cite{Liu_TAC_2018}, which does not require agents to concurrently update their solution $Z_{\mathcal{I}, i}$, can also be invoked to solve~\eqref{eq22}. Under these distributed algorithms, different versions of solution $Z_{\mathcal{I}, j}$ are exchanged across the network, and the requirements of data storage and computation can be allocated evenly to each agent in network, which improve the overall efficiency of algorithms. Meanwhile, these fully parallelized planning algorithms can be optimized and boosted further by naturally incorporating some parallel computing and Internet of Things (IoT) techniques, such as edge computing~\cite{Shi_IoT_2016}. Nonetheless, for MASs with massive population, \textit{i.e.} $N \rightarrow \infty$, most of the control schemes and algorithms introduced in this paper will become fruitless, and we may resort to the mean-field theory~\cite{Bensoussan_2013, Bakshi_arxiv_2018, Bakshi_TCNS_2019}, which describes MASs by probability density model rather than connected graph model.

	\subsubsection*{C. Local Control Action}	
	After we figure out the desirability functions $Z_i(\bar{x}_i)$ for all the joint states $\bar{x}_i \in \bar{\mathcal{I}}_i \cup \bar{\mathcal{B}}_i$ in subsystem $\bar{\mathcal{N}}_i$, the local optimal control distribution $u_i^*(x'_i|\bar{x}_i)$ for central agent $i$ is derived by calculating the marginal distribution of $\bar{u}^*_i(\bar{x}'_i|\bar{x}_i)$
	\begin{equation*}
		u_i^*(x'_i|\bar{x}_i) = \sum_{j \in \mathcal{N}_i} \bar{u}_i^{*}(x'_i, x'_{j\in\mathcal{N}_i} | \bar{x}_i),
	\end{equation*}
	where both the joint distribution $\bar{u}^*_i(\bar{x}'_i|\bar{x}_i)$ and local distribution $u_i^*(x'_i|\bar{x}_i)$ rely on the local observation of central agent $i$. By sampling control action from marginal distribution $u_i^*(x'_i|\bar{x}_i)$, agent $i$ behaves optimally to minimize the joint cost-to-go function~\eqref{J_CTG_Discrete_Time} defined in subsystem $\bar{\mathcal{N}}_i$, and the local optimal control distribution $u_k^*(x'_k|\bar{x}_k)$ of other agents $k \in \mathcal{D} \backslash \{i\}$ in network $\mathcal{G}$ can be derived by repeating the preceding procedures in subsystems $\bar{\mathcal{N}}_{k}$. The procedures of multi-agent LSMDP algorithm are summarized as \hyperref[alg1]{Algorithm~1} in~\hyperref[appE]{Appendix~E}.

	\subsection{Continuous-Time Systems}\label{Sec3_2}
	We now consider the LSOC for continuous-time MASs subject to joint dynamics~\eqref{eq13} and joint immediate cost function~\eqref{cont_cost}, which can be formulated into the joint optimality equation~\eqref{ContBellman} as follows 
	\begin{equation*}
		V_i(\bar{x}_i, t) = \min_{\bar{u}_i} \mathbb{E}^{\bar{u}_i}_{\bar{x}_i^t, t} \left[ \phi_i(\bar{x}_{i}^{t_f}) + \int_{t}^{t_f} c_i(\bar{x}_i(\tau), \bar{u}_i(\tau)) \ d\tau  \right].
	\end{equation*}
	In order to solve~\eqref{ContBellman} and derive the local optimal control action $u^*_i(\bar{x}_i, t)$ for agent $i$, we first cast the joint optimality equation~\eqref{ContBellman} into a joint stochastic HJB equation that gives an analytic form for joint optimal control action $\bar{u}^*_i(\bar{x}_i, t)$. To solve for the value function, by resorting to the Cole-Hopf transformation, the stochastic HJB equation is linearized into a  partial differential equation (PDE) with respect to desirability function. Feynman-Kac formula is then invoked to formulate the solution of the linearized PDE and joint optimal control action as the path integral formulae forward in time, which are later approximated respectively by a distributed Monte Carlo (MC) sampling method and a sample-efficient distributed REPS algorithm.

	\subsubsection*{A. Linearization of Joint Optimality Equation}
	Similar to the transformation~\eqref{ExpTrans} for discrete-time systems, we adopt the following Cole-Hopf or exponential transformation in continuous-time systems
	\begin{equation}\label{ContTrans}
		Z(\bar{x}_i, t) = \exp[-V_i(\bar{x}_i, t) / \lambda_i],
	\end{equation}
	where  $\lambda_i \in \mathbb{R}$ is a  scalar, and $Z(\bar{x}_i, t)$ is the desirability function of joint state $\bar{x}_i$ at time $t$. Conversely, we also have $V_i(\bar{x}_i, t) = \lambda_i  \log Z(\bar{x}_i, t)$ from~\eqref{ContTrans}. In the following theorem, we convert the optimality equation~\eqref{ContBellman} into a joint stochastic HJB equation, which reveals an analytic form of joint optimal control action $\bar{u}^*_i(\bar{x}_i, t)$, and linearize the HJB equation into a linear PDE that has a closed-form solution for the desirability function.
	
	\begin{theorem}\label{thm2}
		Subject to the joint dynamics~\eqref{eq13} and immediate cost function~\eqref{cont_cost}, the joint optimality equation~\eqref{ContBellman} in subsystem $\mathcal{\bar{N}}_i$ is equivalent to the joint stochastic HJB equation
		\begin{align}\label{sto_HJB}
			-\partial_t V_i(\bar{x}_i, t) =&  \min_{\bar{u}_i}  \mathbb{E}_{\bar{x}_i,t}^{\bar{u}_i} \bigg[  \sum_{j \in \bar{\mathcal{N}}_i} [f_j(x_j,t)  + B_j(x_j) u_j(\bar{x}_i, t)]^\top \cdot \nabla_{x_j} V_i(\bar{x}_i,t)  +  q_i(\bar{x}_i, t)   \\
			& + \frac{1}{2} \bar{u}_i(\bar{x}_i, t)^\top\bar{R}_i\bar{u}_i(\bar{x}_i, t)  + \frac{1}{2} \sum_{j \in \bar{\mathcal{N}}_i}  \textrm{tr} \left( B_j(x_j)\sigma_j \sigma_j^\top B_j(x_j)^\top \cdot \nabla_{x_jx_j}V_i(\bar{x}_i, t) \right) \bigg], \nonumber
		\end{align}
		with boundary condition $V_i(\bar{x}_i, t_f) = \phi_i(\bar{x}_i)$, and the optimum of~\eqref{sto_HJB} can be attained with the joint optimal control action
		\begin{equation}\label{OptimalAction}
			\bar{u}^*_i(\bar{x}_i, t) = -\bar{R}_i^{-1}\bar{B}_i(\bar{x}_i)^\top \nabla_{\bar{x}_i}V_i(\bar{x}_i, t).
		\end{equation}
		Subject to the transformation~\eqref{ContTrans}, control action~\eqref{OptimalAction} and condition $\bar{R}_i = (\bar{\sigma}_i\bar{\sigma}_i^\top  /  \lambda_i)^{-1}$, the joint stochastic HJB equation~\eqref{sto_HJB} can be linearized as 
		\begin{equation}\label{Z_Function}
			\partial_{t} Z_i(\bar{x}_i, t) = \bigg[\frac{q_i(\bar{x}_i, t)}{\lambda_i} - \sum_{j\in\bar{\mathcal{N}}_i} f_j(x_j, t)^\top  \nabla_{x_j} - \frac{1}{2}\sum_{j \in \bar{\mathcal{N}}_i} \textrm{tr}\left( B_j(x_j)\sigma_j \sigma_j^\top B_j(x_j)^\top  \nabla_{x_jx_j} \right)  \bigg]Z_i(\bar{x}_i,t)
		\end{equation}
		with boundary condition $Z_i(\bar{x}_i, t_f) = \exp[- \phi_i(\bar{x}_i) / \lambda_i]$, which has a solution
		\begin{equation}\label{Z_Solution}
			Z_i(\bar{x}_i, t) = \mathbb{E}_{\bar{x}_i,t}\left[ \exp\left( -\frac{1}{\lambda_i} \phi_i(\bar{y}^{t_f}_i) -\frac{1}{\lambda_i} \int_{t}^{t_f}q_i(\bar{y}_i, \tau) \ d\tau \right) \right],
		\end{equation}
		where the diffusion process $\bar{y}(t)$ satisfies the uncontrolled dynamics $d \bar{y}_i(\tau) = \bar{f}_i(\bar{y}_i,\tau) d\tau + \bar{B}_i(\bar{y}_i)   \bar{\sigma}_i  \cdot d\bar{w}_i(\tau)$ with initial condition $\bar{y}_i(t) = \bar{x}_i(t)$.
	\end{theorem}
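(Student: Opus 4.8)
The plan is to mirror the single-agent LSOC derivation underlying \eqref{singleHJB}, but carried out with the stacked (joint) quantities of subsystem $\bar{\mathcal{N}}_i$, and then to exploit the block-diagonal structure of $\bar{B}_i$ and $\bar{\sigma}_i$ to split every drift and trace term into a sum over $j \in \bar{\mathcal{N}}_i$. First I would invoke Bellman's principle of optimality on the joint optimality equation \eqref{ContBellman} over an infinitesimal horizon $[t, t+dt]$, writing $V_i(\bar{x}_i, t) = \min_{\bar{u}_i} \mathbb{E}[c_i(\bar{x}_i, \bar{u}_i)\,dt + V_i(\bar{x}_i + d\bar{x}_i, t+dt)]$. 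Expanding the second term with It\^o's lemma applied to the joint diffusion \eqref{eq13}, the drift contributes $\bar{f}_i^\top \nabla_{\bar{x}_i} V_i + \bar{u}_i^\top \bar{B}_i^\top \nabla_{\bar{x}_i} V_i$, the martingale part vanishes in expectation, and the quadratic variation $d\bar{x}_i\, d\bar{x}_i^\top \to \bar{B}_i \bar{\sigma}_i \bar{\sigma}_i^\top \bar{B}_i^\top\, dt$ produces the trace term. Dividing by $dt$ and using the block-diagonal forms of $\bar{B}_i$ and $\bar{\sigma}_i$ decomposes the drift and trace into the per-agent sums of \eqref{sto_HJB}, while the cost \eqref{cont_cost} supplies the $q_i$ and $\tfrac{1}{2}\bar{u}_i^\top \bar{R}_i \bar{u}_i$ terms.

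For the optimal action \eqref{OptimalAction}, I would collect the $\bar{u}_i$-dependent part of the bracket in \eqref{sto_HJB}, namely $\tfrac{1}{2}\bar{u}_i^\top \bar{R}_i \bar{u}_i + \bar{u}_i^\top \bar{B}_i^\top \nabla_{\bar{x}_i} V_i$, and set its gradient in $\bar{u}_i$ to zero, giving $\bar{R}_i \bar{u}_i + \bar{B}_i^\top \nabla_{\bar{x}_i} V_i = 0$. Positive-definiteness of $\bar{R}_i$ certifies that the stationary point is the minimizer and yields $\bar{u}_i^* = -\bar{R}_i^{-1}\bar{B}_i^\top \nabla_{\bar{x}_i} V_i$.

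The linearization is the crux. Substituting $\bar{u}_i^*$ back eliminates the minimization and, after combining the two control-generated terms, leaves a single quadratic-gradient term $-\tfrac{1}{2}(\nabla_{\bar{x}_i} V_i)^\top \bar{B}_i \bar{R}_i^{-1}\bar{B}_i^\top \nabla_{\bar{x}_i} V_i$. I would then apply the Cole-Hopf transformation \eqref{ContTrans}, expressing $\partial_t V_i$, $\nabla_{\bar{x}_i} V_i$, and $\nabla^2_{\bar{x}_i\bar{x}_i} V_i$ in terms of $Z_i$ and its derivatives. The decisive step is invoking the condition $\bar{R}_i = (\bar{\sigma}_i \bar{\sigma}_i^\top / \lambda_i)^{-1}$, so that $\bar{B}_i \bar{R}_i^{-1}\bar{B}_i^\top = \bar{B}_i \bar{\sigma}_i \bar{\sigma}_i^\top \bar{B}_i^\top / \lambda_i$: this makes the $\nabla Z_i\, \nabla Z_i^\top$ piece generated by the Hessian of $V_i$ cancel the quadratic-gradient term exactly, which is precisely what renders the equation linear. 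After multiplying through by $Z_i/\lambda_i$, what remains is the linear PDE \eqref{Z_Function}. I expect this cancellation — tracking the factors of $\lambda_i$ and verifying that the Hessian cross term matches the control term under the stacked matrices — to be the main obstacle; everything else is bookkeeping.

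Finally, recognizing \eqref{Z_Function} as a backward Kolmogorov equation with a potential, $\partial_t Z_i + \mathcal{L}_i Z_i - (q_i/\lambda_i) Z_i = 0$, where $\mathcal{L}_i = \bar{f}_i^\top \nabla_{\bar{x}_i} + \tfrac{1}{2}\mathrm{tr}(\bar{B}_i \bar{\sigma}_i \bar{\sigma}_i^\top \bar{B}_i^\top \nabla^2_{\bar{x}_i\bar{x}_i})$ is the generator of the uncontrolled diffusion $\bar{y}_i$, I would apply the Feynman-Kac formula with terminal data $Z_i(\bar{x}_i, t_f) = \exp[-\phi_i(\bar{x}_i)/\lambda_i]$. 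This represents $Z_i$ as the expectation \eqref{Z_Solution} over passive trajectories started at $\bar{y}_i(t) = \bar{x}_i(t)$, with the running potential $q_i/\lambda_i$ entering through the exponential discounting factor.
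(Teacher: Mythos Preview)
Your proposal is correct and follows essentially the same route as the paper's proof: dynamic programming over an infinitesimal horizon, It\^o's lemma on the joint diffusion to obtain \eqref{sto_HJB}, pointwise quadratic minimization in $\bar{u}_i$ to get \eqref{OptimalAction}, then the Cole-Hopf substitution with the condition $\bar{\sigma}_i\bar{\sigma}_i^\top = \lambda_i \bar{R}_i^{-1}$ to cancel the $\nabla Z_i\,\nabla Z_i^\top$ term against the control-generated quadratic, followed by Feynman--Kac for \eqref{Z_Solution}. The paper's Appendix~B carries out exactly these steps, with the only cosmetic difference that it writes out the per-agent form of the optimal control \eqref{LocOptCtrl} in the block-diagonal case before proceeding to the linearization.
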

	\begin{proof}
		See \hyperref[appB]{Appendix~B} for the proof.
	\end{proof}
	
	Based on the transformation~\eqref{ContTrans}, the gradient of value function satisfies $\nabla_{\bar{x}_i} V_i(\bar{x}_i, t) = - \lambda_i \cdot \nabla_{\bar{x}_i} Z_i(\bar{x}_i,t) / Z_i(\bar{x}_i,t)$. Hence, the joint optimal control action~\eqref{OptimalAction} can be rewritten as
	\begin{equation}\label{OptimalAction2}
		\bar{u}^*_i(\bar{x}_i, t) = \lambda_i \bar{R}_i^{-1}\bar{B}^\top_i(\bar{x}_i) \cdot \frac{\nabla_{\bar{x}_i} Z_i(\bar{x}_i,t)}{Z_i(\bar{x}_i,t)}  = \bar{\sigma}_i\bar{\sigma}_i^\top \bar{B}_i(\bar{x}_i)^\top \cdot \frac{\nabla_{\bar{x}_i} Z_i(\bar{x}_i,t)}{Z_i(\bar{x}_i,t)},
	\end{equation}
	where the second equality follows from the condition $\bar{R}_i = (\bar{\sigma}_i\bar{\sigma}_i^\top / \lambda_i)^{-1}$. Meanwhile, by virtue of Feynman-Kac formula, the stochastic HJB equation~\eqref{sto_HJB} that must be solved backward in time can now be solved by an expectation of diffusion process evolving forward in time. While a closed-form solution of the desirability function $Z_i(\bar{x}_i,t)$ is given in~\eqref{Z_Solution}, the expectation $\mathbb{E}_{\bar{x}_i, t}(\cdot)$ is defined on the sample space consisting of all possible uncontrolled trajectories initialized at $(\bar{x}_i, t)$, which makes this expectation intractable to compute. A common approach in statistical physics and quantum mechanics is to first formulate this expectation as a path integral~\cite{Moral_2004, Kappen_PRL_2005, Theodorou_Entropy_2015}, and then approximate the integral or optimal control action with various techniques, such as MC sampling~\cite{Kappen_PRL_2005} and policy improvement~\cite{Theodorou_JMLR_2010}. In the following subsections, we first formulate the desirability function~\eqref{Z_Function} and control action~\eqref{OptimalAction2} as path integrals and then approximate them with a distributed MC sampling algorithm and a sample-efficient distributed REPS algorithm, respectively.

	\subsubsection*{B. Path Integral Formulation}
	Before we show a path integral formula for the desirability function $Z_i(\bar{x}_i, t)$ in \eqref{Z_Solution}, some manipulations on joint dynamics~\eqref{eq13} are performed to avoid singularity problems \cite{Theodorou_JMLR_2010}. By rearranging the components of joint states $\bar{x}_i$ in \eqref{eq13}, the joint state vector $\bar{x}_i$ in subsystem $\mathcal{\bar{N}}_i$ can be partitioned as $[\bar{x}^\top_{i(n)}, \bar{x}^\top_{i(d)}]^\top$, where $\bar{x}_{i(n)} \in \mathbb{R}^{U\cdot |\bar{\mathcal{N}}_i|}$ and $\bar{x}_{i(d)} \in \bar{\mathbb{R}}^{D \cdot |\bar{\mathcal{N}}_i|}$ respectively indicate the joint non-directly actuated states and joint directly actuated states of subsystem $\mathcal{\bar{N}}_i$; $U$ and $D$ denote the dimensions of non-directly actuated states and directly actuated states for a single agent. Consequently, the joint passive dynamics term $\bar{f}_i(\bar{x}_i, t)$ and the joint control transition matrix $\bar{B}_i(\bar{x}_i)$ in \eqref{eq13} are partitioned as $[\bar{f}^{ \tiny\ \top}_{i(n)}, \bar{f}^{ \tiny\ \top}_{i(d)}]^\top$ and $[0, \bar{B}^\top_{i(d)}(\bar{x}_i)]^\top$, respectively. Hence, the joint dynamics~\eqref{eq13} can be rewritten in a partitioned vector form as follows
	\begin{equation}\label{Partitioned_dynamics}
		\left(\begin{matrix}
			d\bar{x}_{i(n)}\\
			d\bar{x}_{i(d)}
		\end{matrix}\right) = \left(\begin{matrix}
			\bar{f}_{i(n)}(\bar{x}_i,t)\\
			\bar{f}_{i(d)}(\bar{x}_i,t)
		\end{matrix}\right)dt + \left(\begin{matrix}
			0\\
			\bar{B}_{i(d)}(\bar{x}_i)
		\end{matrix}\right)\left[ \bar{u}_i(\bar{x}_i,t) dt + \bar{\sigma}_i d\bar{w}_i  \right].
	\end{equation}
	With the partitioned dynamics~\eqref{Partitioned_dynamics}, the path integral formulae for the desirability function~\eqref{Z_Solution} and joint optimal control action~\eqref{OptimalAction2} are given in~\hyperref[prop3]{Proposition~3}.
	
	\begin{proposition}\label{prop3} Partition the time interval from $t$ to $t_f$ into $K$ intervals of equal length $\varepsilon > 0$, $t = t_0 < t_1 < \cdots < t_K = t_f$, and let the trajectory variable $\bar{x}_i^{(k)} = [\bar{x}_{i(n)}^{(k)\top}, \bar{x}_{i(d)}^{(k)\top}]^\top$ denote the segments of joint uncontrolled trajectories on time interval $[t_{k-1}, t_k)$, governed by joint dynamics~\eqref{Partitioned_dynamics} with $\bar{u}_i(\bar{x}_i,t) = 0$ and initial condition $\bar{x}_i(t) = \bar{x}_i^{(0)}$. The desirability function~\eqref{Z_Solution} in subsystem $\bar{\mathcal{N}}_i$ can then be reformulated as a path integral
		\begin{equation}\label{Prop3E1}
			Z_i(\bar{x}_i,t) = \lim_{\varepsilon \downarrow 0}  \int   \exp\left( -\tilde{S}_i^{\varepsilon, \lambda_i}(\bar{x}_i^{(0)}, \bar{\ell}_i, t_0) - K D |\mathcal{\bar N}_i| / 2 \cdot \log (2\pi\lambda_i \varepsilon) \right)  d\bar{\ell}_i,
		\end{equation}		
		where the integral is over path variable $\bar \ell_i = (\bar{x}^{(1)}_i, \cdots, \bar{x}^{(K)}_i)$, \textit{i.e.} set of all uncontrolled trajectories initialized at $(\bar{x}_i, t)$, and the generalized path value 
		\begin{align}\label{Prop3E2}
			\tilde{S}_i^{\varepsilon, \lambda_i}(\bar{x}_i^{(0)}, \bar{\ell}_i, t_0) = \frac{\phi_i(\bar{x}^{(K)}_i)}{\lambda_i} + \frac{\varepsilon}{\lambda_i} & \sum_{k=0}^{K-1}q_i(\bar{x}^{(k)}_i, t_k)   + \frac{1}{2}\sum_{k=0}^{K-1} \log \det(H_i^{(k)}) \allowdisplaybreaks  \\
			&+ \frac{\varepsilon}{2\lambda_i}  \sum_{k=0}^{K-1} \left\|  \frac{ \bar{x}_{i(d)}^{(k+1)} - \bar{x}_{i(d)}^{(k)}}{\varepsilon} -  \bar{f}_{i(d)}(\bar{x}^{(k)}_i, t_k) \right\|^2_{\left(H_i^{(k)}\right)^{-1}}   \nonumber
		\end{align}	
		with $H_i^{(k)} = \bar{B}_{i(d)}(\bar{x}^{(k)}_i)  \bar{\sigma}_i\bar{\sigma}_i^\top \bar{B}_{i(d)}(\bar{x}^{(k)}_i)^\top = \lambda_i\bar{B}_{i(d)}(\bar{x}^{(k)}_i)\bar{R}_i^{-1} \bar{B}_{i(d)}(\bar{x}^{(k)}_i)^\top$. Hence, the joint optimal control action in subsystem $\bar{\mathcal{N}}_i$ can be reformulated as a path integral
		\begin{equation}\label{OptCtrlPath}
			\bar{u}^*_i(\bar{x}_i, t) = \lambda_i \bar{R}_i^{-1} \bar{B}_{i(d)}(\bar{x}_i)^\top \cdot \lim_{\varepsilon \downarrow 0} \int  \tilde{p}^*_i(\bar{\ell}_i | \bar{x}_i^{(0)}, t_0) \cdot \tilde{u}_i(\bar{x}_i^{(0)}, \bar{\ell}_i, t_0) \ d\bar{\ell}_i,
		\end{equation}			
		where 
		\begin{equation}\label{OptPathDist}
			\tilde{p}^*_i(\bar{\ell}_i | \bar{x}_i^{(0)}, t_0) = \frac{\exp( -\tilde{S}_i^{\varepsilon, \lambda_i}(\bar{x}_i^{(0)}, \bar{\ell}_i, t_0) )}{\int \exp( -\tilde{S}_i^{\varepsilon, \lambda_i}(\bar{x}_i^{(0)}, \bar{\ell}_i, t_0) ) \ d\bar{\ell}_i }
		\end{equation}
		is the optimal path distribution, and
		\begin{equation}\label{inictrl}
			\tilde{u}_i(\bar{x}_i^{(0)}, \bar{\ell}_i, t_0)  = -\frac{\varepsilon}{\lambda_i}\nabla_{\bar{x}^{(0)}_{i(d)}}q_i(\bar{x}^{(0)}_i, t_0) + \left(H_i^{(0)}\right)^{-1} \left(\frac{\bar{x}_{i(d)}^{(1)} - \bar{x}_{i(d)}^{(0)}}{\varepsilon} - \bar{f}_{i(d)}(\bar{x}_i^{(0)}, t_0)  \right)
		\end{equation}
		is the initial control variable.
	\end{proposition}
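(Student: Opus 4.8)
The plan is to derive the path integral representation of the closed-form desirability function $Z_i(\bar{x}_i,t)$ in \eqref{Z_Solution} by discretizing the uncontrolled diffusion and writing the expectation as a limit of finite-dimensional integrals over sampled trajectories. First I would partition $[t,t_f]$ into $K$ subintervals of width $\varepsilon$ and approximate the uncontrolled joint dynamics $d\bar{y}_i = \bar{f}_i(\bar{y}_i,\tau)\,d\tau + \bar{B}_i(\bar{y}_i)\bar{\sigma}_i\,d\bar{w}_i$ by its Euler–Maruyama discretization. The key observation, exploiting the partitioned form \eqref{Partitioned_dynamics}, is that only the directly actuated components $\bar{x}_{i(d)}$ carry noise; the non-directly actuated components $\bar{x}_{i(n)}$ evolve deterministically given the trajectory. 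Hence the transition density factorizes, and each one-step transition of $\bar{x}_{i(d)}$ is Gaussian with mean $\bar{x}_{i(d)}^{(k)} + \varepsilon\,\bar{f}_{i(d)}(\bar{x}_i^{(k)},t_k)$ and covariance $\varepsilon\, H_i^{(k)}$, where $H_i^{(k)} = \bar{B}_{i(d)}\bar{\sigma}_i\bar{\sigma}_i^\top \bar{B}_{i(d)}^\top$. Writing out the product of these Gaussian kernels produces a normalization constant $(2\pi\lambda_i\varepsilon)^{-KD|\bar{\mathcal{N}}_i|/2}$ (using $\bar{R}_i = (\bar{\sigma}_i\bar{\sigma}_i^\top/\lambda_i)^{-1}$ to express $H_i^{(k)} = \lambda_i \bar{B}_{i(d)}\bar{R}_i^{-1}\bar{B}_{i(d)}^\top$) together with the $\tfrac{1}{2}\log\det(H_i^{(k)})$ terms and the quadratic residual in $\eqref{Prop3E2}$.

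Next I would collect the exponents. The terminal cost $\phi_i(\bar{x}^{(K)}_i)$ and the Riemann sum $\varepsilon\sum_k q_i(\bar{x}^{(k)}_i,t_k)$ approximating $\int_t^{t_f} q_i$ come directly from the integrand of \eqref{Z_Solution}, each divided by $\lambda_i$. Combining these with the log-determinant and quadratic terms arising from the Gaussian transition densities yields exactly the generalized path value $\tilde{S}_i^{\varepsilon,\lambda_i}$ of \eqref{Prop3E2}, and taking $\varepsilon\downarrow 0$ recovers \eqref{Prop3E1}. This establishes the desirability function as a path integral over $\bar{\ell}_i = (\bar{x}_i^{(1)},\dots,\bar{x}_i^{(K)})$.

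For the control action, I would start from the gradient form \eqref{OptimalAction2}, $\bar{u}^*_i = \bar{\sigma}_i\bar{\sigma}_i^\top\bar{B}_i^\top \nabla_{\bar{x}_i}Z_i / Z_i$, and differentiate the path integral \eqref{Prop3E1} with respect to the initial directly actuated state $\bar{x}_{i(d)}^{(0)}$. Since $\bar{x}_i^{(0)}$ enters $\tilde{S}_i^{\varepsilon,\lambda_i}$ only through the $k=0$ term—namely through $q_i(\bar{x}_i^{(0)},t_0)$ and the first quadratic residual—the derivative $\nabla_{\bar{x}_{i(d)}^{(0)}}\tilde{S}_i^{\varepsilon,\lambda_i}$ produces precisely the initial control variable $\tilde{u}_i$ of \eqref{inictrl} (up to sign, which cancels against the $-\lambda_i$ in the logarithmic derivative). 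Dividing by $Z_i$ converts $\exp(-\tilde{S}_i^{\varepsilon,\lambda_i})$ into the normalized optimal path distribution $\tilde{p}^*_i$ of \eqref{OptPathDist}, giving the representation \eqref{OptCtrlPath}.

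The main obstacle I expect is the careful bookkeeping in differentiating the path integral: the initial state $\bar{x}_i^{(0)}$ appears both in the $k=0$ integrand term and, through the measure, in the drift of the first transition, so I must verify that the only surviving contribution after normalization is the $k=0$ residual, and that the $\tfrac{1}{2}\log\det(H_i^{(0)})$ term either does not depend on $\bar{x}_{i(d)}^{(0)}$ or contributes negligibly as $\varepsilon\downarrow0$. A secondary subtlety is justifying the exchange of $\nabla_{\bar{x}_i}$ with the limit $\varepsilon\downarrow0$ and the integral; I would handle this formally in the spirit of the single-agent path integral derivations in \cite{Theodorou_JMLR_2010, Kappen_PRL_2005}, since the partitioned structure makes the multi-agent case a direct block-diagonal generalization.
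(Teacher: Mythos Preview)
Your overall strategy matches the paper's proof in Appendix~C essentially step for step: Euler--Maruyama discretization of the uncontrolled diffusion, Gaussian transition densities on the directly actuated block with covariance $\varepsilon H_i^{(k)}$, collection of exponents into the generalized path value, and differentiation of the path integral with respect to $\bar{x}_{i(d)}^{(0)}$ to obtain the control formula. So the architecture is correct.

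The one place where your proposal is not merely incomplete but points in the wrong direction is the handling of the $\tfrac{1}{2}\log\det H_i^{(0)}$ term. You conjecture that it either does not depend on $\bar{x}_{i(d)}^{(0)}$ or is negligible as $\varepsilon\downarrow 0$. Neither is true in general: when $\bar{B}_{i(d)}$ is state-dependent, $\nabla_{\bar{x}_{i(d)}^{(0)}}\log\det H_i^{(0)}$ is an $O(1)$ quantity. The paper's resolution is an \emph{exact cancellation}: differentiating the quadratic residual $\|\alpha_i^{(0)}\|^2_{(H_i^{(0)})^{-1}}$ with respect to $\bar{x}_{i(d)}^{(0)}$ produces, in addition to the desired $(H_i^{(0)})^{-1}\alpha_i^{(0)}$ term, two extra contributions---one through $\nabla_{\bar{x}_{i(d)}^{(0)}}\bar{f}_{i(d)}$ and one through $\nabla_{\bar{x}_{i(d)}^{(0)}}(H_i^{(0)})^{-1}$. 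After integrating against the optimal path distribution $\tilde{p}_i^*$, the first extra term vanishes (it is odd in $\alpha_i^{(0)}$), and the second extra term cancels exactly with $\tfrac{1}{2}\nabla_{\bar{x}_{i(d)}^{(0)}}\log\det H_i^{(0)}$ via the Jacobi identity $\nabla\log\det H = \mathrm{tr}(H^{-1}\nabla H)$. These are the identities the paper records (and which indeed appear in \cite{Theodorou_JMLR_2010}), so your pointer to that reference is apt, but you should replace the ``negligible'' guess with this cancellation argument.
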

	\begin{proof}
		See \hyperref[appC]{Appendix~C} for the proof.
	\end{proof}
	
	While \hyperref[prop3]{Proposition~3} gives the path integral formulae for desirability (value) function $Z_i(\bar{x}_i,t) $ and joint optimal control action $\bar{u}^*_i(\bar{x}_i, t)$, one of the challenges when implementing these formulae is the approximation of optimal path distribution~\eqref{OptPathDist} and integrals in~\eqref{Prop3E1} and~\eqref{OptCtrlPath}, since these integrals are defined on the set of all possible uncontrolled trajectories initialized at $(\bar{x}_i, t)$ or $(\bar{x}_i^{(0)}, t_0)$, which are intractable to exhaust and might become computationally expensive to sample as the state dimension and the amount of agents scale up. An intuitive solution is to formulate this problem as a statistical inference problem and predict the optimal path distribution $\tilde{p}^*_i(\bar{\ell}_i | \bar{x}_i^{(0)}, t_0)$ via various inference techniques, such as the easiest MC sampling~\cite{Kappen_PRL_2005} or Metropolis-Hastings sampling~\cite{Broek_JAIR_2008}. Provided a batch of uncontrolled trajectories $\mathcal{Y}_i = \{ (\bar{x}_i^{(0)}, \bar{\ell}_i^{[y]})  \}_{y = 1, \cdots, Y}$ and a fixed $\varepsilon$, the Monte Carlo estimation of optimal path distribution in~\eqref{OptPathDist} is
	\begin{equation}\label{MC_Estimator}
		\tilde{p}^*_i(\bar{\ell}_i^{[y]} | \bar{x}_i^{(0)}, t_0) \approx \frac{\exp( -\tilde{S}_i^{\varepsilon, \lambda_i}(\bar{x}_i^{(0)}, \bar{\ell}_i^{[y]}, t_0) )}{\sum_{y=1}^{Y} \exp( -\tilde{S}_i^{\varepsilon, \lambda_i}(\bar{x}_i^{(0)}, \bar{\ell}^{[y]}_i, t_0) ) },
	\end{equation}
	where $\tilde{S}_i^{\varepsilon, \lambda_i}(\bar{x}_i^{(0)}, \bar{\ell}_i^{[y]}, t_0)$ denotes the generalized path value~\eqref{Prop3E2} of trajectory $(\bar{x}_i^{(0)}, \bar{\ell}_i^{[y]})$, and the estimation of joint optimal control action in~\eqref{OptCtrlPath} is
	\begin{equation}\label{MC_Estimato2}
		\bar{u}^*_i(\bar{x}_i, t) =  \lambda_i \bar{R}_i^{-1}  \bar{B}_{i(d)}(x_i)^\top \cdot \sum_{y=1}^{Y} \tilde{p}^*_i(\bar{\ell}^{[y]}_i | \bar{x}_i^{(0)}, t_0) \cdot \tilde{u}_i(\bar{x}_i^{(0)}, \bar{\ell}^{[y]}_i, t_0),
	\end{equation}		
	where $\tilde{u}_i(\bar{x}_i^{(0)}, \bar{\ell}^{[y]}_i, t_0)$ is the initial control of sample trajectory $(\bar{x}_i^{(0)}, \bar{\ell}_i^{[y]})$. To expedite the process of sampling, the sampling tasks of $\mathcal{Y}_{i\in \mathcal{D}}$ can be distributed to different agents in the network, \textit{i.e.} each agent $j$ in MAS $\mathcal{G}$ only samples its local uncontrolled trajectories $\{(x_j^{(0)}, \ell_j^{[y]})\}_{y = 1, \cdots, Y}$, and the optimal path distributions of each subsystem $\mathcal{\bar{N}}_i$ are approximated via~\eqref{MC_Estimator} after the central agent $i$ restores $\mathcal{Y}_i$ by collecting the samples from its neighbors $j \in \mathcal{\bar{N}}_i$. Meanwhile, the local trajectories $\{(x_j^{(0)}, \ell_j^{[y]})\}_{y = 1, \cdots, Y}$ of agent $j$ not only can be utilized by the local control algorithm of agent $j$ in subsystem $\bar{\mathcal{N}}_j$, but also can be used by the local control algorithms of the neighboring agents $k \in \mathcal{N}_j$ in subsystem $\bar{\mathcal{N}}_k$, and the parallel computation of GPUs can further facilitate the sampling processes by allowing each agent to concurrently generate multiple sample trajectories~\cite{Williams_JGCD_2017}. The continuous-time multi-agent LSOC algorithm based on estimator~\eqref{MC_Estimator} is summarized as \hyperref[alg2]{Algorithm~2} in~\hyperref[appE]{Appendix~E}.
	
	However, since the aforementioned auxiliary techniques are still proposed on the basis of pure sampling estimator~\eqref{MC_Estimator}, the total amount of samples required for a good approximation is not reduced, which will hinder the implementations of \hyperref[prop3]{Proposition~3} and estimator~\eqref{MC_Estimator} when the sample trajectories are expensive to generate. Various investigations have been conducted to mitigate this issue. For the sample-efficient approximation of desirability (value) function $Z_i(\bar{x}_i,t)$, parameterized desirability (value) function, Laplace approximation~\cite{Kappen_PRL_2005}, approximation based on mean-field assumption~\cite{Broek_JAIR_2008}, and a forward-backward framework based on Gaussian process can be used~\cite{Pan_NIPS_2015}. Meanwhile, it is more common in practice to only efficiently predict the optimal path distribution $\tilde{p}^*_i(\bar{\ell}_i | \bar{x}_i^{(0)}, t_0)$ and optimal control action $\bar{u}^*_i(\bar{x}_i, t)$, while omitting the value of desirability function. Numerous statistical inference algorithms, such as Bayesian inference and variational inference~\cite{Boutselis_arxiv_2018}, can be adopted to predict the optimal path distribution, and diversified policy improvement and policy search algorithms, such as PI$^2$~\cite{Theodorou_JMLR_2010} and REPS~\cite{Gomez_KDD_2014}, can be employed to update the parameterized optimal control policy.  Hence, in the following subsection, a sample-efficient distributed REPS algorithm is introduced to update the parameterized joint control policy for each subsystem and MAS.

	\subsubsection*{C. Relative Entropy Policy Search in MAS}
	Since the initial control $\tilde{u}_i(\bar{x}_i^{(0)}, \bar{\ell}_i, t_0)$ can be readily calculated from sample trajectories and network communication, we only need to determine the optimal path distribution $\tilde{p}^*_i(\bar{\ell}_i | \bar{x}_i^{(0)}, t_0)$ before evaluating the joint optimal control action from~\eqref{OptCtrlPath}. To approximate this distribution more efficiently, we extend the REPS algorithm~\cite{Peters_CAI_2010, Gomez_KDD_2014} for path integral control in MAS. REPS is a model-based algorithm with competitive convergence rate, whose objective is to search for a parametric policy $\pi^{(k)}_i(\bar{u}_i^{(k)} | \bar{x}_i^{(k)})$ that generates a path distribution $\tilde{p}^\pi_i(\bar{\ell}_i| \bar{x}_i^{(0)}, t_0)$ to approximate the optimal path distribution $\tilde{p}^*_i(\bar{\ell}_i | \bar{x}_i^{(0)}, t_0)$. Compared with reinforcement learning algorithms and model-free policy improvement or search algorithms, since REPS is built on the basis of model, the policy and trajectory generated from REPS can automatically satisfy the constraints of the model. Moreover, for the MAS problems, REPS algorithm allows to consider a more general scenario when the initial condition $\bar{x}_i(t) = \bar{x}^{(0)}_i$ is stochastic and satisfies a distribution $\mu_i(\bar{x}^{(0)}_i, t_0)$. REPS algorithm alternates between two steps: a) \textit{Learning step} that approximates the optimal path distribution from samples generated by the current or initial policy, \textit{i.e.} $\tilde{p}^\pi_i(\bar{\ell}_i| \bar{x}_i^{(0)}, t_0) \rightarrow \tilde{p}^*_i(\bar{\ell}_i | \bar{x}_i^{(0)}, t_0)$, and b) \textit{Updating step} that updates the parametric policy $\pi^{(k)}_i(\bar{u}_i^{(k)} | \bar{x}_i^{(k)})$ to reproduce the desired path distribution $\tilde{p}^\pi_i(\bar{\ell}_i| \bar{x}_i^{(0)}, t_0)$ generated in step a). The algorithm terminates when the policy and approximate path distribution converge. 
	
	During the learning step, we approximate the joint optimal distribution $\tilde{p}_i^*(\bar{x}^{(0)}_i, \bar{\ell}_i) = \tilde{p}_i^*(\bar{\ell}_i | \bar{x}_i^{(0)}, t_0) \cdot \mu_i(\bar{x}^{(0)}_i, t_0)$ by minimizing the relative entropy (KL-divergence) between an approximate distribution $\tilde{p}_i(\bar{x}^{(0)}_i, \bar{\ell}_i)$ and $\tilde{p}_i^*(\bar{x}^{(0)}_i, \bar{\ell}_i)$ subject to a few constraints and a batch of sample trajectories $\mathcal{Y}_i = \{ (\bar{x}_i^{(0)}, \allowbreak \bar{\ell}_i^{[y]})\}_{y = 1, \cdots, Y}$ generated by the current or initial policy related to old distribution $\tilde{q}_i(\bar{x}^{(0)}_i, \bar{\ell}_i)$ from prior iteration. Similar to the distributed MC sampling method, the computation task of sample trajectories $\mathcal{Y}_i$ can either be exclusively assigned to the central agent $i$ or distributed among the agents in subsystem $\mathcal{\bar N}_i$ and then collected by the central agent $i$. However, unlike the local uncontrolled trajectories $\{(x_i^{(0)}, \ell_i^{[y]})\}_{y = 1, \cdots, Y}$ in the MC sampling method, which can be interchangeably used by the control algorithms of all agents in subsystem $\bar{\mathcal{N}}_i$, since the trajectory sets $\mathcal{Y}_{i \in \mathcal{D}}$ of the REPS approach are generated subject to different policies $\pi^{(k)}_i(u^{(k)}_i | \bar{x}^{(k)}_i)$ for $i\in\mathcal{D}$, the sample data in $\mathcal{Y}_{i}$ can only be specifically used by the local REPS algorithm in subsystem $\bar{\mathcal{N}}_i$. We can then update the approximate path distribution $\tilde{p}_i(\bar{x}^{(0)}_i, \bar{\ell}_i)$ by solving the following optimization problem
	\begin{align}\label{LearningStep}
		\arg \max_{\tilde{p}_i(\bar{x}_i^{(0)}, \bar{\ell}_i)} & \int \tilde{p}_i(\bar{x}_i^{(0)}, \bar{\ell}_i) \left[   -\tilde{S}_i^{\varepsilon, \lambda_i}(\bar{x}_i^{(0)}, \bar{\ell}_i, t_0) - \log \tilde{p}_i(\bar{x}_i^{(0)}, \bar{\ell}_i)  \right] \ d\bar{x}_i^{(0)} d\bar{\ell}_i,  \allowdisplaybreaks \\
		\textrm{s.t.} & 	\int \tilde{p}_i(\bar{x}_i^{(0)}, \bar{\ell}_i) \log\frac{\tilde{p}_i(\bar{x}_i^{(0)}, \bar{\ell}_i)}{\tilde{q}_i(\bar{x}_i^{(0)}, \bar{\ell}_i)} \ d\bar{x}_i^{(0)} d\bar{\ell}_i  \leq \delta,  \nonumber \allowdisplaybreaks\\
		& 	\int \tilde{p}_i(\bar{x}_i^{(0)}, \bar{\ell}_i) \psi(\bar{x}_i^{(0)}) \ d\bar{x}_i^{(0)} d\bar{\ell}_i  = \hat{\psi}_i^{(0)},  \nonumber \allowdisplaybreaks \\
		& 	\int \tilde{p}_i(\bar{x}_i^{(0)}, \bar{\ell}_i) \ d\bar{x}_i^{(0)} d\bar{\ell}_i = 1,\nonumber
	\end{align}
	where $\delta > 0$ is a parameter confining the update rate of approximate path distribution; $\psi_i(\bar{x}_i^{(0)})$ is a state feature vector of the initial condition; and $\hat{\psi}^{(0)}_i$ is the expectation of state feature vector subject to initial distribution $\mu_i(\bar{x}^{(0)}_i)$. When the initial state $\bar{x}_i^{(0)}$ is deterministic, \eqref{LearningStep} degenerates to an optimization problem for path distribution $\tilde{p}_i(\bar{\ell}_i | \bar{x}_i^{(0)}, t_0)$, and the second constraint in~\eqref{LearningStep} can be neglected. The optimization problem~\eqref{LearningStep} can be solved analytically with the method of Lagrange multipliers, which gives
	\begin{equation}\label{distribution}
		\tilde{p}_i(\bar{x}_i^{(0)}, \bar{\ell}_i) = \exp\left( -\frac{1+\kappa + \eta}{1 + \kappa} \right) \cdot  \tilde{q}_i(\bar{x}_i^{(0)}, \bar{\ell}_i) ^{\frac{\kappa}{1 + \kappa}}  \cdot \exp\left( - \frac{\tilde{S}_i^{\varepsilon, \lambda_i}(\bar{x}_i^{(0)}, \bar{\ell}_i, t_0)  + \theta^\top \psi_{i}(\bar{x}_i^{(0)}) }{1 + \kappa} \right),
	\end{equation}
	where $\kappa$ and $\theta$ are the Lagrange multipliers that can be solved from the dual problem 
	\begin{equation}\label{DualProb}
		\arg \min_{\kappa, \theta} \ g(\kappa, \theta), \qquad \textrm{s.t. }\kappa > 0,
	\end{equation}
	with objective function
	\begin{align}\label{DualFun}
		g(\kappa, \theta) &= \kappa \delta + \theta^\top \hat\psi_i^{(0)}  +  (1 + \kappa) \cdot \log \int  \tilde{q}_i(\bar{x}_i^{(0)}, \bar{\ell}_i)^{\frac{\kappa}{1 + \kappa}} \\
		&\hspace{116pt} \times \exp\left( - \frac{\tilde{S}_i^{\varepsilon, \lambda_i}(\bar{x}_i^{(0)}, \bar{\ell}_i, t_0)  + \theta^\top \psi_{i}(\bar{x}_i^{(0)}) }{1 + \kappa} \right) \ d\bar{x}_i^{(0)} d\bar{\ell}_i; \nonumber
	\end{align}
	and the dual variable $\eta$ can then be determined by the constraint obtained by substituting~\eqref{DualProb} into the normalization (last) constraint in~\eqref{LearningStep}. The approximation of the dual function $g(\kappa, \theta)$ from sample trajectories $\mathcal{Y}_i$, calculation of the old (or initial) path distribution $\tilde{q}_i(\bar{\ell}_i, \bar{x}_i^{(0)})$ from the current (or initial) policy $\pi^{(k)}_i(\bar{u}_i^{(k)} | \bar{x}_i^{(k)})$, and a brief interpretation on the formulation of optimization problem~\eqref{LearningStep} are explained in \hyperref[appD]{Appendix~D}. 
	
	In the updating step of REPS algorithm, we update the policy $\pi_i^{(k)}$ such that the joint distribution $\tilde{p}_i^\pi(\bar{x}_i^{(k+1)} | \bar{x}_i^{(k)}) = \int p(\bar{x}_i^{(k+1)}|\bar{x}_i^{(k)}, \bar{u}_i^{(k)})  \cdot \pi_i^{(k)}(\bar{u}_i^{(k)} | \bar{x}_i^{(k)}) \ d\bar{u}^{(k)}_i $ generated by policy $\pi^{(k)}_i$ approximates the path distribution $\tilde{p}_i(\bar{\ell}_i | \bar{x}_i^{(0)}, t_0)$ generated in the learning step~\eqref{LearningStep} and ultimately converges to the optimal path distribution $\tilde{p}^{*}_i(\bar{\ell}_i | \bar{x}_i^{(0)}, t_0)$ in~\eqref{OptPathDist}. More explanations on distribution $\tilde{p}_i^\pi$ and old distribution $\tilde{q}_i$ can be found in \eqref{OldDist} of \hyperref[appD]{Appendix~D}. In order to provide a concrete example, we consider a set of parameterized time-dependent Gaussian policies that are linear in states, \textit{i.e.} $\pi_i^{(k)}(\bar{u}_i^{(k)} | \bar{x}_i^{(k)}, \hat{a}_i^{(k)}, \hat{b}_i^{(k)}, \hat\Sigma^{(k)}_{i} ) \sim \mathcal{N}(\bar{u}_i^{(k)}|\hat{a}_i^{(k)}\bar{x}_i^{(k)} + \hat{b}_i^{(k)}, \hat{\Sigma}_i^{(k)})$ at time step $t_k < t_f$, where $\chi_i^{(k)} = (\hat{a}_i^{(k)}, \hat{b}_i^{(k)}, \hat\Sigma_i^{(k)})$ are the policy parameters to be updated. For simplicity, one can also construct a stationary Gaussian policy $\hat{\pi}_i(\bar{u}_i | \bar{x}_i, \hat{a}_i, \hat{b}_i, \hat\Sigma_{i} ) \sim \mathcal{N}(\bar{u}_i|\hat{a}_i\bar{x}_i + \hat{b}_i, \hat{\Sigma}_i)$, which was employed in~\cite{Kupcsik_CAI_2013} and shares the same philosophy as the parameter average techniques discussed in~\cite{Broek_JAIR_2008, Theodorou_JMLR_2010}. The parameters $\chi_i^{*(k)}$ in policy $\pi_i^{(k)}$ can be updated by minimizing the relative entropy between $\tilde{p}_i(\bar{x}_i^{(0)}, \bar{\ell}_i)$ and $\tilde{p}^\pi_i(\bar{x}_i^{(0)}, \bar{\ell}_i)$, \textit{i.e.}
	\begin{equation}\label{eq37}
		\chi_i^{*(k)} =\arg {\textstyle \max_{\chi_i^{(k)}} } \ \int  \tilde{p}_i(\bar{x}_i^{(0)}, \bar{\ell}_i) \cdot \log {\pi}_i^{(k)}(\bar{u}_i^{*(k)} | \bar{x}_i^{(k)}, \chi_i^{(k)}) \ d\bar{x}_i^{(0)} d\bar{\ell}_i,
	\end{equation}
	where $\bar{u}_i^{*(k)}$ is the optimal control action. More detailed interpretation on policy update as well as approximation of~\eqref{eq37} from sample data $\mathcal{Y}_i$ can be found in \hyperref[appD]{Appendix~D}. When implementing the REPS algorithm introduced in this subsection, we initialize each iteration by generating a batch of sample trajectories $\mathcal{Y}_i$ from the current (or initial) policy, which was updated to restore the old approximate path distribution $\tilde{q}_i(\bar{x}_i^{(0)}, \bar{\ell}_i)$ in the prior iteration. With these sample trajectories, we update the path distribution through (\ref{LearningStep})-(\ref{DualFun}) and update the policy again through~\eqref{eq37} till convergence of the algorithm. This REPS-based continuous-time MAS control algorithm is also summarized as \hyperref[alg3]{Algorithm~3} in~\hyperref[appE]{Appendix~E}.

	\subsubsection*{D. Local Control Action}
	The preceding two distributed LSOC algorithms will return either a joint optimal control action $\bar{u}^*_i(\bar{x}_i, t)$ or a joint optimal control policy $\pi_i^{*}(\bar{u}^{*}_i(t)|\bar{x}_i(t), \chi_i^{*})$ for subsystem $\bar{\mathcal{N}}_i$ at joint state $\bar{x}_i(t)$. Similar to the treatment of distributed LSMDP in discrete-time MAS, only the central agent $i$ selects or samples its local control action $u^*_i(\bar{x}_i, t)$ from $\bar{u}^*_i(\bar{x}_i, t)$ or $\pi_i^{*}(\bar{u}^{*}_i(t)|\bar{x}_i(t), \chi_i^{*})$, while other agents $j \in \mathcal{D} \backslash \{i\}$ are guided by the control law, $\bar{u}^*_j(\bar{x}_j, t)$ or $\pi_j^{*}(\bar{u}^{*}_j(t)|\bar{x}_j(t), \chi_j^{*})$, computed in their own subsystems $\bar{\mathcal{N}}_j$. 
	
	For distributed LSOC based on the MC sampling estimator~\eqref{MC_Estimato2}, the local optimal control action $u^*_i(\bar{x}_i, t)$ can be directly selected from the joint control action $\bar{u}^*_i(\bar{x}_i, t)$. For distributed LSOC based on the REPS method, the recursive algorithm generates control policy $\pi^{(k)}_i(\bar{u}^{(k)}_i |  \bar{x}_i^{(k)}, \chi_i^{(k)}) \sim \mathcal{N}(\bar{u}_i^{(k)}|\hat{a}_i^{(k)}\bar{x}_i^{(k)} + \hat{b}_i^{(k)}, \hat{\Sigma}_i^{(k)})$ for each time step $t = t_0 < t_1 < \cdots < t_K = t_f$ per iteration. When generating the trajectory set $\mathcal{Y}_i$ from the current control policy  $\pi_i^{(k)}(\bar{u}^{(k)}_i|\bar{x}_i^{(k)}, \chi_i^{(k)})$, we sample the local control action of agent $i$ from the marginal distribution $\pi_i^{(k)}(u^{(k)}_i|\bar{x}_i^{(k)}, \chi_i^{(k)}) = \sum_{j \in \mathcal{N}_i} \pi^{(k)}_i({u}^{(k)}_i, {u}^{(k)}_{j\in\mathcal{N}_i}  | \bar{x}_i^{(k)}, \chi_i^{(k)})$. After the convergence of local REPS algorithm in subsystem $\bar{\mathcal{N}}_i$, the local optimal control action $u_i^*(\bar{x}_i, t)$ of the central agent $i$ at state $x_i(t)$ can be sampled from the following marginal distribution
	\begin{equation}\label{Marginalize_Policy}
		\pi_i^{*(0)}(u^{*(0)}_i|\bar{x}_i^{(0)}, \chi_i^{*(0)}) = \sum_{j \in \mathcal{N}_i} \pi^{*(0)}_i({u}^{*(0)}_i, {u}^{*(0)}_{j\in\mathcal{N}_i}  | \bar{x}_i^{(0)}, \chi_i^{*(0)}),
	\end{equation}
	which minimizes the joint cost-to-go function~\eqref{J_CTG_Cont_Time} in subsystem $\mathcal{\bar{N}}_i$ and only relies on the local observation of agent $i$.

	\subsection{Generalization with Compositionality}\label{sec3.3}
	While several accessory techniques have been introduced in~\hyperref[Sec3_1]{Section~3.1} and \hyperref[Sec3_2]{Section~3.2} to facilitate the computation of optimal control law, we have to repeat all the aforementioned procedures when a new optimal control task with different terminal cost or preferred exit state is assigned. A possible approach to solve this problem and generalize LSOC is to resort to contextual learning, such as the contextual policy search algorithm~\cite{Kupcsik_CAI_2013}, which adapts to different terminal conditions by introducing hyper-parameters and an additional layer of learning algorithm. However, this supplementary learning algorithm will undoubtedly increase the complexity and computational task for the entire control scheme. Thanks to the linearity of LSOC problems, a task-optimal controller for a new (unlearned) task can also be constructed from existing (learned) controllers by utilizing the compositionality principle~\cite{Broek_JAIR_2008, Todorov_NIPS_2009, daSilva_TOG_2009, Pan_NIPS_2015}.  Suppose that we have $F$ previously learned (component) LSOC problems and a new (composite) LSOC problem with different terminal cost from the previous $F$ problems. Apart from different terminal costs, these $F+1$ multi-agent LSOC problems share the same communication network $\mathcal{G}$, state-related cost $q(\bar{x}_i)$, exit time $t_f$, and dynamics, which means that the state space, control space, and sets of interior and boundary states of these $F+1$ problems are also identical. Let $\phi_i^{\{f\}}(\bar{x}_i)$ with $f \in \{1, 2, \cdots, F\}$ be the terminal costs of $F$ component problems in subsystem $\bar{\mathcal{N}}_i$, and $\phi_i(\bar{x}_i)$ denotes the terminal cost of composite or new problem in subsystem $\bar{\mathcal{N}}_i$. We can efficiently construct a joint optimal control action $\bar{u}^*_i( \bar{x}'_i | \bar{x}_i)$ for the new task from the existing component controllers $\bar{u}^{*\{f\}}_i( \bar{x}'_i | \bar{x}_i)$ by the compositionality principle.

	For multi-agent LSMDP in discrete-time MAS, when there exists a set of weights $\bar{\omega}_i^{\{f\}}$ such that
	\begin{equation*}
		\phi_i(\bar{x}_i) = -\log \bigg[  \sum_{f=1}^{F} \bar{\omega}_i^{\{f\}} \cdot \exp\Big(  -\phi_i^{ \{f\} } (\bar{x}_i) \Big) \bigg],
	\end{equation*}
	by the definition of discrete-time desirability function~\eqref{ExpTrans}, we can imply that 
	\begin{equation}\label{Dis_Des_Composite}
		Z_i(\bar{x}_i) = \sum_{f = 1}^{F} \bar{\omega}_i^{ \{f\} } \cdot Z_i^{\{f\}}(\bar{x}_i)
	\end{equation}
	for all $\bar{x}_i \in \bar{\mathcal{B}}_i$. Due to the linear relation in~\eqref{Z_update}, identity~\eqref{Dis_Des_Composite} should also hold for all interior states $\bar{x}_i \in \bar{\mathcal{I}}_i$. Substituting~\eqref{Dis_Des_Composite} into the optimal control action~\eqref{OptimalControl}, the task-optimal control action for the new task with terminal cost $\phi_i(\bar{x}_i)$ can be immediately generated from the existing controllers
	\begin{equation}\label{Dis_Comp}
		\bar{u}^*_i( \bar{x}'_i | \bar{x}_i) = \sum_{f = 1}^{F} \bar{W}_i^{\{f\}}(\bar{x}_i)   \cdot \bar{u}^{*\{f\}}_i( \bar{x}'_i | \bar{x}_i),
	\end{equation}
	where $\bar{W}_i^{\{f\}}(\bar{x}_i) = \bar{\omega}_i^{\{f\}}  \mathcal{W}^{\{f\}}_i(\bar{x}_i) / (\sum_{e=1}^{F} \bar{\omega}_i^{\{e\}}  \mathcal{W}_i^{\{e\}}(\bar{x}_i))$ and $\mathcal{W}_i^{\{f\}}(\bar{x}_i) = \sum_{\bar{x}'_i}\bar{p}_i(\bar{x}'_i|\bar{x}_i)Z^{\{f\}}_i(\bar{x}'_i)$. For LSOC in MAS, compositionality principle can be used for not only the generalization of controllers for the same subsystem $\bar{\mathcal{N}}_i$, but the generalization of control law across the network. For any two subsystems that satisfy the aforementioned compatible conditions, the task-optimal controller of one subsystem can also be directly constructed from the existing computational result of the other subsystem by resorting to~\eqref{Dis_Comp}.

	The generalization of continuous-time LSOC problem can be readily inferred by symmetry. For $F+1$ continuous-time LSOC problems that satisfy the compatible conditions, when there exist scalars $\lambda_i$, $\lambda_i^{\{f\}}$ and weights $\bar{\omega}_i^{\{f\}}$ such that
	\begin{equation}\label{eq42}
		\phi_i(\bar{x}_i) = -\lambda_i \log \bigg[  \sum_{f = 1}^{F} \bar{\omega}_i^{\{f\}} \cdot \exp \Big( -\frac{1}{\lambda_i^{\{ f \}}} \cdot \phi_i^{\{f\}} (\bar{x}_i)  \Big) \bigg],
	\end{equation}
	by the definition of continuous-time desirability function~\eqref{ContTrans}, we readily have $Z_i(\bar{x}_i, t_f) = \sum_{f = 1}^{F} \bar{\omega}_i^{ \{f\} } \cdot Z_i^{\{f\}}(\bar{x}_i, t_f)$ for all $\bar{x}_i \in \bar{\mathcal{B}}_i$. Since $Z^{\{f\}}_i(\bar{x}_i, \tau)$ is the solution to linearized stochastic HJB equation \eqref{Z_Function}, the linear combination of desirability function
	\begin{equation}\label{Con_Des_Composite}
		Z_i(\bar{x}_i, \tau) = \sum_{f = 1}^{F} \bar{\omega}_i^{ \{f\} } \cdot Z_i^{\{f\}}(\bar{x}_i, \tau)
	\end{equation}
	holds everywhere from $t$ to $t_f$ on condition that \eqref{Con_Des_Composite} holds for all terminal states $\bar{\mathcal{B}}_i$, which is guaranteed by~\eqref{eq42}. Substituting~\eqref{Con_Des_Composite} into the continuous-time optimal controller~\eqref{OptimalAction2}, the task-optimal composite controller $\bar{u}_i^*(\bar{x}_i)$ can be constructed from the component controllers $\bar{u}_i^{\{f\}*}(\bar{x}_i)$ by
	\begin{equation*}
		\bar{u}_i^*(\bar{x}_i, t) = \sum_{f = 1}^{F} \bar{W}_i^{\{f\}}(\bar{x}_i, t)   \cdot \bar{u}^{\{f\}*}_i( \bar{x}_i, t),
	\end{equation*}
	where $\bar{W}_i^{\{f\}}(\bar{x}_i, t) = \bar{\omega}_i^{\{f\}} Z_i^{\{f\}}(\bar{x}_i, t) / (\sum_{f=1}^{F} \bar{\omega}_i^{\{f\}} Z_i^{\{f\}}(\bar{x}_i, t))$. However, this generalization technique rigorously does not apply to the control policies based on the trajectory optimization or policy search methods, such as PI$^2$ \cite{Theodorou_JMLR_2010} and REPS in~\hyperref[Sec3_2]{Section~3.2}, since these policy approximation methods usually only predict the optimal path distribution or optimal control policy, while leaving the value or desirability function unknown.

	\section{Illustrative Examples}\label{sec4}

Distributed LSOC can be deployed on a variety of MASs in reality, such as the distributed HVAC system in smart buildings, cooperative unmanned aerial vehicle (UAV) teams, and synchronous wind farms. Motivated by the experiments in~\cite{Broek_JAIR_2008, Williams_JGCD_2017, Daniel_2017}, we demonstrate the distributed LSOC algorithms with a cooperative UAV team in cluttered environment. Compared with these preceding experiments, the distributed LSOC algorithms in this paper allow to consider an explicit and simpler communication network underlying the UAV team, and the joint cost function between neighboring agents can be constructed and optimized with less computation. Consider a cooperative UAV team consisting of three agents illustrated in~\hyperref[fig2]{Figure~2}. UAV~1 and 2, connected by a solid line, are strongly coupled via their joint cost functions $q_1(\bar{x}_1)$ and $q_2(\bar{x}_2)$, and their local controllers $u_1(\bar{x}_1)$ and $u_2(\bar{x}_2)$ are designed to drive UAV~1 and 2 towards their exit state while minimizing the distance between two UAVs and avoiding obstacles. These setups are useful when we need multiple UAVs to fly closely towards an identical destination, \textit{e.g.} carrying and delivering a heavy package together or maintaining communication channels/networks. By contrast, although UAV~3 is also connected with UAV~1 and 2 via dotted lines, the immediate cost function of UAV~3 is designed to be independent or fully factorized from the states of UAV~1 and 2 in each subsystem, such that UAV 3 is only loosely coupled with UAVs~1 and~2 through their terminal cost functions, which restores the scenario considered in~\cite{Broek_JAIR_2008, Williams_JGCD_2017}. The local controller of UAV~3 is then synthesized to guide the UAV to its exit state with minimal cost. In the following subsections, we will verify both discrete-time and continuous-time distributed LSOC algorithms subject to this cooperative UAV team scenario.

\vspace{1em}

\begin{figure}[H]
	\centering 
	\includegraphics[width=0.32\textwidth]{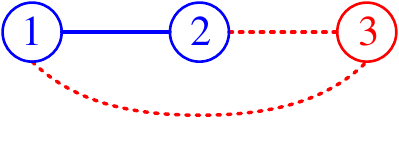}
	\caption{Communication network of UAV team. UAV 1 and 2 are strongly coupled through their running cost functions. UAV 3 is loosely coupled with UAV 1 and 2 through their terminal cost functions.}\label{fig2}
\end{figure}
\noindent 

\subsection{UAVs wtih Discrete-Time Dynamics}
In order to verify the distributed LSMDP algorithm, we consider a cooperative UAV team described by the probability model introduced in \hyperref[Sec2_2]{Section~2.2}. The flight environment is described by a $5\times5$ grid with total $25$ cells to locate the positions of UAVs, and the shaded cells in~\hyperref[fig3]{Figure~3(a)} represent the obstacles. Each UAV is described by a state vector $x_i = [r_i, c_i]^\top$, where $r_i$ and $c_i \in \{1,2,3,4,5\}$ are respectively the row and column indices locating the  $i^{\rm th}$ UAV. The cell with circled index \circled{$i$} denotes the initial state of the  $i^{\rm th}$ UAV, and the one with boxed index \squared{ \hspace{-2pt} $i$ \hspace{-2pt} } indicates the exit state of the  $i^{\rm th}$ UAV. The passive transition probabilities of interior, edge, and corner cells are shown in \hyperref[fig3]{Figure~3(b)}, \hyperref[fig3]{Figure~3(c)}, and \hyperref[fig3]{Figure~3(d)}, respectively, and the passive probability of a UAV transiting to an adjacent cell can be interpreted as the result of random winds. To fulfill the requirements of aforementioned scenario, the controlled transition distributions should i) drive UAV 1 and 2 to their terminal state $(5, 5)$ while shortening the distance between two UAVs, and ii) guide UAV 3 to terminal state $(1, 5)$ with minimum cost, which is related to obstacle avoidance, control cost, length of path, and flying time.

\vspace{1em}

\begin{figure}[H]
	\centering 
	\includegraphics[width=0.7\textwidth]{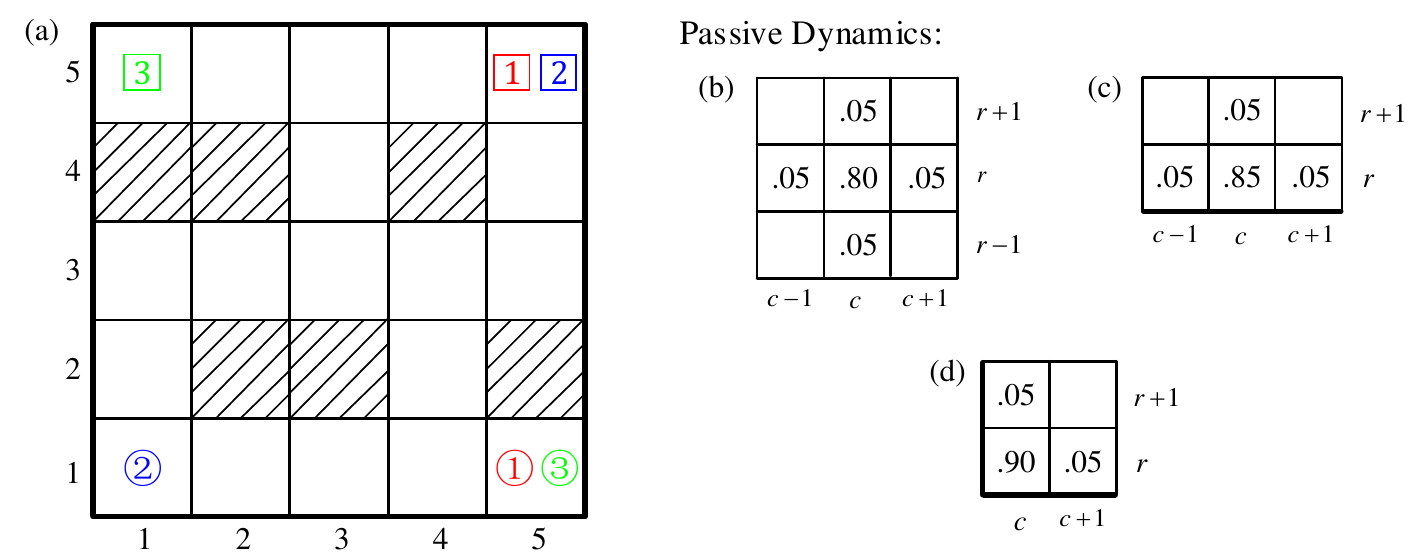}\label{fig3}
	\caption{ Flight environment and UAV's passive transition dynamics. (a) Flight environment, initial states, and exit states of UAVs. (b) The passive transition probability of UAVs in the interior cells. (c) The passive transition probability of UAVs in the edge cells. (d) The passive transition probability of UAVs in the corner cells.}
\end{figure}

\vspace{0.5em}

In order to realize these objectives, we consider the state-related cost functions as follows
\begin{align}\label{DisCostFun}
	q_1(\bar{x}_1) & = w_{12} \cdot (|r_1 - r_2| + |c_1 - c_2|) + o_1(x_1) \cdot o_2(x_2), \nonumber \allowdisplaybreaks\\
	q_2(\bar{x}_2) & = w_{21} \cdot (|r_2 - r_1| + |c_2 - c_1|) + o_1(x_1) \cdot o_2(x_2),\allowdisplaybreaks \\
	q_3(\bar{x}_3) & = o_3(x_3),\nonumber \allowdisplaybreaks
\end{align}
where in the following examples, weights on relative distance $\|x_1 - x_2\|_1$, defined by Manhattan distance, are $w_{12} = w_{21} = 3.5$; state and obstacle cost $o_i(x_i) = 30$ when the $i^{\rm th}$ UAV  is in an obstacle cell, and $o_i(x_i) = 2.2$ when UAV $i$ is in a regular cell; and terminal cost $q_i(\bar{x}_i) = 0$ for $\bar{x}_i \in \bar{\mathcal{B}}_i$. State-related cost $q_1(\bar{x}_1)$ and $q_2(\bar{x}_2)$, which involve both the states of UAV 1 and 2, can measure the joint performance between two UAVs, while cost $q_3(\bar{x}_3)$ only contains the state of UAV 3. Without the costs on relative distance, \textit{i.e.} $w_{12} = w_{21} = 0$, this distributed LSMDP problem will degenerate to three independent shortest path problems with obstacle avoidance as considered in \cite{Broek_JAIR_2008, Williams_JGCD_2017, Daniel_2017}, and the optimal trajectories are straightforward\footnote{The shortest path for UAV 1 is $(1, 5) \rightarrow (1, 4) \rightarrow (2, 4) \rightarrow (3, 4) \rightarrow (3, 5) \rightarrow (4, 5) \rightarrow (5, 5)$. There are three different shortest paths for UAV 2, and the shortest path for UAV 3 is shown in~\hyperref[fig4]{Figure~4 (c)}.}. Desirability function $Z_i(\bar{x}_i)$ and local optimal control distribution $u^*_i(\cdot | \bar{x}_i)$ can then be computed by following~\hyperref[alg1]{Algorithm~1} in~\hyperref[appE]{Appendix~E}.   \hyperref[fig4]{Figure~4} shows the maximum likelihood controlled trajectories of UAV team subject to passive dynamics in~\hyperref[fig3]{Figure~3} and cost functions~\eqref{DisCostFun}. Some curious readers may wonder why UAV~1 in \hyperref[fig4]{Figure~4(a)} decides to stay in $(1, 4)$ cell for two consecutive time steps rather than moving forward to $(1, 3)$ and then flying along with UAV~2, which generates a lower state-related cost. While this alternative corresponds to a lower state-related cost, it may not be the optimal trajectory minimizing the control cost and the overall immediate cost in~\eqref{eq4}. In order to verify this speculation, we increase the passive transition probabilities $p_i(\cdot | \bar{x}_i)$ to surpass certain thresholds in~\hyperref[fig5]{Figure~5(a)-(c)}, which can be interpreted as stronger winds in reality. With this altered passive dynamics and cost functions in~\eqref{DisCostFun}, the maximum likelihood controlled trajectory of UAV~1 is shown in~\hyperref[fig5]{Figure 5(d)}, which verifies our preceding reasoning. Trajectories of UAV 2 and 3 subject to altered passive dynamics are identical to the results in~\hyperref[fig4]{Figure~4}. To provide an intuitive view on the efficiency improvements of our distributed LSMDP algorithm, \hyperref[fig5.5]{Figure~6} presents the average data size and computational complexity on each UAV ($|\mathcal{S}_i| = 25$), gauged by the row number $m$ of matrices and vectors in~\eqref{Z_update} and~\eqref{dist_alge}, subject to the centralized programming~\eqref{Z_update} and parallel programming~\eqref{dist_alge} in different communication network, such as line, ring, complete binary tree, and fully connected topology, which restores the scenario with exponential complexity considered in \cite{Broek_JAIR_2008, Daniel_2017}.

\vspace{3em}

\begin{figure}[h]
	\centering 
	\includegraphics[width=1.0\textwidth]{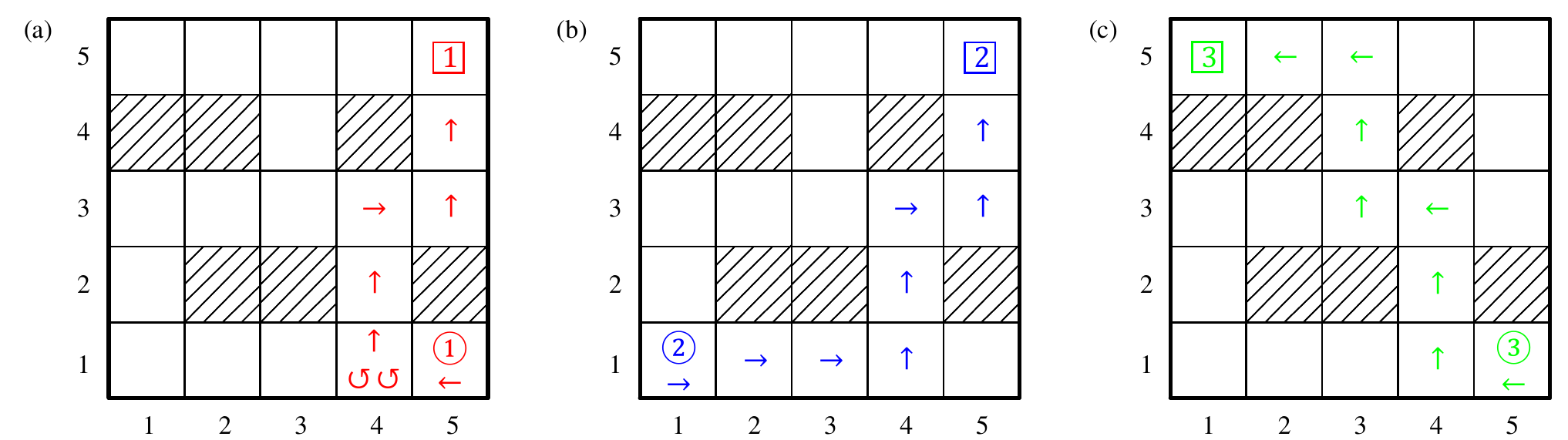}\label{fig4}
	\caption{UAVs' trajectories subject to optimal control policy. (a) Controlled trajectory of UAV~1: $(1, 5) \rightarrow (1, 4) \rightarrow (1, 4) \rightarrow (1, 4) \rightarrow (2, 4) \rightarrow (3, 4) \rightarrow (3, 5) \rightarrow (4, 5) \rightarrow (5, 5)$. (b) Controlled trajectory of UAV 2: $(1, 1) \rightarrow (1, 2) \rightarrow (1, 3) \rightarrow (1, 4) \rightarrow (2, 4) \rightarrow (3, 4) \rightarrow (3, 5) \rightarrow (4, 5) \rightarrow (5, 5)$. (c) Controlled trajectory of UAV~3: $(1, 5) \rightarrow (1, 4) \rightarrow (2, 4) \rightarrow (3, 4) \rightarrow (3, 3) \rightarrow (4, 3) \rightarrow (5, 3) \rightarrow (5, 2) \rightarrow (5, 1)$.}
\end{figure}

\clearpage

\begin{figure}[htpb]
	\centering 
	\includegraphics[width=0.77\textwidth]{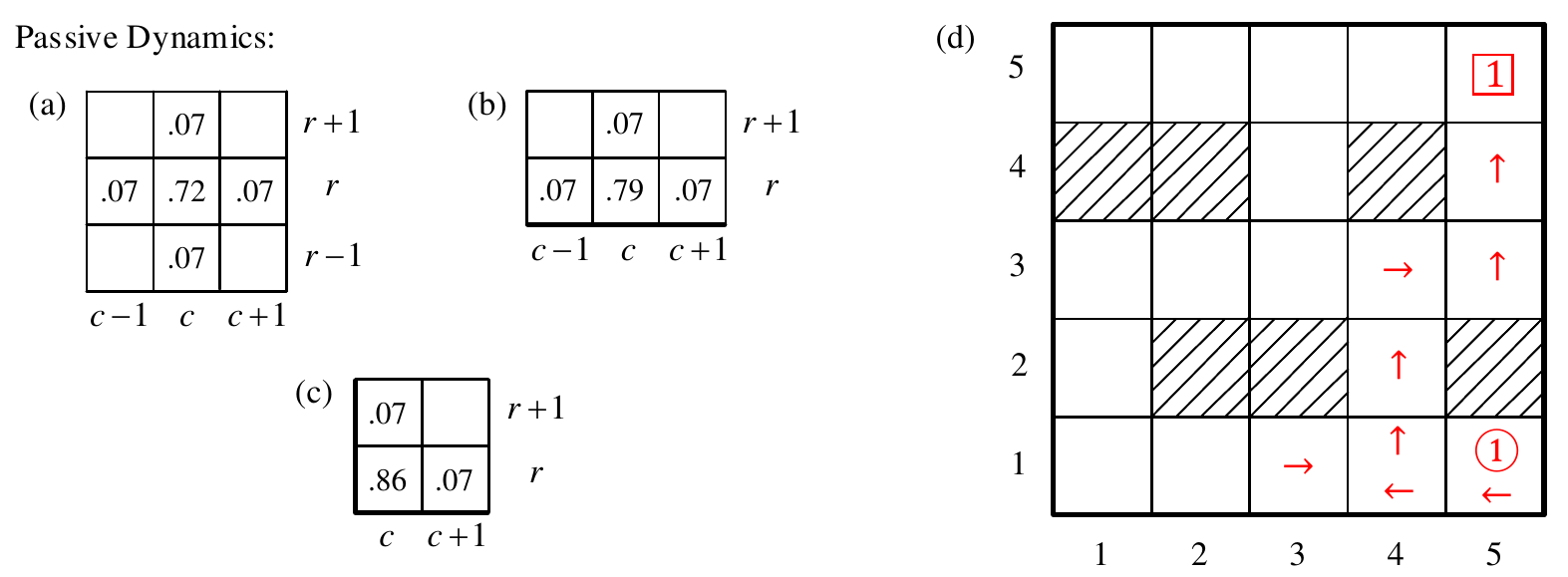}\label{fig5}
	\caption{Altered passive transition dynamics and controlled trajectory of UAV 1 subject to new dynamics. (a-c) are respectively the passive transition probabilities for interior, cell, and corner cells. (d) Controlled trajectory of UAV 1 subject to the altered passive dynamics: $(1, 5) \rightarrow (1, 4) \rightarrow (1, 3) \rightarrow (1, 4) \rightarrow (2, 4) \rightarrow (3, 4) \rightarrow (3, 5) \rightarrow (4, 5) \rightarrow (5, 5)$.}
\end{figure}

\begin{figure}[H]
	\centering 
	\includegraphics[width=0.45\textwidth]{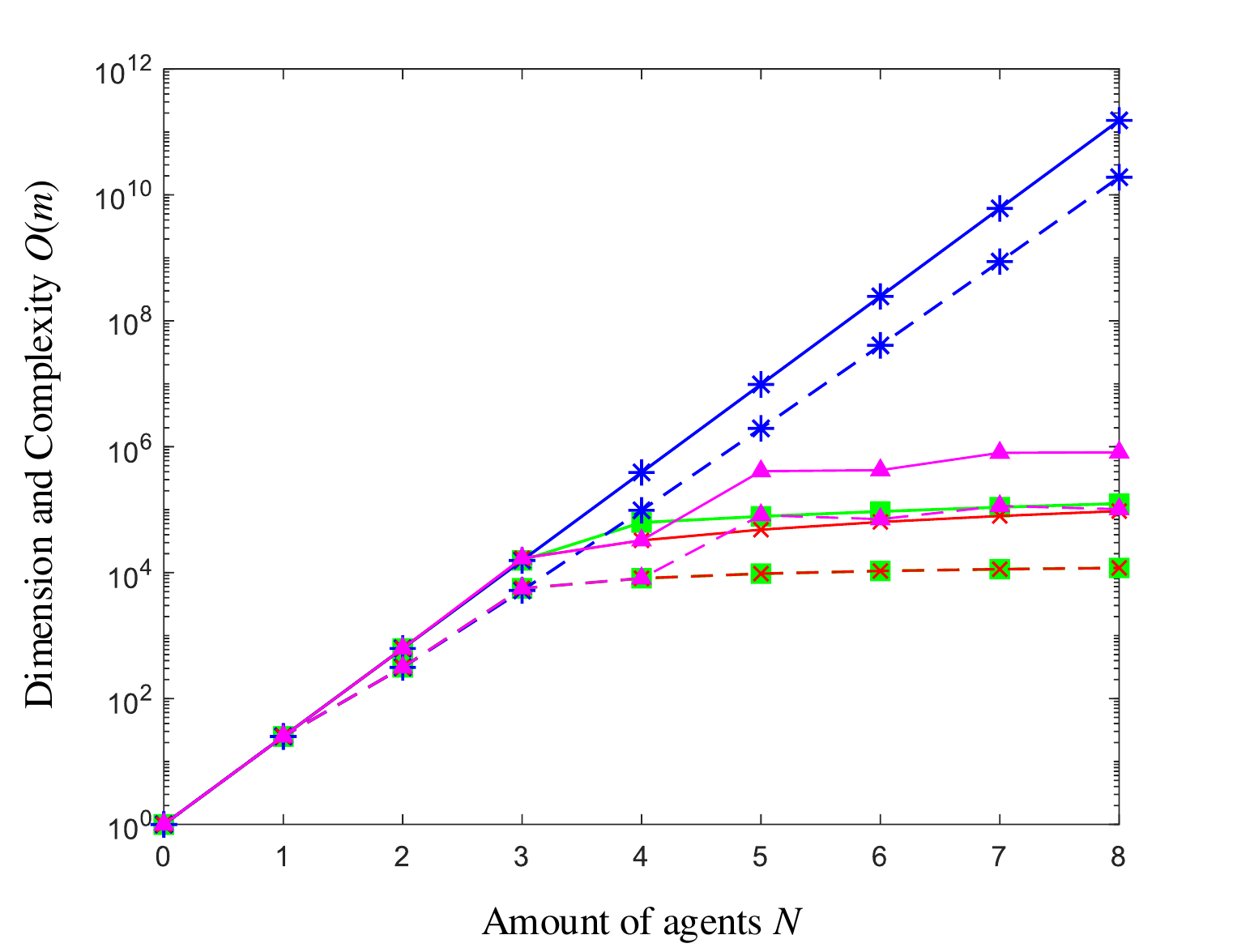}\label{fig5.5}
	\caption{Relationship between amount of agents $N$ and data dimension and computational complexity $O(m)$. Vertical axis is the value of $m$. Solid and dashed lines respectively represent the results attained by centralized algorithm~\eqref{Z_update} and parallel algorithm~\eqref{dist_alge}. Blue asterisk, red cross, green square, and magenta triangle respectively denote the results associated with fully connected, line, ring, and complete binary tree communication networks.}
\end{figure}

\vspace{1em}

\subsection{UAVs wtih Continuous-Time Dynamics}\label{sec4_1}
In order to verify the continuous-time distributed LSOC algorithms, consider the continuous-time UAV dynamics described by the following equation~\cite{Broek_JAIR_2008, Yao_AST_2019}:
\begin{equation}\label{Cont_Model}
	\left(\begin{matrix}
		dx_i\\
		dy_i\\
		dv_i\\
		d\varphi_i
	\end{matrix}\right) = 
	\left(\begin{matrix}
		v_i \cos \varphi_i\\
		v_i \sin \varphi_i\\
		0\\
		0
	\end{matrix}\right) dt + \left(\begin{matrix}
		0 & 0\\
		0 & 0\\
		1 & 0\\
		0 & 1
	\end{matrix}\right) \left[ \left( \begin{matrix}
		u_i\\
		\omega_i
	\end{matrix}  \right) dt + \left(\begin{matrix}
		\sigma_i & 0\\
		0 & \nu_i
	\end{matrix}\right)dw_i
	\right], 
\end{equation}
where $(x_i, y_i)$, $v_i$, and $\varphi_i$ respectively denote the position, forward velocity, and direction angle of the $i^{\rm th}$ UAV; forward acceleration $u_i$ and angular velocity $\omega_i$ are the control inputs, and disturbance $w_i$ is a standard Brownian motion. Control transition matrix $\bar{B}_i(x_i)$ is a constant matrix in~\eqref{Cont_Model}, and we set noise level parameters $\sigma_i = 0.1$ and $\nu_i = 0.05$ in simulation. The communication network underlying the UAV team as well as the control objectives are the same as in the discrete-time scenario. Instead of adopting the Manhattan distance, the distance in continuous-time problem is associated with $L_2$ norm. Hence, we consider the state-related costs defined as follows
\begin{equation}\label{UAV_RunningCost}
	\begin{split}
		q_1(\bar{x}_1)  & = w_{11} \cdot (\| (x_1, y_1) - (x_1^{t_f}, y_1^{t_f}) \|_2 - d^{\max}_1)  + w_{12} \cdot ( \| (x_1, y_1) - (x_2, y_2) \|_2 - d^{\max}_{12}),\\
		q_2(\bar{x}_2) & = w_{22} \cdot (\| (x_2, y_2) - (x_2^{t_f}, y_2^{t_f}) \|_2 - d^{\max}_2)  + w_{21} \cdot ( \| (x_2, y_2) - (x_1, y_1) \|_2 - d^{\max}_{21}),\\
		q_3(\bar{x}_3) & = w_{33}  \cdot (\| (x_3, y_3) - (x_3^{t_f}, y_3^{t_f}) \|_2 - d^{\max}_3),
	\end{split}
\end{equation}
where $w_{ii}$ is the weight on distance between the $i^{\rm th}$ UAV and its exit position; $w_{ij}$ is the weight on distance between the $i^{\rm th}$ and $j^{\rm th}$ UAVs; $d^{\max}_i$ usually denotes the initial distance between the $i^{\rm th}$ UAV and its destination, and $d_{ij}^{\max}$ denotes the initial distance between the $i^{\rm th}$ and $j^{\rm th}$ UAVs. The parameters $d^{\max}_i$ and $d_{ij}^{\max}$ are chosen to regularize the numerical accuracy and stability of algorithms.

To show an intuitive improvement brought by the joint state-related cost, we first verify the continuous-time LSOC algorithm in a simple flight environment without obstacle. Consider three UAVs forming the network in~\hyperref[fig2]{Figure~2} and with initial states $x_1^{0} = (5, 5, 0.5, 0)^\top$, $x_2^{0} = (5, 45, 0.5, 0)^\top$, $x_3^0 = (5, 25, 0.5, 0)^\top$ and identical exit state $x_i^{t_f} = (45, 25, 0, 0)^\top$ for $i = 1, 2, 3$. The exit time is $t_f = 25$, and the length of each control cycle is $0.2$. When sampling trajectory roll-outs $\mathcal{Y}_i$, the time interval from $t$ to $t_f$ is partitioned into $K = 7$ intervals of equal length $\varepsilon$, \textit{i.e.} $\varepsilon K  = t_f - t$, until $\varepsilon$ becomes less than 0.2. Meanwhile, to make the exploration process more aggressive, we increase the noise level parameters $\sigma_i = 0.75$ and $\nu_i = 0.65$ when sampling trajectory roll-outs. The size of data set $\mathcal{Y}_i$ for estimator~\eqref{MC_Estimator} in each control cycle is $400$ sample trajectories, which can be generated concurrently by GPU~\cite{Williams_JGCD_2017}. Control weight matrices $\bar{R}_i$ are selected as identity matrices. Guided by the sampling-based distributed LSOC algorithm, \hyperref[alg2]{Algorithm~2} in \hyperref[appE]{Appendix~E}, UAV trajectories and the relative distance between UAV 1 and 2 in condition of both joint and independent state-related costs are presented in~\hyperref[fig6]{Figure~7}. Letting update rate constraint be $\delta_i = 25$ and the size of trajectory set be $|\mathcal{Y}_i| = 400$ and $150$ for the initial and subsequent policy iterations in every control period, simulation results obtained from the REPS-based distributed LSOC algorithm, \hyperref[alg3]{Algorithm~3} in \hyperref[appE]{Appendix~E}, are given in~\hyperref[fig7]{Figure~8}. \hyperref[fig6]{Figure~7} and \hyperref[fig7]{Figure~8} imply that joint state-related costs  can significantly influence the trajectories and shorten the relative distance between UAV 1 and 2, which fulfills our preceding control requirements. 

\begin{figure}[H]
	\centering
	\vspace{-0.5em}\includegraphics[width=0.90\textwidth]{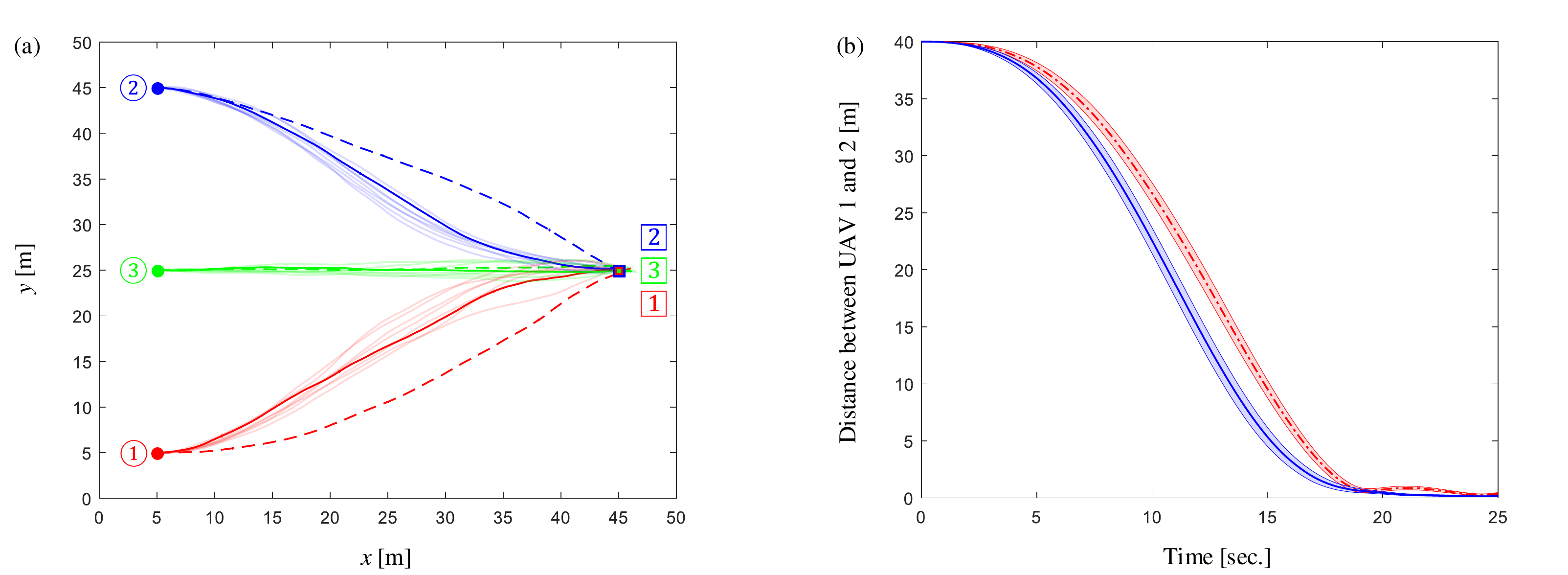}\vspace{-0em}\label{fig6}
	\caption{UAV trajectories and the distance between UAV 1 and 2 from 50 trails subject to the sampling-based distributed LSOC algorithm. (a) Trajectories of UAVs. Red, blue and green lines are trajectories for UAV 1, 2 and 3, respectively. Dashed lines are from a trail with factorized (or independent) state costs ($w_{11} = w_{22} = 0.75$, $w_{33} = 1$, $w_{12} = w_{21} = 0$), and solid (transparent) lines are from trails with joint state costs ($w_{11} = w_{22} = 0.75$, $w_{33} = 1$, $w_{12} = w_{21} = 1.5$). (b) Distance between UAV 1 and 2. Red dashed line and blue solid line are respectively the mean distances from tails with factorized state cost and joint state cost. Height of strip represents one standard deviation.}
\end{figure}
\begin{figure}[H]
	\centering
	\vspace{-0.5em}\includegraphics[width=0.90\textwidth]{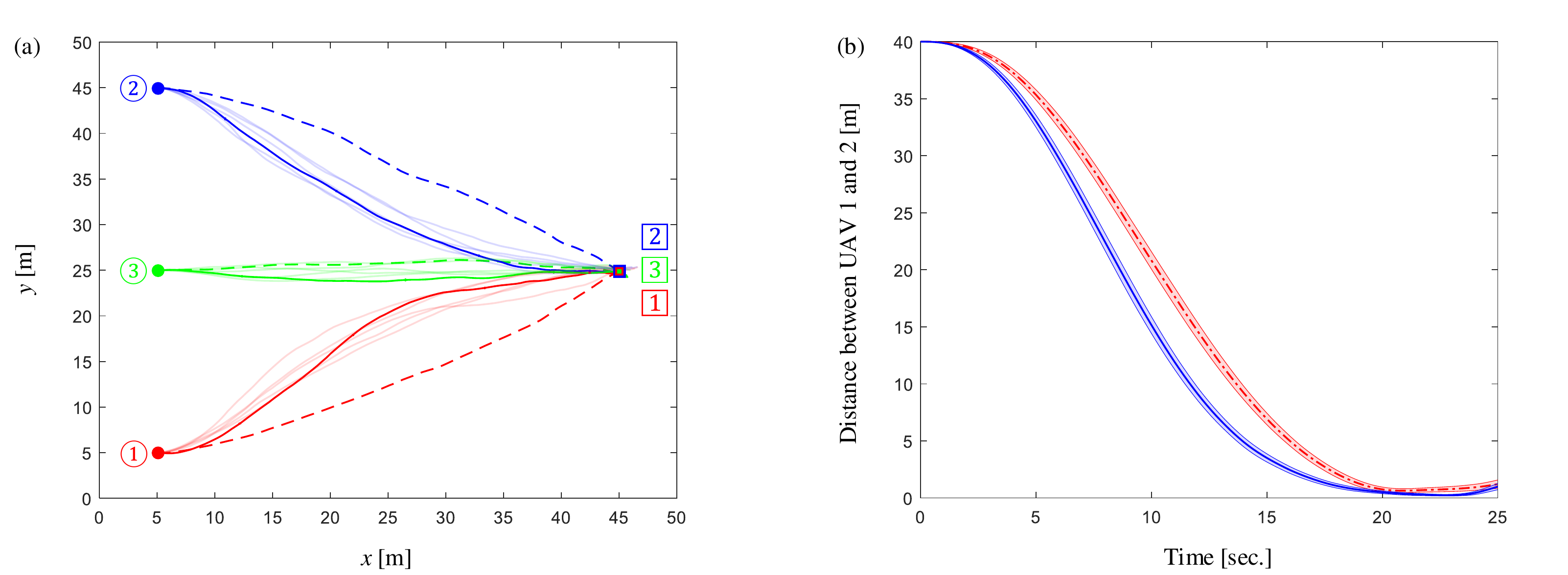}\vspace{-0em}\label{fig7}
	\caption{UAV trajectories and distance between UAV 1 and 2 from 50 trails subject to the REPS-based distributed LSOC algorithm. (a)~Trajectories of UAVs. Red, blue and green lines are trajectories for UAV 1, 2 and 3, respectively. Dashed lines are from a trail with factorized state costs ($w_{11} = w_{22} = w_{33} = 0.1$, $w_{12} = w_{21} = 0$), and solid (transparent) lines are from trails with joint state costs ($w_{11} = w_{22} = w_{33} = 0.1$, $w_{12} = w_{21} = 0.2$). (b) Red dashed line and blue solid line are respectively the mean distances from tails with factorized state cost and joint state cost. Height of strip represents one standard deviation.}
\end{figure}

We then consider a cluttered flight environment as the discrete-time example. Suppose three UAVs forming the network in~\hyperref[fig2]{Figure~2} and with initial states $x_1^{0} = (45, 5, 0.35, \pi)^\top$, $x_2^{0} = (5, 5, 0.65,  0)^\top$, $x_3^0 = (45, 5, 0.5, \pi)^\top$ and exit states $x_1^{t_f} = x_2^{t_f} = (45, 45, 0, \pi/2)^\top$, $x_3^{t_f} = (5, 45, 0, \pi)^\top$. The exit time is $t_f = 30$, and the length of each control cycle is $0.2$. When sampling trajectory roll-outs $\mathcal{Y}_i$, the time interval from $t$ to $t_f$ is partitioned into $K = 18$ intervals of equal length. The size of $\mathcal{Y}_i$ is $400$ trajectory roll-outs when adopting random sampling estimator, and the other parameters are the same as in the preceding continuous-time example. Subject to the sampling-based distributed LSOC algorithm, UAV trajectories and the relative distance between UAVs 1 and 2 subject to both joint and independent state-related costs are presented in~\hyperref[fig8]{Figure~9}. Letting the update rate constraint be $\delta_i = 50$ and the size of trajectory set be $|\mathcal{Y}_i| = 400$ and $150$ for the initial and subsequent policy iterations in every control period, experimental results obtained from the REPS-based distributed LSOC algorithm are given in~\hyperref[fig9]{Figure~10}. \hyperref[fig8]{Figure~9} and \hyperref[fig9]{Figure~10} show that our continuous-time distributed LSOC algorithms can guide UAVs to their terminal states, avoid obstacles, and shorten the relative distance between UAVs~1 and 2. It is also worth noticing that since there exist more than one shortest path for UAV 2 in condition of factorized state cost (see the footnote in \hyperref[sec4_1]{Section 4.1}), the standard variations of distance in \hyperref[fig8]{Figure~9} and \hyperref[fig9]{Figure~10} are significantly larger than other cases. Lastly, we compare the sample-efficiency between the sampling-based and REPS-based distributed LSOC algorithms in preceding two continuous-time examples. \hyperref[fig10]{Figure~11} shows the value of immediate cost function $c_2(\bar{x}_2, \bar{u}_2)$ from subsystem $\bar{\mathcal{N}}_2$ versus the amount of trajectory roll-outs, and the maximum numbers of trajectory roll-outs on horizontal axes are determined by the REPS-based trails with minimum amounts of sample roll-outs. In \hyperref[fig10]{Figure~11}, we can tell that the REPS-based distributed LSOC algorithm is more sample-efficient than the sampling-based algorithm.

\vspace{4em}

\begin{figure}[htpb]
	\centering
	\includegraphics[width=0.90\textwidth]{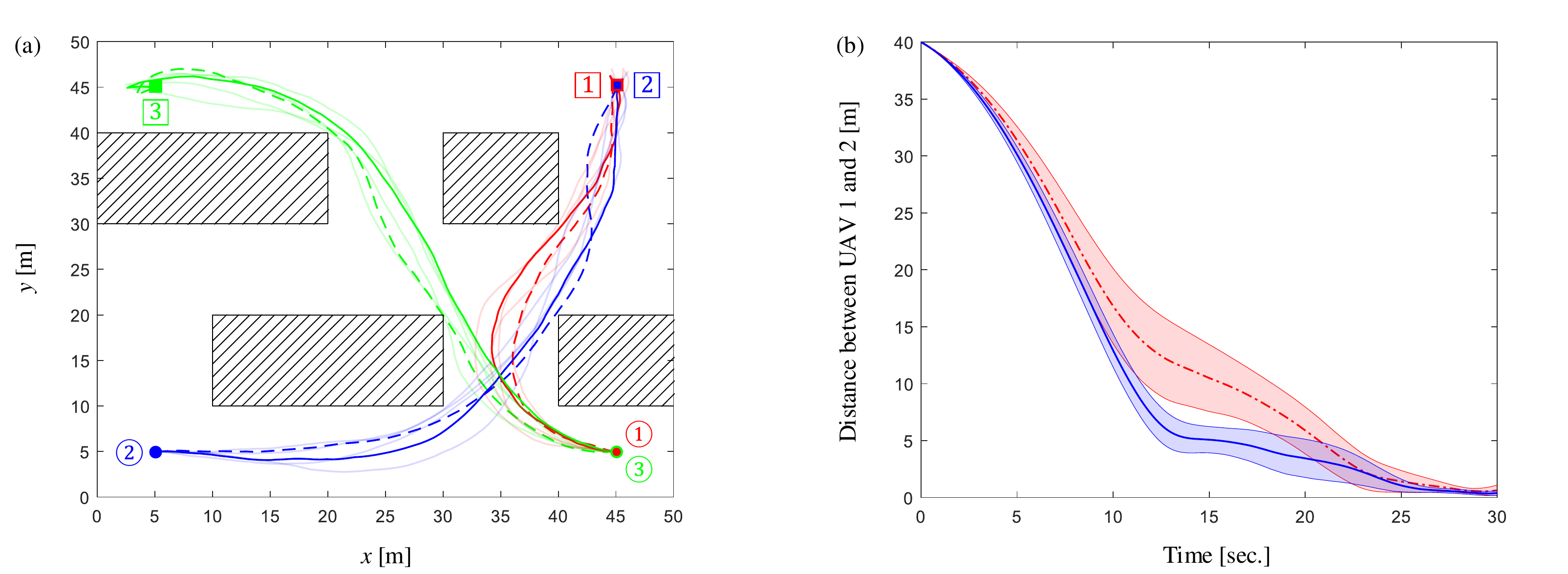}\label{fig8} 
	\caption{UAV trajectories and relative distance between UAV 1 and 2 from 100 trails based on random sampling estimator. (a)~Trajectories of UAVs. Red, blue and green lines are trajectories for UAV 1, 2 and 3, respectively. Dashed lines are from a trail with factorized (or independent) state costs ($w_{11} = w_{22} = w_{33} = 1$, $w_{12} = w_{21} = 0$), and solid (transparent) lines are from trails with joint state costs ($w_{11} = w_{22} = w_{33} = 1$, $w_{12} =1.5, w_{21} = 0.5$). (b) Distance between UAV 1 and 2. Red dashed line and blue solid line are respectively the mean distances from tails with independent state cost and joint state cost. Height of strip represents one standard deviation.}
\end{figure}

\clearpage

\begin{figure}[H]
	\centering
	\includegraphics[width=0.90\textwidth]{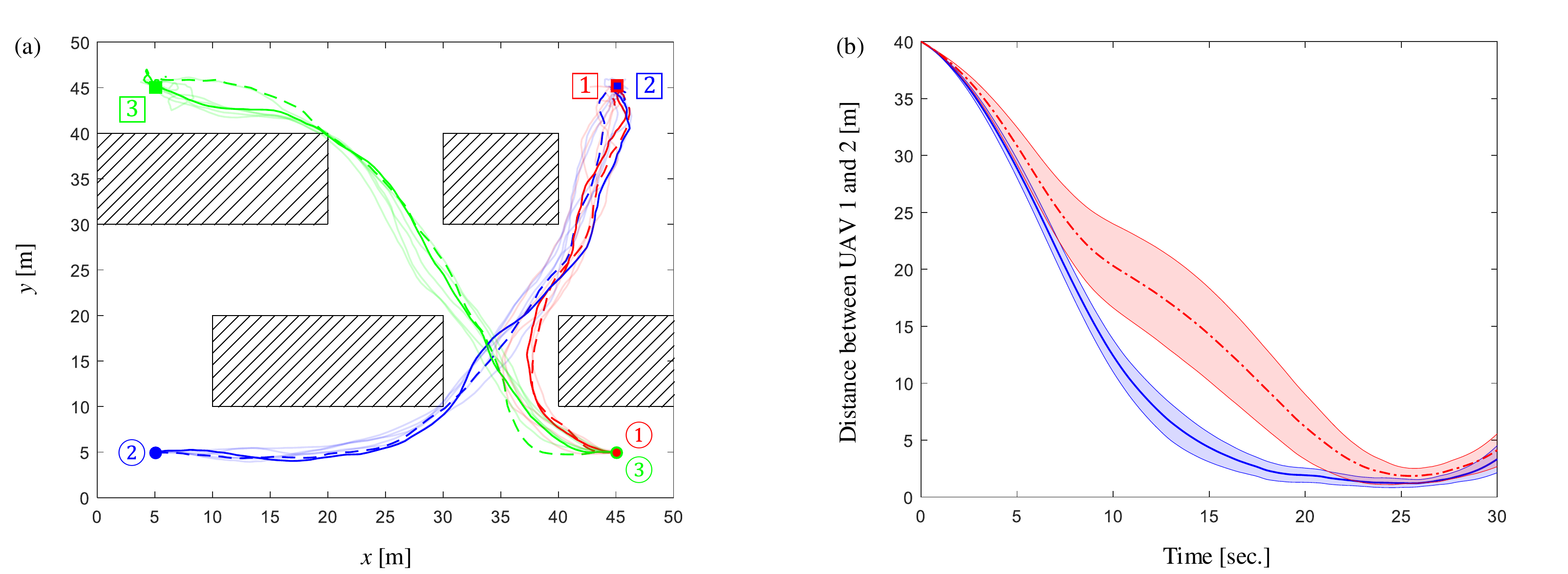}\label{fig9}\vspace{-0.5em}
	\caption{UAV trajectories and relative distance between UAV 1 and 2 from 100 trails based on REPS. (a)~Trajectories of UAVs. Red, blue and green lines are trajectories for UAV 1, 2 and 3, respectively. Dashed lines are from a trail with factorized state costs ($w_{11} = w_{22} = w_{33} = 0.18$, $w_{12} = w_{21} = 0$), and solid (transparent) lines are from trails with joint state costs ($w_{11} = w_{22} = w_{33} = 0.18$, $w_{12} =0.27, w_{21} = 0.1$). (b) Distance between UAV 1 and 2. Red dashed line and blue solid line are respectively the mean distances from tails with independent state cost and joint state cost. Height of strip is one standard deviation.}
\end{figure}

\begin{figure}[H]
	\centering
	\includegraphics[width=0.90\textwidth]{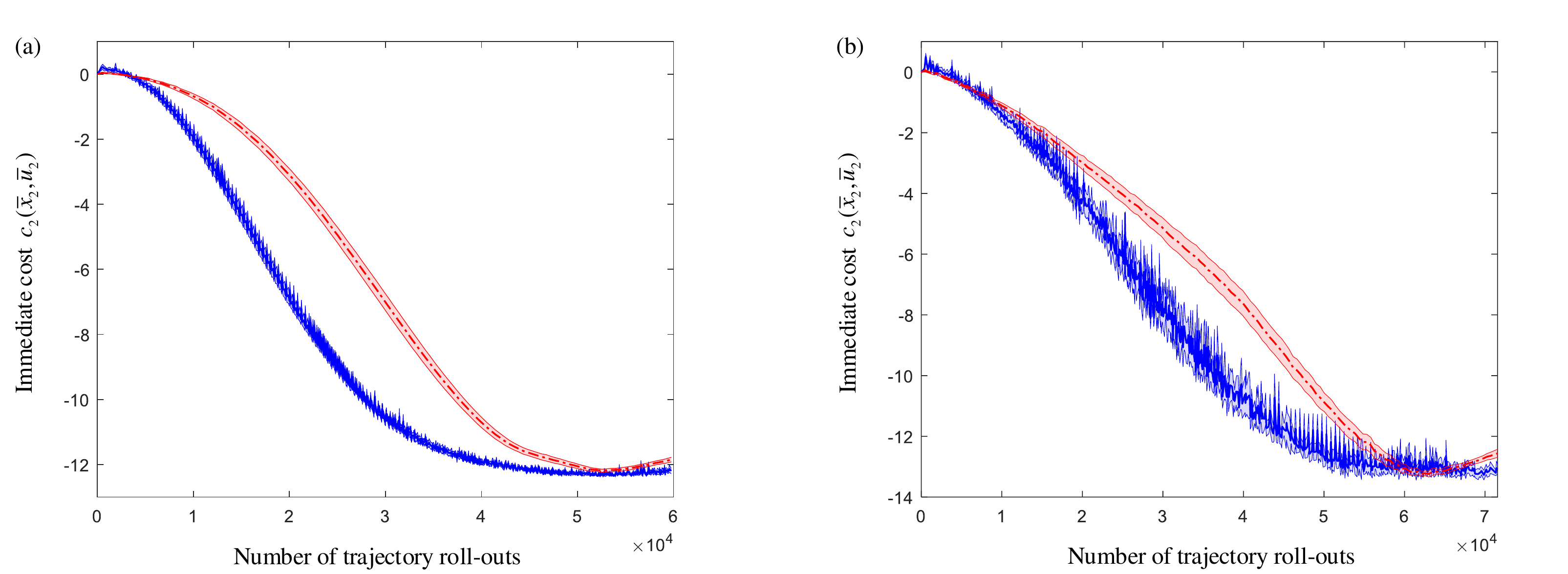}\label{fig10}\vspace{-0.5em}
	\caption{Sample-efficiency of continuous-time algorithms from 100 trails. (a) Immediate cost $c_2(\bar{x}_2, \bar{u}_2)$ in simple scenario without obstacle. Red dashed line is the mean of immediate cost subject to random sampling approach. Blue solid line is the mean of immediate cost subject to REPS algorithm. The height of strip is one standard deviation.	(b) Immediate cost $c_2(\bar{x}_2, \bar{u}_2)$ in complex scenario with obstacles. Interpretation is the same as (a).}
\end{figure}

To verify the effectiveness of distributed LSOC algorithms in larger MAS with more agents, we consider a line-shape network consisting of nine UAVs as shown in \hyperref[fig12]{Figure~12}. These nine UAVs $i = \{1, 2, \cdots, 9\}$ are initially distributed at $x_i^0 = (10, 100 - 10i, 0.5, 0)$ as shown in \hyperref[fig13]{Figure~13}, and they can be divided into three groups based on their exit states. UAV 1 to 6 share a same exit state $A$ at $x_{1:6}^{t_f} = (90, 65, 0, 0)$; the exit state $B$ of UAV 7 and 8 is at $x_{7:8}^{t_f} = (90, 25, 0, 0)$, and UAV 9 is expected to exit at state $C$, $x_9^{t_f} = (90, 10, 0, 0)$, where the exit time $t_f = 40 \textrm{ sec}$. As exhibited in \hyperref[fig12]{Figure~12}, UAVs from different groups are either loosely coupled through their terminal cost functions or mutually independent, where the latter scenario does not require any communication between agents. A state-related cost function $q_i(\bar{x}_i)$ in subsystem $\bar{\mathcal{N}}_i$ is designed to consider and optimize the distances between neighboring agents and towards the exit state of agent $i$: 
\begin{align*}
	q_i(\bar{x}_i) = w_{ii} \cdot (\|(x_i, y_i) - (x_i^{t_f}, y_i^{t_f})\|_2 - d_i^{\max})  &+ w_{i, i-1} \cdot (\|(x_i, y_i) - (x_{i-1}, y_{i-1})\| - d^{\max}_{i, i-1})  \\
	&+ w_{i, i+1} \cdot (\|(x_i, y_i) - (x_{i+1}, y_{i+1})\| - d^{\max}_{i, i+1}),
\end{align*}
where $w_{i,j}$ is the weight related to the distance between agent $i$ and $j$; $w_{i, j} = 0$ when $j = 0$ or $10$; $d_{i,j}^{\max}$ is the regularization term for numerical stability, which is assigned by the initial distance between agents $i$ and $j$ in this demonstration, and the remaining notations and parameters are the same as the assignments in~\eqref{UAV_RunningCost} and the first example in this subsection if not explicitly stated. Trajectories of UAV team subject to two distributed LSOC algorithms, \hyperref[alg2]{Algorithm~2} and \hyperref[alg3]{Algorithm~3} in \hyperref[appE]{Appendix~E}, are presented in \hyperref[fig13]{Figure~13}. For some network structures, such as line, loop, star and complete binary tree, in which the scale of every factorial subsystem is tractable, increasing the total number of agents in network will not dramatically boost the computational complexity on local agents thanks to the distributed LSOC framework proposed in this paper. Verification along with more simulation examples on the generalization of distributed LSOC controllers discussed in \hyperref[sec3.3]{Section~3.3} is supplemented in \cite{Song_arxiv_2020}.

\vspace{1em}

\begin{figure}[H]
	\centering 
	\includegraphics[width=0.68\textwidth]{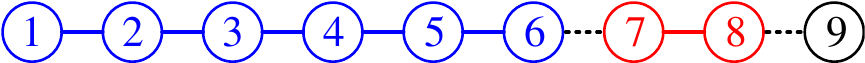}
	\caption{Communication network of a UAV team with nine agents. UAV 1 to 6, as well as UAV 7 and 8 are strongly coupled (represented by solid lines) through their immediate cost functions. UAV 6 and 7, as well as UAV 8 and 9 are loosely coupled (represented by dashed lines) through their terminal cost functions.}\label{fig12}
\end{figure}

\begin{figure}[H]
	\centering
	\includegraphics[width=0.90\textwidth]{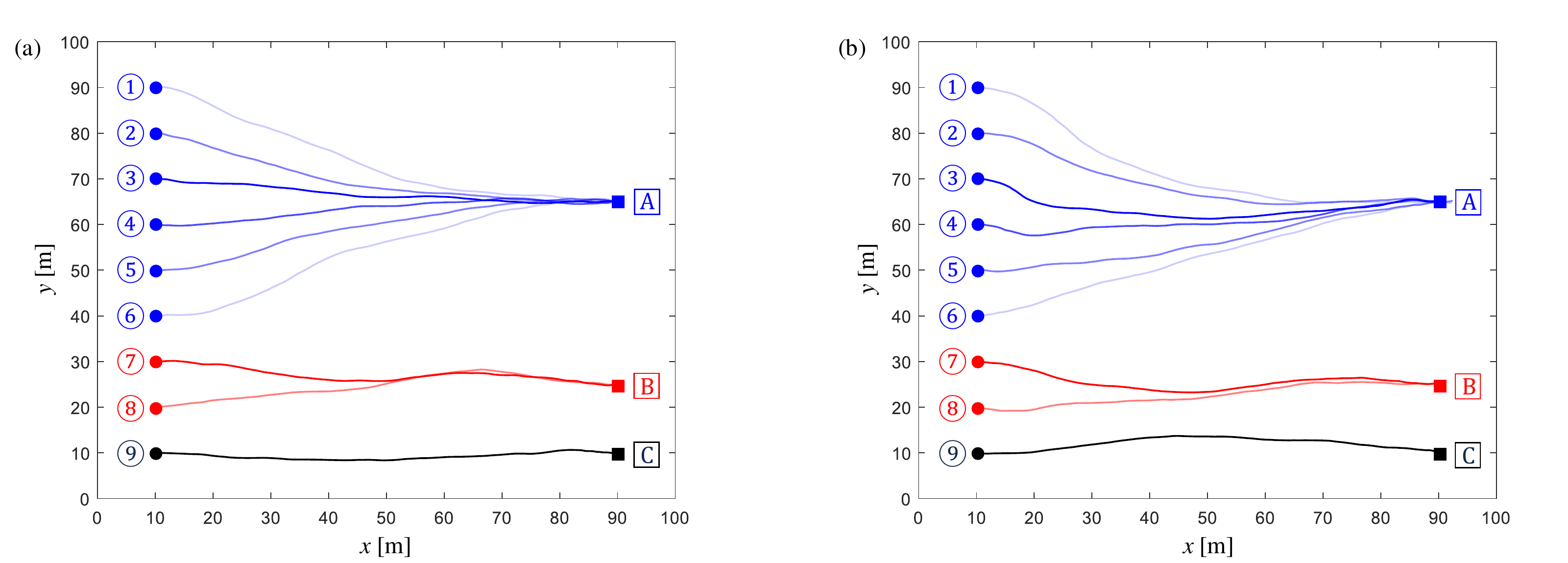}\label{fig13}
	\caption{UAV trajectories subject to two distributed LSOC algorithms. (a) Trajectories of UAV team controlled by the sampling-based distributed LSOC algorithm with $w_{1,2} = w_{2,3} = w_{3,4} = w_{6,5} = w_{7,8} = w_{8,7} = w_{99} =1$, $w_{ii} = w_{4, 3} = w_{4, 5} = w_{5, 4} = w_{5, 6} = 0.5$, and $w_{2, 1} = w_{3, 2} = w_{6, 7} = w_{7, 6} = w_{9,8} = 0$. (b) Trajectories of UAV team controlled by the REPS-based distributed LSOC algorithm with $w_{1,2} = w_{2,3} = w_{3,4} = w_{6,5} = w_{7,8} = w_{8,7} = w_{99} =0.2$, $w_{ii} = w_{4, 3} = w_{4, 5} = w_{5, 4} = w_{5, 6} = 0.1$, and $w_{2, 1} = w_{3, 2} = w_{6, 7} = w_{7, 6} = w_{9,8} = 0$.}
\end{figure}

Many interesting problems remain unsolved in the area of distributed (linearly solvable) stochastic optimal control and deserve further investigation. Most of existing papers, including this paper, assumed that the passive and controlled dynamics of different agents are mutually independent. However, when we consider some practical constraints, such as the collisions between different UAVs, the passive and controlled dynamics of different agents are usually not mutually dependent. Meanwhile, while this paper considered the scenario when all states of local agents are fully observable, it will be interesting to study the MAS of partially observable agents with hidden states. Lastly, distributed LSOC algorithms subject random network, communication delay, infinite horizon or discounted cost are also worth our attention.

\section{Conclusion}\label{sec5}
Discrete-time and continuous-time distributed LSOC algorithms for networked MASs have been investigated in this paper. A distributed control framework based on factorial subsystems has been proposed, which allows to optimize the joint state or cost function between neighboring agents with local observation and tractable computational complexity. Under this distributed framework, the discrete-time multi-agent LSMDP problem was addressed by respectively solving the local systems of linear equations in each subsystem, and a parallel programming scheme was proposed to decentralize and expedite the computation. The optimal control action/policy for continuous-time multi-agent LSOC problem was formulated as a path integral, which was approximated by a distributed sampling method and a distributed REPS method, respectively. Numerical examples of coordinated UAV teams were presented to verify the effectiveness and advantages of these algorithms, and some open problems were given at the end of this paper.

\section*{Acknowledgments}
This work was supported in part by NSF-NRI, AFOSR, and ZJU-UIUC Institute Research Program. The authors would like to appreciate the constructive comments from Dr. Hunmin Lee and Gabriel Haberfeld. In this arXiv version, the authors would also like to thank the readers and staff on arXiv.org.

\section*{Conflict of Interest Statement}
The authors have agreed to publish this article, and we declare that there is no conflict of interests regarding the publication of this article.

\section*{Appendix A: Proof for Theorem 1}\label{appA}

\noindent \textit{Proof for \hyperref[thm1]{Theorem 1}}: Substituting the joint running cost function~\eqref{eq4} into the joint Bellman equation~\eqref{CompBellman}, and by the definitions of KL-divergence and exponential transformation~\eqref{ExpTrans}, we have
\begin{equation}\label{eqA1}
	\begin{split}
		V_i(\bar{x}_i) & = \min_{\bar{u}_i} \left\{  q_i(\bar{x}_i)    + \textrm{KL}(\bar{u}_i(\cdot | \bar{x}_i) \ \| \ \bar{p}_i(\cdot | \bar{x}_i)) + \mathbb{E}_{\bar{x}'_i \sim \bar{u}_i(\cdot | \bar{x}_i)}[V_i(\bar{x}'_i)]  \right\}\\
		&= \min_{\bar{u}_i} \left\{ q_i(\bar{x}_i)    + \mathbb{E}_{\bar{x}'_i \sim \bar{u}_i(\cdot | \bar{x}_i)} \left[\log \dfrac{\bar{u}_i(\bar{x}'_i | \bar{x}_i)}{\bar{p}_i(\bar{x}'_i | \bar{x}_i)}\right] + \mathbb{E}_{\bar{x}' \sim \bar{u}_i (\cdot | \bar{x}_i)}\left[\log \frac{1}{Z_i(\bar{x}'_i)}\right] \right\}\\
		& = \min_{\bar{u}_i} \left\{  q_i(\bar{x}_i) +  \mathbb{E}_{\bar{x}'_i \sim \bar{u}_i(\cdot | \bar{x}_i)} \left[\log \dfrac{\bar{u}_i(\bar{x}'_i|\bar{x}_i)}{\bar{p}_i(\bar{x}'_i|\bar{x}_i) Z_i(\bar{x}'_i)}\right]   \right\}. 
	\end{split}
\end{equation}
The optimal policy will be straightforward if we can rewrite the expectation on the RHS of~\eqref{eqA1} as a KL-divergence and exploit the minimum condition of KL-divergence. While the control mapping $\bar{u}_i(\bar{x}_i'|\bar{x}_i)$  in~\eqref{eqA1} is a probability distribution, the denominator $p(x'|x)Z(x')$ is not necessarily a probability distribution. Hence, we define the following normalized term
\begin{equation}\label{eqA2}
	\mathcal{W}_i(\bar{x}_i) = \sum_{\bar{x}'_i}\bar{p}_i(\bar{x}'_i|\bar{x}_i)Z_i(\bar{x}'_i). 
\end{equation}
Since $p(x'|x)Z(x')/\mathcal{W}_i(\bar{x}_i)$ is a well-defined probability distribution, we can rewrite the joint Bellman equation~\eqref{eqA1} as follows
\begin{equation}\label{eqA3}
	\begin{split}
		V_i(\bar{x}_i) & = \min_{\bar{u}_i} \left\{  q_i(\bar{x}_i)   +  \mathbb{E}_{\bar{x}'_i \sim \bar{u}_i(\cdot | \bar{x}_i)} \left[  \log \dfrac{\bar{u}_i(\bar{x}'_i | \bar{x}_i)}{\bar{p}_i(\bar{x}'_i | \bar{x}_i) Z_i(\bar{x}'_i) / \mathcal{W}_i(\bar{x}_i)}  - \log \mathcal{W}_i(\bar{x}_i)  \right]   \right\}\\
		& = \min_{\bar{u}_i} \left\{ q_i(\bar{x}_i)   - \log \mathcal{W}_i(\bar{x}_i) + \mathrm{KL}\left( \bar{u}_i(  \cdot  | \bar{x}_i) \  \Bigg\| \ \frac{\bar{p}_i(\cdot | \bar{x}_i)Z_i(\cdot)}{\mathcal{W}_i(\bar{x}_i)} \right) \right\},
	\end{split}
\end{equation}
where only the last term depends on the joint control action $\bar{u}_i( \cdot  | \bar{x}_i)$. According to the minimum condition of KL-divergence, the last term in~\eqref{eqA3} attains its absolute minimum at $0$ if and only if 
\begin{equation*}
	\bar{u}_i^*(\cdot | \bar{x}_i) = \frac{\bar{p}_i(\cdot | \bar{x}_i)Z_i(\cdot)}{\mathcal{W}_i(\bar{x}_i)},
\end{equation*}
which gives the optimal control action~\eqref{OptimalControl} in~\hyperref[thm1]{Theorem~1}. By substituting~\eqref{OptimalControl} into~\eqref{eqA3}, we can minimize the RHS of joint Bellman equation~\eqref{CompBellman} and remove the minimum operator:
\begin{equation}\label{eqA5}
	V_i(\bar{x}_i) = q_i(\bar{x}_i) - \log \mathcal{W}_i(\bar{x}_i).
\end{equation}
Exponentiating both sides of~\eqref{eqA5} and substituting~\eqref{ExpTrans} and~\eqref{eqA2} into the result, the Bellman equation~\eqref{eqA5} can be rewritten as a linear equation with respect to the desirability function
\begin{equation}
	\begin{split}
		Z_i(\bar{x}_i) = \exp[-V_i(\bar{x}_i)] = \exp[-q_i(\bar{x}_i)] \cdot \mathcal{W}_i(\bar{x}_i) =  \exp[-q_i(\bar{x}_i)] \cdot \sum_{\bar{x}'_i}\bar{p}_i(\bar{x}'_i|\bar{x}_i)Z_i(\bar{x}'_i),
	\end{split}
\end{equation}
which implies~\eqref{eq_prop1} in~\hyperref[thm1]{Theorem~1}. This completes the proof. \qed

\section*{Appendix B: Proof for Theorem 2}\label{appB}

Before we present the proof for \hyperref[thm2]{Theorem~2}, Feynman–Kac formula that builds an important relationship between parabolic PDEs and stochastic processes is introduced as \hyperref[lem1]{Lemma~4}.

\begin{lemma}\label{lem1}[Feynman–Kac formula]
	Consider the Kolmogorov backward equation (KBE) described as follows
	\begin{equation*}
		\partial_t Z(x, t) = \frac{q(x, t)}{\lambda} \cdot Z(x, t) - f(x, t)^\top \cdot \nabla_x Z(x, t) - \frac{1}{2}\textrm{tr}\left( B(x)\sigma \sigma^\top B(x)^\top \cdot \nabla_{xx}Z(x, t) \right),
	\end{equation*}
	where the terminal condition is given by $Z(x, t_f) = \exp[ -\phi(x({t_f})) / \lambda]$. Then the solution to this KBE can be written as a conditional expectation
	\begin{equation*}
		Z(x, t) = \mathbb{E}_{x, t}\left[ \exp\left(  -\frac{1}{\lambda} \phi(y(t_f))  \right) - \frac{1}{\lambda} \int_{t}^{t_f} q(y, \tau) \ d\tau \ \Big| \ y(t) = x \right],
	\end{equation*}
	under the probability measure that $y$ is an It\^{o} diffusion process driven by the equation $d y(\tau) = f(y,\tau) d\tau + B(y) \sigma \cdot  dw(\tau)$ with initial condition $y(t) = x$. \qed
\end{lemma}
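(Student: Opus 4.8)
The plan is to establish the Feynman--Kac representation by applying It\^o's lemma to a discounted functional of the diffusion $y(\tau)$ and showing that this functional is a martingale exactly when $Z$ solves the KBE. Fixing the initial pair $(x,t)$, I would let $y(\tau)$, $\tau \in [t, t_f]$, solve $dy(\tau) = f(y,\tau)\,d\tau + B(y)\sigma\,dw(\tau)$ with $y(t) = x$, and introduce the discount factor $G(\tau) = \exp\left(-\frac{1}{\lambda}\int_t^\tau q(y(s),s)\,ds\right)$, which has finite variation with $dG(\tau) = -\frac{1}{\lambda}q(y(\tau),\tau)G(\tau)\,d\tau$. I would then define the candidate process $M(\tau) = Z(y(\tau),\tau)\,G(\tau)$ and expand $dM$ by the It\^o product rule; since $G$ carries no Brownian part, the cross-variation term $dZ\,dG$ vanishes and $dM = G\,dZ + Z\,dG$.

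Next I would apply It\^o's formula to $Z(y(\tau),\tau)$, using the dynamics of $y$ and the identity $(B\sigma)(B\sigma)^\top = B\sigma\sigma^\top B^\top$, to obtain the drift $\left[\partial_\tau Z + f^\top\nabla_x Z + \frac{1}{2}\textrm{tr}(B\sigma\sigma^\top B^\top\nabla_{xx}Z)\right]d\tau$ plus the stochastic term $\nabla_x Z^\top B\sigma\,dw$. The crux of the argument is that the KBE is precisely the statement that this drift equals $\frac{q}{\lambda}Z\,d\tau$. Substituting this into $dM$, the deterministic part becomes $G\frac{q}{\lambda}Z\,d\tau - \frac{1}{\lambda}qZG\,d\tau = 0$, so $dM = G\,\nabla_x Z^\top B\sigma\,dw$ is driftless. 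This exact cancellation, which is where the specific form of the KBE enters, reduces the problem to reading off the boundary values of a martingale.

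Finally, granting that $M$ is a genuine martingale on $[t, t_f]$, I would use $M(t) = \mathbb{E}_{x,t}[M(t_f)]$. Since $G(t) = 1$, the left side is $Z(x,t)$; since the terminal condition gives $Z(y(t_f),t_f) = \exp[-\phi(y(t_f))/\lambda]$, the right side is $\mathbb{E}_{x,t}\left[\exp\left(-\frac{1}{\lambda}\phi(y(t_f)) - \frac{1}{\lambda}\int_t^{t_f}q(y,\tau)\,d\tau\right)\right]$, which is the claimed representation (with the running-cost integral appearing inside the exponential, matching \eqref{Z_Solution}). The main obstacle is the rigor of this last step: It\^o's lemma only yields that $M$ is a \emph{local} martingale, so promoting it to a true martingale requires integrability hypotheses --- e.g. linear growth of $f$ and $B\sigma$ with sufficient regularity of $Z$, together with the nonnegativity of $q$ and $\phi$ that bounds the exponential factors by one. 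I would handle this by the standard localization argument: stop $M$ at exit times $\tau_n \uparrow t_f$ of an increasing sequence of compact sets, apply the martingale identity to $M(\tau_n)$, and pass to the limit via dominated convergence.
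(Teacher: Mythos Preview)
Your proof is correct and follows the standard martingale verification argument for the Feynman--Kac formula. However, there is nothing to compare against: the paper does not prove this lemma. It is introduced as a cited background result (``Feynman--Kac formula that builds an important relationship between parabolic PDEs and stochastic processes is introduced as Lemma~4'') and the statement simply ends with \qedsymbol, with no proof supplied anywhere in the paper or its appendices. The lemma is then invoked as a black box in the proof of Theorem~2 to pass from the linearized PDE \eqref{Z_Function} to the expectation formula \eqref{Z_Solution}.

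Your sketch --- form $M(\tau) = Z(y(\tau),\tau)\exp\bigl(-\tfrac{1}{\lambda}\int_t^\tau q\,ds\bigr)$, apply It\^o plus the product rule, observe that the KBE is exactly the condition that the drift of $M$ vanishes, then equate $M(t)$ with $\mathbb{E}[M(t_f)]$ --- is the textbook route, and your remarks about the local-versus-true martingale gap and the localization fix are appropriate. You also correctly read through the typographical slip in the lemma statement (the running-cost integral belongs inside the exponential, as in \eqref{Z_Solution}, not subtracted outside it).
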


\noindent We now show the proof for \hyperref[thm2]{Theorem 2}.

\vspace{0.5em}

\noindent \textit{Proof for \hyperref[thm2]{Theorem 2}}: First, we show that the joint optimality equation~\eqref{ContBellman} can be formulated into the joint stochastic HJB equation~\eqref{sto_HJB} that gives an analytic expression of optimal control action~\eqref{OptimalAction}. Substituting immediate cost function~\eqref{cont_cost} into optimality equation~\eqref{ContBellman} and letting $s$ be a time step between $t$ and $t_f$, optimality equation~\eqref{ContBellman} can be rewritten as
\begin{equation}\label{EqB1}
	\begin{split}
		V_i(\bar{x}_i, t) & = \min_{\bar{u}_i} \mathbb{E}^{\bar{u}_i}_{\bar{x}_i, t} \left[ \phi_i(\bar{x}_{i},t_f) + \int_{t}^{t_f}    q_i(\bar{x}_i, \tau) + \frac{1}{2}  \bar{u}_i(\bar{x}_i, \tau)^\top \bar{R}_i\bar{u}_i(\bar{x}_i, \tau) \ d\tau \right]\\
		& =  \min_{\bar{u}_i} \mathbb{E}_{\bar{x}_i,t}^{\bar{u}_i} \left[ V_i(\bar{x}_i, s) + \int_{t}^{s}  q_i(\bar{x}_i, \tau) + \frac{1}{2}  \bar{u}_i(\bar{x}_i, \tau)^\top \bar{R}_i\bar{u}_i(\bar{x}_i, \tau) \ d\tau \right].
	\end{split}
\end{equation}
With some rearrangements and dividing both sides of~\eqref{EqB1} by $s - t > 0$, we have
\begin{equation}\label{EqB2}
	0 = \min_{\bar{u}_i} \mathbb{E}_{\bar{x}_i,t}^{\bar{u}_i} \left[ \frac{V_i(\bar{x}_i, s) - V_i(\bar{x}_i,t)}{s - t} + \frac{1}{s - t}\int_{t}^{s} q_i(\bar{x}_i, \tau)  +  \frac{1}{2} \bar{u}_i(\bar{x}_i, \tau)^\top \bar{R}_i\bar{u}_i(\bar{x}_i, \tau) \ d\tau   \right].
\end{equation}
By letting $s \rightarrow t$, the optimality equation~\eqref{EqB2} becomes
\begin{equation}\label{EqB3}
	0 = \min_{\bar{u}_i} \mathbb{E}_{\bar{x}_i,t}^{\bar{u}_i} \left[ \frac{dV_i(\bar{x}_i, t)}{dt} + q_i(\bar{x}_i, t) +  \frac{1}{2} \bar{u}_i(\bar{x}_i, t)^\top\bar{R}_i\bar{u}_i(\bar{x}_i, t) \right].
\end{equation}
Applying It\^{o}'s formula~\cite{LeGall_2016}, the differential $dV_i(\bar{x}_i, t)$ in~\eqref{EqB3} can be expanded as
\begin{equation}\label{eq54}
	dV_i(\bar{x}_i, t) = \sum_{j \in \bar{\mathcal{N}}_i}   \sum_{m=1}^{M} \frac{\partial V_i(\bar{x}_i, t)}{\partial x_{j(m)}} dx_{j(m)} + \frac{\partial V_i(\bar{x}_i, t)}{\partial t} dt + \frac{1}{2} \sum_{j, k \in \bar{\mathcal{N}}_i} \sum_{m, n = 1}^{M} \frac{\partial^2 V_i(\bar{x}_i, t)}{\partial x_{j(m)}\partial x_{k(n)}}dx_{j(m)} dx_{k(n)}.
\end{equation}
For conciseness, we will omit the indices (or subscripts) of state components, $(m)$ and $(n)$, in the following derivations. Dividing both sides of~\eqref{eq54} by $dt$, taking the expectation over all trajectories that initialized at $(\bar{x}_i^t, t)$ and subject to control action $\bar{u}_i$, and substituting the joint dynamics~\eqref{eq13} into the result, we have
\begin{equation}\label{EqB4}
	\begin{split}
		\mathbb{E}_{\bar{x}_i,t}^{\bar{u}_i}\left[ \frac{dV_i(\bar{x}_i, t)}{dt} \right] = \sum_{j \in \bar{\mathcal{N}}_i}  [f_j(x_j,t) & + B_j(x_j) u_j(\bar{x}_i, t) ]^\top \cdot \nabla_{x_j} V_i(\bar{x}_i,t)   + \frac{\partial V_i(\bar{x}_i, t)}{\partial t}  \\
		&+ \frac{1}{2} \sum_{j \in \bar{\mathcal{N}}_i}  \textrm{tr} \left( B_j(x_j)\sigma_j \sigma_j^\top B_j(x_j)^\top \cdot \nabla_{x_jx_j}V_i(\bar{x}_i, t) \right),
	\end{split}
\end{equation}
where the identity $\mathbb{E}_{\bar{x}_i,t}^{\bar{u}_i}[dx_{j(m)}dx_{k(n)}] = (\sigma_j \sigma^\top_j)_{mm} \delta_{jk} \delta_{mn} dt$ derived from the property of standard Brownian motion  $\mathbb{E}_{\bar{x}_i,t}^{\bar{u}_i}[dw_{j(m)}dw_{k(n)}] = \delta_{jk} \delta_{mn} dt$ is invoked, and operators $\nabla_{x_j}$ and $\nabla_{x_jx_j}$ follow the same definitions in~\eqref{singleHJB}. Substituting~\eqref{EqB4} into~\eqref{EqB3}, the joint stochastic HJB equation~\eqref{sto_HJB} in \hyperref[thm2]{Theorem~2} is obtained
\begin{equation*}
	\begin{split}
		-\partial_t V_i(\bar{x}_i, t) = & \min_{\bar{u}_i} \mathbb{E}_{\bar{x}_i,t}^{\bar{u}_i}   \bigg[  \sum_{j \in \bar{\mathcal{N}}_i} [f_j(x_j,t)  + B_j(x_j) u_j(\bar{x}_i, t)]^\top \cdot \nabla_{x_j} V_i(\bar{x}_i,t) + q_i(\bar{x}_i, t)   \\
		&  +  \frac{1}{2} \bar{u}_i(\bar{x}_i, t)^\top\bar{R}_i\bar{u}_i(\bar{x}_i, t) + \frac{1}{2} \sum_{j \in \bar{\mathcal{N}}_i}  \textrm{tr} \left( B_j(x_j)\sigma_j \sigma_j^\top B_j(x_j)^\top \cdot \nabla_{x_jx_j}V_i(\bar{x}_i, t)  \right) \bigg],
	\end{split}
\end{equation*}
where the boundary condition is given by $V_i(\bar{x}_i, t_f) = \phi_i(\bar{x}_i)$. The joint optimal control action $\bar{u}_i^*(\bar{x}_i, t)$ can be obtained by setting the derivative of~\eqref{sto_HJB} with respect to $\bar{u}_i(\bar{x}_i, t)$ equal to zero. When the control weights $R_j$ of each agent $j \in \bar{\mathcal{N}}_i$ are coupled, \textit{i.e.} the joint control weight matrix $\bar{R}_i$ cannot be formulated as a block diagonal matrix, the joint optimal control action for subsystem $\mathcal{\bar{N}}_i$ is given as~\eqref{OptimalAction} in~\hyperref[thm2]{Theorem~2},  $\bar{u}^*_i(\bar{x}_i, t) = -\bar{R}_i^{-1}$$\bar{B}_i(\bar{x}_i)^\top \cdot \nabla_{\bar{x}_i}V_i(x_i, t)$, where $\nabla_{\bar{x}_i}$ denotes the gradient with respect to the joint state $\bar{x}_i$. However, it is more common in practice that the joint control weight matrix is given by $\bar{R}_i = \textrm{diag}\{R_i, R_{j \in \mathcal{N}_i} \}$ as in~\eqref{cont_cost}, and the joint control cost satisfies $\frac{1}{2}\bar{u}_i^\top \bar{R}_i \bar{u}_i = \sum_{j\in\mathcal{\bar{N}}_i}\frac{1}{2}u_j^\top R_ju_j$. Setting the derivatives of~\eqref{sto_HJB} with respect to ${u}_j(\bar{x}_i, t)$ equal to zero in the latter case gives the local optimal control action of agent $j \in \mathcal{\bar{N}}_i$
\begin{equation}\label{LocOptCtrl}
	u^*_{j}(\bar{x}_i, t) = - R_j^{-1} B_j(x_j)^\top \cdot \nabla_{x_j}V_i(\bar{x}_i, t).
\end{equation}
For conciseness of derivations and considering the formulation of~\eqref{sto_HJB}, we will mainly focus on the latter scenario, $\bar{R}_i = \textrm{diag}\{R_i, R_{j \in \mathcal{N}_i} \}$, in the remaining part of this proof.

In order to solve the stochastic HJB equation~\eqref{sto_HJB} and evaluate the optimal control action~\eqref{OptimalAction}, we consider to linearize~\eqref{sto_HJB} with the Cole-Hopf transformation~\eqref{ContTrans}, $V_i(\bar{x}_i, t) = \lambda_i  \log Z(\bar{x}_i, t)$. Subject to this transformation, the derivative and gradients in~\eqref{sto_HJB} satisfy
\begin{equation}\label{der1}
	\partial_t V_i(\bar{x}_i,t) = - \lambda_i \cdot  \frac{ \partial_{t} Z_i(\bar{x}_i,t)}{Z_i(\bar{x}_i,t)},
\end{equation}
\begin{equation}\label{der2}
	\nabla_{x_j}V_i(\bar{x}_i,t) = - \lambda_i \cdot  \frac{\nabla_{x_j} Z_i(\bar{x}_i,t)}{Z_i(\bar{x}_i,t)}, 
\end{equation}
\begin{equation}\label{der3}
	\nabla_{x_jx_j} V_i(\bar{x}_i,t) = - \lambda_i \cdot \left[ \frac{\nabla_{x_jx_j} Z_i(\bar{x}_i,t)}{Z_i(\bar{x}_i,t)} - \frac{\nabla_{x_j}Z_i(\bar{x}_i,t) \cdot \nabla_{x_j}Z_i(\bar{x}_i,t)^\top}{Z_i(\bar{x}_i,t)^2} \right].
\end{equation}
Equalities~\eqref{der2} and~\eqref{der3} will still hold when we replace the agent's state $x_j$ in $\nabla_{x_j}$ and $\nabla_{x_j x_j}$ by the joint state $\bar{x}_i$ and reformulate the joint HJB \eqref{sto_HJB} in a more compact form. Substituting local optimal control action~\eqref{LocOptCtrl}, and gradients~\eqref{der2} and \eqref{der3} into the stochastic HJB equation~\eqref{sto_HJB}, the corresponding terms of each agent $j \in \mathcal{\bar{N}}_i$ in~\eqref{sto_HJB} satisfy
\begin{align}\label{EqB5}
	[B_j(x_j)  u_j(\bar{x}_i, t)]^\top  \nabla_{x_j} &  V_i(\bar{x}_i, t)  +   \frac{1}{2}{u}_{j}(\bar{x}_i,t)^\top {R}_{j}  u_j(\bar{x}_i, t)  \nonumber \\
	=  & - \frac{1}{2} \nabla_{x_j}V_i(\bar{x}_i, t)^\top \cdot B_j(x_j)R_j^{-1} B_j(x_j)^\top \cdot \nabla_{x_j}  V_i(\bar{x}_i, t) \\
	=  & \ \frac{-\lambda_i^2}{2 \cdot Z_i(\bar{x}_i, t)^2} \cdot \nabla_{x_j} Z_i(\bar{x}_i,t)^\top \cdot B_j(x_j)R_j^{-1} B_j(x_j)^\top \cdot \nabla_{x_j} Z_i(\bar{x}_i,t), \nonumber
\end{align}
\begin{align}\label{EqB6}
	\frac{1}{2} \textrm{tr}   \big(  B_j(x_j)\sigma_j & \sigma_j^\top  B_j(x_j)^\top  \cdot \nabla_{x_jx_j} V_i(\bar{x}_i, t)  \big) \nonumber\\
	= & \ \frac{ -\lambda_i}{2 \cdot Z_i(\bar{x}_i,t)} \cdot \textrm{tr}\left( B_j(x_j)\sigma_j \sigma_j^\top B_j(x_j)^\top \cdot \nabla_{x_jx_j} Z_i(\bar{x}_i,t) \right) \\
	& + \frac{\lambda_i}{2 \cdot Z_i(\bar{x}_i,t)^2} \cdot \textrm{tr} \left( B_j(x_j)\sigma_j \sigma_j^\top B_j(x_j)^\top \cdot \nabla_{x_j}Z_i(\bar{x}_i,t) \cdot \nabla_{x_j}Z_i(\bar{x}_i,t)^\top  \right). \nonumber
\end{align}
By the properties of trace operator, the quadratic terms in~\eqref{EqB5} and~\eqref{EqB6} will be canceled if 
\begin{equation}\label{EqB7}
	\sigma_{j} \sigma^\top_{j} = \lambda_i R_j^{-1},
\end{equation}
\textit{i.e.} $R_j = (\sigma_{j} \sigma^\top_{j} / \lambda_i)^{-1}$, which is equivalent to the condition $\bar{\sigma}_i\bar{\sigma}_i^\top = \lambda_i \bar{R}^{-1}_i$ or $\bar{R}_i = (\bar{\sigma}_i\bar{\sigma}_i^\top / \lambda_i)^{-1}$ subject to joint dynamics~\eqref{eq13}. Substituting~\eqref{OptimalAction}, \eqref{der1}, \eqref{EqB5} and~\eqref{EqB6} into stochastic HJB equation~\eqref{sto_HJB}, we then remove the minimization operator and obtain the linearized PDE as~\eqref{Z_Function} in \hyperref[thm2]{Theorem~2}
\begin{equation*}
	\partial_{t} Z_i(\bar{x}_i, t) = \left[\frac{q_i(\bar{x}_i, t)}{\lambda_i} - \sum_{j\in\bar{\mathcal{N}}_i} f_j(x_j, t)  \nabla_{x_j} - \frac{1}{2}\sum_{j \in \bar{\mathcal{N}}_i} \textrm{tr}\left( B_j(x_j)\sigma_j \sigma_j^\top B_j(x_j)^\top  \nabla_{x_jx_j} \right)  \right]Z_i(\bar{x}_i,t),
\end{equation*}
where the boundary condition is given by $Z_i(\bar{x}_i, t_f) = \exp[- \phi_i(\bar{x}_i) / \lambda_i]$. Once the value of desirability function $Z_i(\bar{x}_i, t)$ in~\eqref{Z_Function} is solved, we can readily figure out the value function $V_i(\bar{x}_i, t)$ and joint optimal control action from \eqref{ContTrans} and~\eqref{OptimalAction}, respectively. Invoking the Feynman–Kac formula introduced in \hyperref[lem1]{Lemma~1}, a solution to~\eqref{Z_Function} can be formulated as~\eqref{Z_Solution} in \hyperref[thm2]{Theorem~2}
\begin{equation*}
	Z_i(\bar{x}_i, t) = \mathbb{E}_{\bar{x}_i,t}\left[ \exp\left( -\frac{1}{\lambda_i} \phi_i(\bar{y}^{t_f}_i) -\frac{1}{\lambda_i} \int_{t}^{t_f}q_i(\bar{y}_i, \tau) \ d\tau \right) \right],
\end{equation*}
where $\bar{y}(t)$ satisfies the uncontrolled dynamics $d \bar{y}_i(\tau) = \bar{f}_i(\bar{y}_i,\tau) d\tau + \bar{B}_i(\bar{y}_i)   \bar{\sigma}_i  \cdot d\bar{w}_i(\tau)$ with initial condition $\bar{y}_i(t) = \bar{x}_i(t)$. This completes the proof. \qed

\section*{Appendix C: Proof of Proposition 3}\label{appC}
\noindent \textit{Proof of \hyperref[prop3]{Proposition~3}}: First, we formulate the desirability function~\eqref{Z_Solution} as a path integral shown in~\eqref{Prop3E1}.  Partitioning the time interval from $t$ to $t_f$ into $K$ intervals of equal length $\varepsilon > 0$, $t = t_0 < t_1 < \cdots < t_K = t_f$, we can rewrite~\eqref{Z_Solution} as the following path integral
\begin{equation}\label{EqC1}
	\begin{split}
		Z_i(\bar{x}_i,t) = \ & \mathbb{E}_{\bar{x}_i,t}\left[ \exp\left( -\frac{1}{\lambda_i} \phi_i(\bar{y}^{t_f}_i) -\frac{1}{\lambda_i} \int_{t}^{t_f} q_i(\bar{y}_i, \tau) \ d\tau \right)  \right] \\
		= \ & \int d\bar{x}^{(1)}_i \cdots \int  \exp\left( -\frac{1}{\lambda_i} \phi_i(\bar{x}^{(K)}_i) \right) \cdot \prod_{k=0}^{K-1} Z_i(\bar{x}_i^{(k+1)}, t_{k+1}; \bar{x}_i^{(k)}, t_k) \ d\bar{x}_i^{(K)},
	\end{split}
\end{equation}
where the integral of variable $\bar{x}_i^{(k)}$ is over the set of all joint uncontrolled trajectories $\bar{x}_i(\tau)$ on time interval $[t_{k-1}, t_k)$ and with initial condition $\bar{x}_i^{(0)} = \bar{x}_i(t_0) = \bar{x}_i(t)$, which can be measured by agent $i$ at initial time $t$, and the function $Z_i(\bar{x}_i^{(k+1)}, t_{k+1}; \bar{x}_i^{(k)}, t_k)$ is implicitly defined by
\begin{equation}\label{EqC1.5}
	\begin{split}
		\int f(\bar{x}_{i}^{(k+1)}) \cdot & Z_i(\bar{x}_i^{(k+1)},  t_{k+1};  \bar{x}_i^{(k)}, t_k) \ d\bar{x}_i^{(k+1)} \\
		& = \mathbb{E}_{\bar{x}_i^{(k)}, t_k}\left[ f(\bar{x}_i^{(k+1)}) \cdot \exp\left( -\frac{1}{\lambda_i}\int_{t_i}^{t_{i+1}} q_i(\bar{y}_i, \tau) \ d\tau \right)  \Big| \ \bar{y}_i(t_k) = \bar{x}_i^{(k)} \right]
	\end{split}
\end{equation}
for arbitrary functions $f(\bar{x}_i^{(k+1)})$. Based on definition~\eqref{EqC1.5} and in the limit of infinitesimal $\varepsilon$, the function $Z_i(\bar{x}_i^{(k+1)}, t_{k+1} ; \bar{x}_i^{(k)}, t_k)$ can be approximated by 
\begin{equation}\label{EqC2}
	Z_i(\bar{x}_i^{(k+1)}, t_{k+1} ; \bar{x}_i^{(k)}, t_k) = p_i(\bar{x}_i^{(k+1)}, t_{k+1} | \bar{x}_i^{(k)}, t_k) \cdot \exp\left(-\frac{\varepsilon}{\lambda_i} \cdot q_i(\bar{x}_i^{(k)}, t_k)    \right),
\end{equation}
where $p_i(\bar{x}_i^{(k+1)}, t_{k+1} | \bar{x}_i^{(k)}, t_k)$ is the transition probability of uncontrolled dynamics from state-time pair $(\bar{x}_i^{(k)}, t_k)$ to $(\bar{x}_i^{(k+1)}, t_{k+1})$ and can be factorized as follows
\begin{equation}\label{EqC2_5}
	\begin{split}
		p_i(\bar{x}_i^{(k+1)}, t_{k+1} | \bar{x}_i^{(k)}, t_k) & =  p_i(\bar{x}_{i(n)}^{(k+1)}, t_{k+1} | \bar{x}_i^{(k)}, t_k) \cdot p_i(\bar{x}_{i(d)}^{(k+1)}, t_{k+1} | \bar{x}_{i}^{(k)}, t_k)\\
		&  = p_i(\bar{x}_{i(n)}^{(k+1)}, t_{k+1} | \bar{x}_{i(d)}^{(k)}, \bar{x}_{i(n)}^{(k)},  t_k) \cdot p_i(\bar{x}_{i(d)}^{(k+1)}, t_{k+1} | \bar{x}_{i(d)}^{(k)}, \bar{x}_{i(n)}^{(k)}, t_k)\\
		& \propto p_i(\bar{x}_{i(d)}^{(k+1)}, t_{k+1} | \bar{x}_{i}^{(k)}, t_k),\\		
	\end{split}
\end{equation}
where $p_i(\bar{x}_{i(n)}^{(k+1)}, t_{k+1} | \bar{x}_{i(d)}^{(k)}, \bar{x}_{i(n)}^{(k)},  t_k)$ is a Dirac delta function, since $\bar{x}_{i(n)}^{(k+1)}$ can be deterministically calculated from $\bar{x}_{i(n)}^{(k)}$ and $\bar{x}_{i(d)}^{(k)}$. Provided the directly actuated uncontrolled dynamics from~\eqref{Partitioned_dynamics}
\begin{equation*}
	\bar{x}_{i(d)}^{(k+1)} - \bar{x}_{i(d)}^{(k)} = \bar{f}_{i(d)}(\bar{x}^{(k)}_i, t_k) \varepsilon  + \bar{B}_{i(d)}(\bar{x}^{(k)}_i) \cdot \bar{\sigma}_i \bar{w}_i
\end{equation*}
with Brownian motion $\bar{w}_i \sim \mathcal{N}(0, \varepsilon {I}_{M})$, the directly actuated states $\bar{x}_{i(d)}^{(k)}$ and $\bar{x}_{i(d)}^{(k+1)}$ satisfy Gaussian distribution $\bar{x}_{i(d)}^{(k+1)} \sim \mathcal{N}(\bar{x}_{i(d)}^{(k)} + \bar{f}_{i(d)}(\bar{x}^{(k)}_i, t_k)\varepsilon, \Sigma^{(k)}_i)$ with covariance $\Sigma^{(k)}_i = \varepsilon \bar{B}_{i(d)}(\bar{x}^{(k)}_i)  \bar{\sigma}_i  \bar{\sigma}^\top_i \cdot \bar{B}_{i(d)}(\bar{x}^{(k)}_i)^\top$. When condition $\bar\sigma_{i} \bar\sigma^\top_{i} = \lambda_i \bar{R}_i^{-1}$ in~\hyperref[thm2]{Theorem~2} is fulfilled, the covariance is $\Sigma_i^{(k)} = \varepsilon \lambda_i \bar{B}_{i(d)}(\bar{x}^{(k)}_i)  \cdot \bar{R}_i^{-1} \bar{B}_{i(d)}(\bar{x}^{(k)}_i)^\top = \varepsilon H_i^{(k)}$ with $H_i^{(k)} =  \lambda_i \bar{B}_{i(d)}(\bar{x}^{(k)}_i)  \bar{R}_i^{-1} \cdot \allowbreak \bar{B}_{i(d)}(\bar{x}^{(k)}_i)^\top = \bar{B}_{i(d)}(\bar{x}^{(k)}_i)  \bar{\sigma}_i  \bar{\sigma}^\top  \bar{B}_{i(d)}(\bar{x}^{(k)}_i)^\top$. Hence, the transition probability in~\eqref{EqC2_5} satisfies
\begin{align}\label{EqC3}
	p_i(\bar{x}_{i(d)}^{(k+1)}, & t_{k+1}  | \bar{x}_i^{(k)}, t_k)   =   \frac{1}{[{\det(2\pi \Sigma_i^{(k)}) }]^{1/2}}  \exp \left( -\frac{1}{2} \left\| \bar{x}_{i(d)}^{(k+1)} - \bar{x}_{i(d)}^{(k)} - \bar{f}_{i(d)}(\bar{x}^{(k)}_i, t_k)\varepsilon \right\|^2_{\left(\Sigma_i^{(k)}\right)^{-1}}  \right) \nonumber  \\
	= &  \frac{1}{[{\det(2\pi \Sigma_i^{(k)}) }]^{1/2}} \cdot  \exp\left( - \frac{\varepsilon}{2} \left\|\frac{ \bar{x}_{i(d)}^{(k+1)} - \bar{x}_{i(d)}^{(k)}}{\varepsilon}  - \bar{f}_{i(d)}(\bar{x}^{(k)}_i, t_k) \right\|_{\left(H_i^{(k)}\right)^{-1}}^2\right).
\end{align}
Substituting~\eqref{EqC2}, \eqref{EqC2_5} and~\eqref{EqC3} into~\eqref{EqC1} and in the limit of infinitesimal $\varepsilon$, the desirability function $Z_i(\bar{x}_i, t)$ can then be rewritten as a path integral
\begin{equation}\label{EqC4}
	Z_i(\bar{x}_i, t) = \lim_{\varepsilon \downarrow 0}Z_i^{(\varepsilon)}(\bar{x}^{(0)}_i, t_0).
\end{equation}
Defining a path variable $\bar \ell_i = (\bar{x}^{(1)}_i, \cdots, \bar{x}^{(K)}_i)$, the discretized desirability function in~\eqref{EqC4} can be expressed as
\begin{align}\label{EqC4.5}
	Z^{(\varepsilon)}_i(\bar{x}^{(0)}_i, t_0) & = \int \exp\left( - S_i^{\varepsilon, \lambda_i}(\bar{x}_i^{(0)}, \bar{\ell}_i, t_0) - \frac{1}{2}\sum_{k=0}^{K-1} \log \det(2\pi \Sigma_i^{(k)})  \right) \ d\bar{\ell}_i \\
	& = \int \exp\left( -S_i^{\varepsilon, \lambda_i}(\bar{x}_i^{(0)}, \bar{\ell}_i, t_0) - \frac{1}{2}\sum_{k=0}^{K-1}\log \det (H_i^{(k)}) - \frac{K D |\mathcal{\bar N}_i|}{2} \log (2\pi \varepsilon) \right) \ d\bar{\ell}_i \nonumber\\
	& = \int \exp\left( -\tilde{S}_i^{\varepsilon, \lambda_i}(\bar{x}_i^{(0)}, \bar{\ell}_i, t_0) - \frac{K D |\mathcal{\bar N}_i|}{2}  \log (2\pi \varepsilon) \right) \ d\bar{\ell}_i, \nonumber
\end{align}
where $S_i^{\varepsilon, \lambda_i}(\bar{x}_i^{(0)}, \bar{\ell}_i, t_0)$ is the path value for a trajectory $(\bar{x}_i^{(0)}, \cdots, \bar{x}_i^{(K)})$ starting at space-time pair $(\bar{x}_i, t)$ or $(\bar{x}_i^{(0)}, t_0)$ and takes the form of
\begin{equation*}
	S^{\varepsilon, \lambda_i}_i(\bar{x}^{(0)}_i, \bar{\ell}_i, t_0) = \frac{\phi_i(\bar{x}^{(K)}_i)}{\lambda_i} + \varepsilon \sum_{k=0}^{K-1} \Bigg[ \frac{q_i(\bar{x}^{(k)}_i, t_k)}{\lambda_i}   + \frac{1}{2} \left\|  \frac{ \bar{x}_{i(d)}^{(k+1)} - \bar{x}_{i(d)}^{(k)}}{\varepsilon} -  \bar{f}_{i(d)}(\bar{x}^{(k)}_{i}, t_k) \right\|^2_{\left(H_i^{(k)}\right)^{-1}}  \Bigg];
\end{equation*}
$\tilde{S}_i^{\varepsilon, \lambda_i}(\bar{x}_i^{(0)}, \bar{\ell}_i, t_0)$ is the generalized path value and satisfies $\tilde{S}_i^{\varepsilon, \lambda_i}(\bar{x}_i^{(0)}, \bar{\ell}_i, t_0) = S_i^{\varepsilon, \lambda_i}(\bar{x}_i^{(0)}, \bar{\ell}_i, t_0)  \allowbreak + \frac{1}{2}\sum_{k=0}^{K-1} \allowbreak \log \det(H_i^{(k)})$, and the constant $KD|\bar{\mathcal{N}}_i| / 2 \cdot \log (2\pi \varepsilon)$ in~\eqref{EqC4.5} is related to the numerical stability, which demands a careful choice of $\varepsilon$ and a fine partition  over $[t, t_f)$. Identical to the expectation in~\eqref{Z_Solution} and the integral in~\eqref{EqC1}, the integral in~\eqref{EqC4.5} is subject to the set of all uncontrolled trajectories $\bar{x}_i(\tau)$ initialized at~$(\bar{x}_i, t)$ or $(\bar{x}_i^{(0)}, t_0)$. Summarizing the preceding deviations, \eqref{Prop3E1} and~\eqref{Prop3E2} in~\hyperref[prop3]{Proposition~3} can be restored.

Substituting gradient~\eqref{der2} and discretized desirability function~\eqref{EqC4} into the joint optimal control action~\eqref{OptimalAction}, we have
\begin{align}\label{EqC6}
	\bar{u}^*_{i}(\bar{x}_i, t) & = \lambda_i \bar{R}_i^{-1} \bar{B}_i(\bar{x}_i)^\top \frac{\nabla_{\bar{x}_i} Z_i(\bar{x}_i,t)}{Z_i(\bar{x}_i,t)} = \bar\sigma_i \bar\sigma_i^\top \bar{B}_i(\bar{x}_i)^\top \frac{\nabla_{\bar{x}_i} Z_i(\bar{x}_i,t)}{Z_i(\bar{x}_i,t)}  \\ 
	&= \lambda_i \bar{R}_i^{-1} \bar{B}_i(\bar{x}_i)^\top \cdot \lim_{\varepsilon \downarrow 0}  \frac{\nabla_{\bar{x}^{(0)}_i}\int \exp[-\tilde{S}_i^{\varepsilon, \lambda_i}(\bar{x}_i^{(0)}, \bar{\ell}_i, t_0)- K D |\mathcal{\bar N}_i| / 2 \cdot \log (2\pi \varepsilon) ] \ d\bar{\ell}_i }{\int \exp[-\tilde{S}_i^{\varepsilon, \lambda_i}(\bar{x}_i^{(0)}, \bar{\ell}_i, t_0)- K D |\mathcal{\bar N}_i| / 2 \cdot \log (2\pi \varepsilon)] \ d\bar{\ell}_i} \allowdisplaybreaks  \nonumber\\ 
	& \neweq{(a)} \lambda_i \bar{R}_i^{-1} \bar{B}_i(\bar{x}_i)^\top \cdot \lim_{\varepsilon \downarrow 0}   \frac{ \exp[- K D |\mathcal{\bar N}_i| / 2 \cdot \log(2\pi \varepsilon)] \cdot \nabla_{\bar{x}^{(0)}_i} \int \exp[ -\tilde{S}_i^{\varepsilon, \lambda_i}(\bar{x}_i^{(0)}, \bar{\ell}_i, t_0) ] \ d\bar{\ell}_i}{ \exp[- K D |\mathcal{\bar N}_i| / 2 \cdot \log (2\pi \varepsilon)] \cdot \int \exp[ -\tilde{S}_i^{\varepsilon, \lambda_i}(\bar{x}_i^{(0)}, \bar{\ell}_i, t_0) ] \ d\bar{\ell}_i}  \allowdisplaybreaks \nonumber\\
	& \neweq{(b)} \lambda_i \bar{R}_i^{-1} \bar{B}_i(\bar{x}_i)^\top \cdot \lim_{\varepsilon \downarrow 0}  \frac{\int  \exp[ -\tilde{S}_i^{\varepsilon, \lambda_i}(\bar{x}_i^{(0)}, \bar{\ell}_i, t_0) ] \cdot \nabla_{\bar{x}^{(0)}_i}[ -\tilde{S}_i^{\varepsilon, \lambda_i}(\bar{x}_i^{(0)}, \bar{\ell}_i, t_0) ] \ d\bar{\ell_i}}{\int \exp[ -\tilde{S}_i^{\varepsilon, \lambda_i}(\bar{x}_i^{(0)}, \bar{\ell}_i, t_0) ] \ d\bar{\ell}_i} \nonumber \\
	& \neweq{(c)} \lambda_i \bar{R}_i^{-1} \bar{B}_i(\bar{x}_i)^\top \cdot \lim_{\varepsilon \downarrow 0} \int \tilde{p}^*_i( \bar{\ell}_i | \bar{x}_i^{(0)}, t_0) \cdot \nabla_{\bar{x}^{(0)}_i}[ -\tilde{S}_i^{\varepsilon, \lambda_i}(\bar{x}_i^{(0)}, \bar{\ell}_i, t_0) ] \ d\bar{\ell}_i \nonumber \\
	& \neweq{(d)} \lambda_i \bar{R}_i^{-1} \bar{B}_{i(d)}(\bar{x}_i)^\top \cdot \lim_{\varepsilon \downarrow 0} \int \tilde{p}^*_i(\bar{\ell}_i | \bar{x}_i^{(0)}, t_0) \cdot \tilde{u}_i(\bar{x}_i^{(0)}, \bar{\ell}_i, t_0) \ d\bar{\ell}_i.  \nonumber
\end{align}
(a) follows from the fact that $\exp[- K D |\mathcal{\bar N}_i| / 2 \cdot \log (2\pi \varepsilon)]$ is independent from the path variables $(\bar{x}_i^{(0)}, \bar{\ell}_i)$; (b) employs the differentiation rule for exponential function and requires that the integrand $\exp[-\tilde{S}_i^{\varepsilon, \lambda_i}(\bar{x}_i^{(0)}, \allowbreak \bar{\ell}_i, t_0)]$ be continuously differentiable in $\varepsilon$ and along the trajectory $(\bar{x}_i^{(0)}, \bar{\ell}_i)$; (c) follows from the optimal path distribution $\tilde{p}^*_i(\bar{\ell}_i | \bar{x}_i^{(0)}, t_0)$ that satisfies
\begin{equation*}
	\tilde{p}^*_i(\bar{\ell}_i | \bar{x}_i^{(0)}, t_0) = \frac{\exp[ -\tilde{S}_i^{\varepsilon, \lambda_i}(\bar{x}_i^{(0)}, \bar{\ell}_i, t_0) ]}{\int \exp[ -\tilde{S}_i^{\varepsilon, \lambda_i}(\bar{x}_i^{(0)}, \bar{\ell}_i, t_0) ] \ d\bar{\ell}_i};
\end{equation*}
(d) follows the partitions $\bar{B}_i(\bar{x}_i) = [0, \bar{B}_{i(d)}(\bar{x}_i)^\top]^\top$ and $- \nabla_{\bar{x}^{(0)}_i}\ \tilde{S}_i^{\varepsilon, \lambda_i}(\bar{x}_i^{(0)}, \bar{\ell}_i, t_0) = [ -\nabla_{\bar{x}^{(0)}_{i(n)}} \allowbreak \tilde{S}_i^{\varepsilon, \lambda_i}(\bar{x}_i^{(0)}, \allowbreak \bar{\ell}_i, t_0)^\top,  - \nabla_{\bar{x}^{(0)}_{i(d)}}  \tilde{S}_i^{\varepsilon, \lambda_i}  (\bar{x}_i^{(0)}, \bar{\ell}_i, t_0)^\top ]^\top$, and the initial control variable $\tilde{u}_i(\bar{x}_i^{(0)}, \bar{\ell}_i, t_0)$ is determined by
\begin{align}\label{EqC7}
	&\nabla_{\bar{x}^{(0)}_{i(d)}}   \tilde{S}_i^{\varepsilon, \lambda_i}(\bar{x}_i^{(0)}, \bar{\ell}_i, t_0)   
	=  \nabla_{\bar{x}^{(0)}_{i(d)}} \bigg[  \frac{\phi_i(\bar{x}^{(K)}_i)}{\lambda_i} + \frac{\varepsilon}{\lambda_i}\sum_{k=0}^{K-1}q_i(\bar{x}^{(k)}_i, t_k) + \frac{1}{2}\sum_{k=0}^{K-1} \log \det(H_i^{(k)})    \\
	& \hspace{191pt}  + \frac{\varepsilon}{2} \sum_{k=0}^{K-1} \left\|  \frac{ \bar{x}_{i(d)}^{(k+1)} - \bar{x}_{i(d)}^{(k)}}{\varepsilon}-  \bar{f}_{i(d)}(\bar{x}^{(k)}_i, t_k) \right\|^2_{\left(H_i^{(k)}\right)^{-1}}\bigg]. \nonumber
\end{align}
In the following, we calculate the gradients in~\eqref{EqC7}. Since the terminal cost $\phi_i(\bar{x}^{(K)}_i)$ usually is a constant, the first gradient in~\eqref{EqC7} is zero, \textit{i.e.} $\nabla_{\bar{x}^{(0)}_{i(d)}} [\phi_i(\bar{x}_i^{(K)}) / \lambda_i] = 0$. When the immediate cost $q_i(\bar{x}^{(0)}_i, t_0)$ is a function of state $\bar{x}^{(0)}_{i(d)}$, the second gradient in~\eqref{EqC7} can be computed as follows
\begin{equation}\label{EqC7_5}
	\nabla_{\bar{x}^{(0)}_{i(d)}} \frac{\varepsilon}{\lambda_i}\sum_{k=0}^{K-1}q_i(\bar{x}^{(k)}_i, t_k) = \frac{\varepsilon}{\lambda_i} \nabla_{\bar{x}^{(0)}_{i(d)}} q_i(\bar{x}^{(0)}_i, t_0);
\end{equation}
when $q_i(\bar{x}^{(0)}_i, t_0)$ is not related to the value of $\bar{x}^{(0)}_i$, \textit{i.e.} a constant or an indicator function, the second gradient is then zero. The third gradient in~\eqref{EqC7} follows
\begin{equation}\label{EqC8}
	\nabla_{\bar{x}^{(0)}_{i(d)}} \frac{1}{2} \sum_{k=0}^{K-1} \log \det(H_i^{(k)})  = \frac{1}{2} \nabla_{\bar{x}^{(0)}_{i(d)}}  \log \det(H_i^{(0)}).
\end{equation}
Letting $\alpha_i^{(k)} = (\bar{x}_{i(d)}^{(k+1)} - \bar{x}_{i(d)}^{(k)}) / \varepsilon - \bar{f}_{i(d)}(\bar{x}_i^{(k)}, t_k)$ and $\beta_i^{(k)} = (H_i^{(k)})^{-1}\alpha_i^{(k)}$, the gradient of the fourth term in~\eqref{EqC7} satisfies
\begin{align}\label{EqC9}
	&\nabla_{\bar{x}_{i(d)}^{(0)}} \frac{\varepsilon}{2} \sum_{k=0}^{K-1} \left\|  \alpha_i^{(k)} \right\|^2_{\left(H_i^{(k)}\right)^{-1}}  = \frac{\varepsilon}{2} \cdot \nabla_{\bar{x}^{(0)}_{i(d)}} (\alpha_i^{(0)})^\top  \beta_i^{(0)}  \nonumber \\ 
	&  = \frac{\varepsilon}{2} \left[  \left( \nabla_{\bar{x}^{(0)}_{i(d)}}  \alpha_i^{(0)} \right) \beta_i^{(0)} + \left( \nabla_{\bar{x}_{i(d)}^{(0)}} \beta_i^{(0)} \right) \alpha_i^{(0)} \right]   \\ 
	&  =  -\frac{1}{2} \beta_i^{(0)} -  \frac{\varepsilon}{2} \left[  \left( \nabla_{\bar{x}^{(0)}_{i(d)}}\bar{f}_{i(d)}(\bar{x}_i^{(0)}, t_0)  \right)  \beta_i^{(0)} - \alpha_i^{(0)}  \nabla_{\bar{x}^{(0)}_{i(d)}} \beta_i^{(0)} \right]   \nonumber \\
	&  =- \left(H_i^{(0)}\right)^{-1} \alpha_i^{(0)}  - \varepsilon   \left[ \nabla_{\bar{x}^{(0)}_{i(d)}} \bar{f}_{i(d)}(\bar{x}_i^{(0)}) \right] \left(  H_i^{(0)}\right)^{-1} \alpha_i^{(0)}  + \frac{\varepsilon}{2} \left( \alpha_i^{(0)} \right)^\top \left[ \nabla_{\bar{x}^{(0)}_{i(d)}}  \left(  H_i^{(0)}   \right)^{-1}  \right]    \alpha_i^{(0)}. \nonumber
\end{align}
Detailed interpretations on the calculation of \eqref{EqC9} can be found in~\cite{Theodorou_JMLR_2010, Theodorou_2011}. Meanwhile, after substituting~\eqref{EqC9} into~\eqref{EqC6}, one can verify that the integrals in~\eqref{EqC6} satisfy
\begin{gather}
	\label{App2_Eq12}
	\int \tilde{p}^*_i(\bar{\ell}_i | \bar{x}_i^{(0)}, t_0) \cdot \varepsilon \left[ \nabla_{\bar{x}^{(0)}_{i(d)}} \bar{f}_{i(d)}(\bar{x}_i^{(0)}) \right] \left(H_i^{(0)}\right)^{-1}  \alpha_i^{(0)} d\ell = 0,\\
	\label{App2_Eq13}
	\int \tilde{p}^*_i(\bar{\ell}_i | \bar{x}_i^{(0)}, t_0) \cdot \frac{\varepsilon}{2} \left( \alpha_i^{(0)} \right)^\top \left[ \nabla_{\bar{x}^{(0)}_{i(d)}}  \left(  H_i^{(0)}   \right)^{-1}  \right]    \alpha_i^{(0)} d\ell = - \frac{1}{2} \nabla_{\bar{x}^{(0)}_{i(d)}} \log \det( H_i^{(0)} ).
\end{gather}
Substituting~(\ref{EqC7}-\ref{App2_Eq13}) into~\eqref{EqC6}, we obtain the initial control variable $\tilde{u}_i(\bar{x}_i^{(0)}, \bar{\ell}_i, t_0)$ as follows
\begin{equation*}
	\tilde{u}_i(\bar{x}_i^{(0)}, \bar{\ell}_i, t_0)  = -\frac{\varepsilon}{\lambda_i}\nabla_{\bar{x}^{(0)}_{i(d)}}q_i(\bar{x}^{(0)}_i, t_0) + \left(H_i^{(0)}\right)^{-1} \left(\frac{\bar{x}_{i(d)}^{(1)} - \bar{x}_{i(d)}^{(0)}}{\varepsilon} - \bar{f}_{i(d)}(\bar{x}_i^{(0)}, t_0)  \right).
\end{equation*}
This completes the proof. \qed

\section*{Appendix D: Relative Entropy Policy Search}\label{appD}
The local REPS algorithm in subsystem $\bar{\mathcal{N}}_i$ alternates between two steps, learning the optimal path distribution and updating the parameterized policy, till the convergence of the algorithm. First, we consider the learning step realized by the optimization problem~\eqref{LearningStep}. Since we want to minimize the relative entropy between the current approximate path distribution $\tilde p_i(\bar{x}_i^{(0)}, \bar{\ell}_i)$ and the optimal path distribution $\tilde{p}_i^*(\bar{x}_i^{(0)}, \bar{\ell}_i)$, the objective function of the learning step follows
\begin{equation*}
	\begin{split}
		&\arg \min_{\tilde p_i} \textrm{KL}( \tilde p_i(\bar{x}_i^{(0)}, \bar{\ell}_i) \ \| \ \tilde{p}_i^*(\bar{x}_i^{(0)}, \bar{\ell}_i) )\\
		\neweq{(a)} & \arg \max_{\tilde p_i} \int \tilde{p}_i(\bar{x}_i^{(0)}, \bar{\ell}_i) \left[ \log \tilde{p}_i^*(\bar{\ell}_i | \bar{x}_i^{(0)}) + \log \mu(\bar{x}_i^{(0)}) - \log \tilde{p}_i(\bar{x}_i^{(0)}, \bar{\ell}_i) \right] \ d\bar{x}_i^{(0)} d\bar{\ell}_i\\
		\neweq{(b)} & \arg \max_{\tilde{p}_i} \int \tilde{p}_i(\bar{x}_i^{(0)}, \bar{\ell}_i) \left[   -\tilde{S}_i^{\varepsilon, \lambda_i}(\bar{x}_i^{(0)}, \bar{\ell}_i, t_0) - \log \tilde{p}_i(\bar{x}_i^{(0)}, \bar{\ell}_i)  \right] \ d\bar{x}_i^{(0)} d\bar{\ell}_i .
	\end{split}
\end{equation*}
(a) transforms the minimization problem to a maximization problem and adopts the identity $\tilde{p}_i^*(\bar{x}^{(0)}_i, \bar{\ell}_i) = \tilde{p}_i^*(\bar{\ell}_i | \bar{x}_i^{(0)}) \cdot \mu_i(\bar{x}^{(0)}_i)$, where time arguments are generally omitted in this appendix for brevity; and (b) employs identity~\eqref{OptPathDist} and omits the terms that are independent from the path variable $\bar{\ell}_i$, since these terms have no influence on the optimization problem. In order to construct the information loss from old distribution and avoid overly greedy policy updates~\cite{Peters_CAI_2010, Gomez_KDD_2014}, we restrict the update rate with constraint
\begin{equation*}
	\int \tilde{p}_i(\bar{x}_i^{(0)}, \bar{\ell}_i) \log\frac{\tilde{p}_i(\bar{x}_i^{(0)}, \bar{\ell}_i)}{\tilde{q}_i(\bar{x}_i^{(0)}, \bar{\ell}_i)} \ d\bar{x}_i^{(0)} d\bar{\ell}_i   \leq \delta,
\end{equation*}
where $\delta > 0$ can be used as a trade-off between exploration and exploitation, and LHS is the relative entropy between the current approximate path distribution $\tilde{p}_i(\bar{x}_i^{(0)}, \bar{\ell}_i)$ and the old approximate path distribution $\tilde{q}_i(\bar{x}_i^{(0)}, \bar{\ell}_i)$. Meanwhile, the marginal distribution $\tilde{p}_i(\bar{x}_i^{(0)}) = \int \tilde{p}_i(\bar{x}_i^{(0)}, \bar{\ell}_i) \ d\bar{\ell}_i$ needs to match the initial distribution $\mu_i(\bar{x}^{(0)}_i)$, which is known to the designer. However, this condition could generate an infinite number of constraints in optimization problem~\eqref{LearningStep} and is too restrictive for practice~\cite{Kupcsik_CAI_2013, Gomez_KDD_2014, Sutton_2018}. Hence, we relax this condition by only considering to match the state feature averages of initial state
\begin{equation*}
	\int \tilde{p}_i(\bar{x}_i^{(0)}, \bar{\ell}_i) \cdot \psi_i(\bar{x}_i^{(0)}) \ d\bar{x}_i^{(0)} d\bar{\ell}_i = \int  \mu_i(\bar{x}_i^{(0)}) \cdot \psi_i(\bar{x}_i^{(0)}) \ d\bar{x}_i^{(0)} = \hat{\psi}^{(0)}_i,
\end{equation*}
where $\psi_i(\bar{x}_i^{(0)})$ is a feature vector of initial state, and $\hat{\psi}^{(0)}_i$ is the expectation of the state feature vector subject to initial distribution $\mu_i(\bar{x}^{(0)}_i)$. In general, $\psi(\bar{x}_i^{(0)})$ can be a vector made up with linear and quadratic terms of initial states, $\bar{x}_{i(m)}^{(0)}$ and $\bar{x}_{i(m)}^{(0)}\bar{x}_{i(n)}^{(0)}$, such that the mean and the covariance of marginal distribution $\tilde{p}_i(\bar{x}_i^{(0)})$ match those of initial distribution $\mu_i(\bar{x}^{(0)}_i)$. Lastly, we consider the following normalization constraint 
\begin{equation}\label{NormConstraint}
	\int \tilde{p}_i(\bar{x}_i^{(0)}, \bar{\ell}_i) \  d\bar{x}_i^{(0)} d\bar{\ell}_i = 1,
\end{equation}
which ensures that $\tilde{p}_i(\bar{x}_i^{(0)}, \bar{\ell}_i)$ defines a probability distribution.

The optimization problem~\eqref{LearningStep} can be solved analytically by the method of Lagrange multipliers. Defining the Lagrange multipliers $\kappa > 0$, $\eta \in \mathbb{R}$ and vector $\theta$, the Lagrangian is
\begin{align}\label{Lagrangian}
	\mathcal{L} = \eta + \kappa \delta +  \theta^\top \hat{\psi}_i^{(0)} +  \int \tilde{p}_i(\bar{x}_i^{(0)}, \bar{\ell}_i)\Bigg[  & -\tilde{S}_i^{\varepsilon, \lambda_i}(\bar{x}_i^{(0)}, \bar{\ell}_i)  - \log \tilde{p}_i(\bar{x}_i^{(0)}, \bar{\ell}_i) -\eta  \\
	&  -    \theta^\top \psi_{i}(\bar{x}_i^{(0)})  - \kappa \log\frac{\tilde{p}_i(\bar{x}_i^{(0)}, \bar{\ell}_i)}{\tilde{q}_i(\bar{x}_i^{(0)}, \bar{\ell}_i)} \Bigg] \ d\bar{x}_i^{(0)} d\bar{\ell}_i. \nonumber
\end{align}
We can maximize the Lagrangian $\mathcal{L}$ and derive the maximizer $\tilde{p}_i(\bar{x}_i^{(0)}, \bar{\ell}_i)$ in~\eqref{LearningStep} by letting $\partial \mathcal{L} / \partial \tilde{p}_i(\bar{x}_i^{(0)}, \bar{\ell}_i) = 0$. This condition will hold for arbitrary initial distributions $\mu_i(\bar{x}_i^{(0)})$ if and only if the derivative of the integrand in~\eqref{Lagrangian} is identically equal to zero, \textit{i.e.}
\begin{equation*}
	-\tilde{S}_i^{\varepsilon, \lambda_i}(\bar{x}_i^{(0)}, \bar{\ell}_i, t_0) - (1+\kappa)\left[1 + \log \tilde{p}_i(\bar{x}_i^{(0)}, \bar{\ell}_i)\right] -\eta - \theta^\top  \psi_{i}(\bar{x}_i^{(0)}) + \kappa\log \tilde{q}_i(\bar{x}_i^{(0)} \bar{\ell}_i) = 0,
\end{equation*}
from which we can find the maximizer $\tilde{p}_i(\bar{x}_i^{(0)}, \bar{\ell}_i)$ as shown in~\eqref{distribution}. In order to evaluate $\tilde{p}_i(\bar{x}_i^{(0)}, \bar{\ell}_i)$ in~\eqref{distribution}, we then determine the values of dual variables $\kappa, \eta$ and $\theta$ by solving the dual problem. Substituting~\eqref{distribution} into the normalization constraint~\eqref{NormConstraint}, we have the identity
\begin{equation}\label{NormCond}
	\exp\left(\frac{1+\kappa + \eta}{1 + \kappa} \right) = \int \tilde{q}_i(\bar{x}_i^{(0)}, \bar{\ell}_i)^{\frac{\kappa}{1 + \kappa}}  \cdot \exp\left( - \frac{\tilde{S}_i^{\varepsilon, \lambda_i}(\bar{x}_i^{(0)}, \bar{\ell}_i)  + \theta^\top \psi_i(\bar{x}_i^{(0)})}{1 + \kappa} \right) \ d\bar{x}_i^{(0)} d\bar{\ell}_i,
\end{equation}
which can be used to determine $\eta$ provided the values of $\kappa$ and $\theta$. To figure out the values of $\kappa$ and $\theta$, we solve the dual problem~\eqref{DualProb}, where the objective function $g(\kappa, \theta)$ in~\eqref{DualFun} is obtained by substituting~\eqref{distribution} and~\eqref{NormCond} into~\eqref{Lagrangian}
\begin{equation*}
	\begin{split}
		g(\kappa,& \theta)  \neweq{(a)} \eta + \kappa \delta +  \theta^\top \hat{\psi}_i^{(0)} + 1 + \kappa\\
		& = \kappa \delta + \theta^\top \hat{\psi}_i^{(0)} +  (1+\kappa)  \frac{1  +\kappa + \eta}{1 + \kappa}\\
		& \neweq{(b)} \kappa \delta + \theta^\top \hat{\psi}_i^{(0)} +  (1+\kappa)  \log \int \tilde{q}_i(\bar{x}_i^{(0)}, \bar{\ell}_i)^{\frac{\kappa}{1 + \kappa}}  \cdot \exp\left( - \frac{\tilde{S}_i^{\varepsilon, \lambda_i}(\bar{x}_i^{(0)}, \bar{\ell}_i)  + \theta^\top \psi_i(\bar{x}_i^{(0)})}{1 + \kappa} \right) \ d\bar{x}_i^{(0)} d\bar{\ell}_i.
	\end{split}
\end{equation*}
(a) substitutes~\eqref{distribution} into~\eqref{Lagrangian}, and (b) substitutes~\eqref{NormCond} into the result. By applying the Monte Carlo method and using the data set $\mathcal{Y}_i = \{ (\bar{x}_i^{(0)}, \bar{\ell}^{[y]}_i)  \}_{y = 1, \cdots, Y}$, the objective function~\eqref{DualFun} can be approximated from sample trajectories by 
\begin{equation*}
	g(\kappa, \theta) = \kappa \delta +  \theta^\top \hat\psi_i^{(0)} +  (1 + \kappa) \log \Bigg[ \frac{1}{Y} \sum_{y=1}^{Y} \tilde{q}_i(\bar{x}_i^{(0)}, \bar{\ell}^{[y]}_i)^{\frac{\kappa}{1 + \kappa}} 
	\exp\left( - \frac{\tilde{S}_i^{\varepsilon, \lambda_i}(\bar{x}_i^{(0)}, \bar{\ell}^{[y]}_i)  + \psi_{i}^\top(\bar{x}_i^{(0)}) \cdot \theta}{1+\kappa} \right)   \Bigg],
\end{equation*}
where $\tilde{S}_i^{\varepsilon, \lambda_i}(\bar{x}_i^{(0)}, \bar{\ell}^{[y]}_i)$ and $\psi_{i}(\bar{x}_i^{(0)})$ are respectively the generalized path value and state feature vector of sample trajectory $(\bar{x}_i^{(0)}, \bar{\ell}^{[y]}_i)$, and the old path probability $\tilde{q}_i(\bar{x}_i^{(0)}, \bar{\ell}^{[y]}_i)$ can be evaluated with the current or initial policy by
\begin{equation}\label{OldDist}
	\tilde{q}_i(\bar{x}_i^{(0)}, \bar{\ell}^{[y]}_i) = \mu_i(\bar{x}_i^{(0)}) \cdot \prod_{k = 0}^{K-1} \int  p_i(\bar{x}_i^{(k+1)}, t_{k+1} | \bar{x}_i^{(k)}, \bar{u}_i^{(k)}, t_k) \cdot \pi^{(k)}_i(\bar{u}_i^{(k)} | \bar{x}_i^{(k)}) \ d\bar{u}_i^{(k)},
\end{equation}
where the state variables $\bar{x}_i^{(0)}$, $\bar{x}_i^{(k)}$ and $\bar{x}_i^{(k+1)}$ in~\eqref{OldDist} are from sample trajectory $(\bar{x}_i^{(0)}, \bar{\ell}_i^{[y]})$; the control policy $\pi^{(k)}_i(\bar{u}_i^{(k)} | \bar{x}_i^{(k)})$ is given either as an initialization or an optimization result from updating step~\eqref{UpdatingStep}, and the controlled transition probability $p_i(\bar{x}_i^{(k+1)}, t_{k+1} | \bar{x}_i^{(k)}, \bar{u}_i^{(k)}, t_k)$ with $\bar{x}_i^{(k+1)} \sim \mathcal{N}(\bar{x}_i^{(k)} + \bar{f}_i(\bar{x}^{(k)}_i, t_k)\varepsilon + \bar{B}_i(\bar{x}_i^{(k)}) \bar{u}_i(\bar{x}_i^{(k)}, t_k) \varepsilon, \Sigma^{(k)}_i)$ can be obtained by following the similar steps when deriving the uncontrolled transition probability in~\eqref{EqC3}. When policy $\pi_i^{(k)}$ is Gaussian,~\eqref{OldDist} can be analytically evaluated.

In the policy updating step, we can find the optimal parameters $\chi_i^{*(k)}$ for Gaussian policy by minimizing the relative entropy between the joint distribution $\tilde{p}_i(\bar{x}_i^{(0)}, \bar{\ell}_i)$ from learning step~\eqref{LearningStep} and joint distribution $\tilde{p}_i^{\pi}(\bar{x}_i^{(0)}, \bar{\ell}_i)$ generated by parametric policy $\pi_i^{(k)}(\bar{u}_i^{(k)} | \bar{x}_i^{(k)}, \chi_i^{(k)} ) \allowbreak \sim \mathcal{N}(\bar{u}_i^{(k)}|\hat{a}_i^{(k)}\bar{x}_i^{(k)} + \hat{b}_i^{(k)}, \hat{\Sigma}_i^{(k)})$. To determine the policy parameters $\chi_i^{(k)}= (\hat{a}_i^{(k)}, \hat{b}_i^{(k)}, \hat\Sigma_i^{(k)})$ at time $t_k$, we need to solve the following optimization problem, which is also a weighted maximum likelihood problem
\begin{align}\label{UpdatingStep}
	\chi_i^{*(k)} & = \arg {\textstyle \min_{\chi_i^{(k)}} } \ \textrm{KL} (\tilde{p}_i(\bar{x}_i^{(0)}, \bar{\ell}_i) \ \| \ \tilde{p}_i^{\pi}(\bar{x}_i^{(0)}, \bar{\ell}_i))   \nonumber \\
	& \neweq{(a)}  \arg {\textstyle \max_{\chi_i^{(k)}} } \  \int   \tilde{p}_i(\bar{x}_i^{(0)}, \bar{\ell}_i) \cdot \log\frac{\tilde{p}_i^{\pi}(\bar{x}_{i}^{(k+1)} | \bar{x}_i^{(k)})}{\tilde{p}_i(\bar{x}_i^{(k+1)} | \bar{x}_i^{(k)})}  \ d\bar{x}_i^{(0)} d\bar{\ell}_i    \allowdisplaybreaks  \nonumber \\
	& \newapprox{(b)}  \arg {\textstyle \max_{\chi_i^{(k)}} } \ \int  \tilde{p}_i(\bar{x}_i^{(0)}, \bar{\ell}_i) \cdot \log {\pi}_i^{(k)}(\bar{u}_i^{*(k)} | \bar{x}_i^{(k)}, \chi_i^{(k)}) \ d\bar{x}_i^{(0)} d\bar{\ell}_i   \allowdisplaybreaks \\
	& \neweq{(c)}  \arg {\textstyle \max_{\chi_i^{(k)}} } \ \sum_{y=1}^{Y} \frac{\tilde{p}_i(\bar{x}_i^{(0)}, \bar{\ell}^{[y]}_i)}{\tilde{q}_i(\bar{x}_i^{(0)}, \bar{\ell}^{[y]}_i)} \cdot \log {\pi}_i^{(k)}(\bar{u}_i^{*(k)} | \bar{x}_i^{(k)}, \chi_i^{(k)})  \allowdisplaybreaks \nonumber \\
	& \neweq{(d)} \arg {\textstyle \max_{\chi_i^{(k)}} } \sum_{y=1}^{Y} d_i^{[y]} \cdot  \log {\pi}_i^{(k)}(\bar{u}_i^{*(k)} | \bar{x}_i^{(k)}, \chi_i^{(k)}) \allowdisplaybreaks \nonumber.
\end{align}
(a) converts the minimization problem to a maximization problem and replaces the joint distributions by the products of step-wise transition distributions; (b) employs the assumption that the distribution of controlled transition equals the product of passive transition distribution and control policy distribution~\cite{Gomez_KDD_2014}, \textit{i.e.} $\tilde{p}_i^{\pi}(\bar{x}_{i}^{(k+1)} | \bar{x}_i^{(k)}) = \tilde{p}_i(\bar{x}_i^{(k+1)} | \bar{x}_i^{(k)}) \cdot {\pi}_i^{(k)}(\bar{u}_i^{*(k)} | \bar{x}_i^{(k)}, \chi_i^{(k)})$, and the control action $\bar{u}_i^{*(k)} = [\bar{B}_{i(d)}(\bar{x}_i^{(k)})^\top  \bar{B}_{i(d)}(\bar{x}_i^{(k)})]^{-1} \allowbreak \bar{B}_{i(d)}(\bar{x}_{i}^{(k)})^\top  [\bar{x}_{i(d)}^{(k+1)} - \bar{x}_{i(d)}^{(k)} - \varepsilon \bar{f}_{i(d)}(\bar{x}_i^{(k)}, t_k) ] / \varepsilon  $ from control affine system dynamics maximizes the likelihood function; (c) approximates the integral by using sample trajectories from $\tilde{q}_i(\bar{x}_i^{(0)}, \bar{\ell}^{[y]}_i)$; (d) substitutes~\eqref{distribution}, and the weight of likelihood function is
\begin{equation*}
	d_i^{[y]} = \tilde{q}_i(\bar{x}_i^{(0)}, \bar{\ell}^{[y]}_i)^{\frac{-1}{1 + \kappa}} \cdot \exp\left( - \frac{\tilde{S}_i^{\varepsilon, \lambda_i}(\bar{x}_i^{(0)}, \bar{\ell}^{[y]}_i)  + \psi_{i}^\top(\bar{x}_i^{(0)}) \cdot \theta}{1+\kappa} \right).
\end{equation*}
Constant terms are omitted in steps (a) to (d).

\section*{Appendix E: Multi-Agent LSOC Algorithms}\label{appE}

\hyperref[alg1]{Algorithm~1} gives the procedures of distributed LSMDP algorithm introduced in \hyperref[Sec3_1]{Section~3.1}.

\begin{breakablealgorithm} \small
	\caption{Distributed LSMDP based on factorial subsystems}
	\label{alg1}
	\begin{algorithmic}[1]
		\renewcommand{\algorithmicrequire}{\textbf{Input}}
		\REQUIRE \parbox[t]{\dimexpr\linewidth-\algorithmicindent}{agent set $\mathcal{D}$, communication network $\mathcal{G}$, initial time $t_0$, exit time $t_f$, initial states $x_i^{t_0}$, exit states $x_i^{t_f}$, joint state-related costs $q_i(\bar{x}_i)$, exit costs $\phi(x_t^{t_f})$, weights on exit costs $w_j^i$, and error bound $\epsilon$.\strut}
		
		\renewcommand{\algorithmicrequire}{\textbf{Initialize}} 
		\REQUIRE factorial subsystems $\mathcal{\bar{N}}_{i \in\mathcal{D}}$, and joint exit costs $\phi_i(\bar{x}_i)$.
		
		\renewcommand{\algorithmicrequire}{\textbf{Planning}:}
		\REQUIRE \

		\FOR{$i \in \mathcal{D} = \{1, \cdots, N\}$} 
		
		\STATEx /*Calculate joint desirability $Z_i(\cdot)$ for subsystem $\mathcal{\bar{N}}_i$*/
		
		\STATE \parbox[t]{\dimexpr\linewidth-\algorithmicindent}{Compute coefficients $\Theta$, $\Omega$ and $Z_{\mathcal{B}}$ in~\eqref{Z_update} for subsystem $\bar{\mathcal{N}}_i$. For distributed planning~\eqref{dist_alge}, partition the coefficients $[I - \Theta, \Omega Z_{\mathcal{B}}]_j$ and calculate the projection matrices $P_j$ for $j\in \mathcal{\bar{N}}_i$. \strut}
		
		\WHILE{ $\| Z_{\mathcal{I}}^{(n+1)} - Z_{\mathcal{I}}^{(n)} \| > \epsilon$}
		
		\STATE \parbox[t]{\dimexpr\linewidth-\algorithmicindent}{Update desirability $Z_\mathcal{I}$ with~\eqref{Z_update}. For distributed planning, exchange local solutions $Z^{(n)}_{\mathcal{I}, j}$ with neighboring agents $k \in \mathcal{N}_j \cap \bar{\mathcal{N}}_i$ and update desirability $Z_\mathcal{I}^{(n+1)}$ with~\eqref{dist_alge}.\strut}
		
		\ENDWHILE
		
		\STATEx /*Calculate control distribution for agent $i$*/
		
		\STATE Compute the joint optimal control distribution $\bar{u}_i^*(\cdot | \bar{x}_i)$ of subsystem $\mathcal{\bar{N}}_i$ by~\eqref{OptimalControl}.
		\STATE Derive the local optimal control distribution $u_i^*(\cdot | \bar{x}_i)$ for agent $i$ by marginalizing $\bar{u}_i^*(\cdot | \bar{x}_i)$.
		\ENDFOR
		
		\renewcommand{\algorithmicrequire}{\textbf{Execution}:}
		\REQUIRE

		\WHILE {$t < t_f$ or $x \notin \mathcal{B}$}
		
		\FOR{$i \in \mathcal{D} = \{1, \cdots, N\}$} 
		
		\STATE Measure joint state $\bar{x}_i(t)$ by collecting state information from neighboring agents $j \in \mathcal{N}_i$.
		
		\STATE Sample control action or posterior state $x'_i$ from $u_i^*(\cdot | \bar{x}_i)$.
		
		\ENDFOR
		
		\ENDWHILE
	\end{algorithmic}     
\end{breakablealgorithm}

\noindent \hyperref[alg2]{Algorithm~2} illustrates the procedures of sampling-based distributed LSOC algorithm introduced in \hyperref[Sec3_2]{Section 3.2}.

\begin{breakablealgorithm} \small
	\caption{Distributed LSOC based on sampling estimator}
	\label{alg2}
	\begin{algorithmic}[1]
		\renewcommand{\algorithmicrequire}{\textbf{Input}}
		\REQUIRE \parbox[t]{\dimexpr\linewidth-\algorithmicindent}{agent set $\mathcal{D}$, communication network $\mathcal{G}$, initial time $t_0$, exit time $t_f$, initial states $x_i^{t_0}$, exit states $x_i^{t_f}$, joint state-related costs $q_i(\bar{x}_i)$, control weight matrices $\bar{R}_i$, exit costs $\phi(x_t^{t_f})$, and weights on exit costs~$w_j^i$\strut}
		
		\renewcommand{\algorithmicrequire}{\textbf{Initialize}} 
		\REQUIRE factorial subsystems $\mathcal{\bar{N}}_{i \in \mathcal{D}}$, and joint exit costs $\phi_i(\bar{x}_i)$.
		
		\renewcommand{\algorithmicrequire}{\textbf{Planning \& Execution}:}
		\REQUIRE \
		
		\WHILE {$t < t_f$ or $x \notin \mathcal{B}$}
		
		
		\FOR{$i \in \mathcal{D} = \{1, \cdots, N\}$} 
		
		\STATE Measure joint state $\bar{x}_i(t)$ by collecting state information from neighboring agents $j \in \mathcal{N}_i$.
		
		\STATE \parbox[t]{\dimexpr\linewidth-\algorithmicindent}{Generate uncontrolled trajectory set $\mathcal{Y}_i$ by sampling or collecting data from neighboring agents. \strut}
		
		\STATE \parbox[t]{\dimexpr\linewidth-\algorithmicindent}{Evaluate generalized path value $\tilde{S}_i^{\varepsilon,\lambda_i}(\bar{x}_i^{(0)}, \bar{\ell}^{[y]}_i, t_0)$ and initial control $\tilde{u}_i(\bar{x}_i^{(0)}, \bar{\ell}^{[y]}_i, t_0)$ of each sample trajectory $(\bar{x}_i^{(0)}, \bar{\ell}_i^{[y]})$ in $\mathcal{Y}_i$ by~\eqref{Prop3E2} and~\eqref{inictrl}. \strut}
		
		\STATE \parbox[t]{\dimexpr\linewidth-\algorithmicindent}{Approximate the optimal path distribution $\tilde{p}^*_i(\bar{\ell}_i^{[y]} | \bar{x}_i^{(0)}, t_0)$  and joint optimal control action $\bar{u}^*_i(\bar{x}_i, t)$ by~\eqref{MC_Estimator}, \eqref{MC_Estimato2} or other sampling techniques. \strut}
		
		\STATE Select and execute local control action $u_i^*(\bar{x}_i, t)$ from joint optimal control action $\bar{u}^*_i(\bar{x}_i, t)$.
		
		\ENDFOR
		
		\ENDWHILE
		
	\end{algorithmic}     
\end{breakablealgorithm}

\noindent \hyperref[alg3]{Algorithm~3} illustrates the procedures of REPS-based distributed LSOC algorithm introduced in \hyperref[Sec3_2]{Section 3.2}.

\begin{breakablealgorithm} \small
	\caption{Distributed LSOC based on REPS}
	\label{alg3}
	\begin{algorithmic}[1]
		\renewcommand{\algorithmicrequire}{\textbf{Input}}
		\REQUIRE \parbox[t]{\dimexpr\linewidth-\algorithmicindent}{agent set $\mathcal{D}$, communication network $\mathcal{G}$, initial time $t_0$, exit time $t_f$, initial states $x_i^{t_0}$, exit states $x_i^{t_f}$, joint state-related costs $q_i(\bar{x}_i)$, control weight matrices $\bar{R}_i$, exit costs $\phi(x_t^{t_f})$, weights on exit costs~$w_j^i$, and initial policy $\pi^{(k)}_i(\bar{u}^{(k)}_i | \bar{x}_i^{(k)}, \chi^{(k)}_i)$. \strut}
		
		\renewcommand{\algorithmicrequire}{\textbf{Initialize}} 
		\REQUIRE factorial subsystems $\mathcal{\bar{N}}_{i \in \mathcal{D}}$, and joint exit costs $\phi_i(\bar{x}_i)$.
		
		\renewcommand{\algorithmicrequire}{\textbf{Planning \& Execution}:}
		\REQUIRE \
		
		\WHILE {$t < t_f$ or $x \notin \mathcal{B}$}
		
		\FOR{$i \in \mathcal{D} = \{1, \cdots, N\}$} 
		
		\STATE Measure joint state $\bar{x}_i(t)$ by collecting state information from neighboring agents $j \in \mathcal{N}_i$.
		
		\REPEAT	
		
		\STATE \parbox[t]{\dimexpr\linewidth-\algorithmicindent}{Generate trajectory set $\mathcal{Y}_i$ by sampling with (initial) policy $\pi^{(k)}_i(u^{(k)}_i | \bar{x}^{(k)}_i, \chi^{(k)}_i)$ or collecting data from neighboring agents $j \in \mathcal{N}_i$. \strut}
		
		\STATE Solve dual variables $\kappa, \theta$ and $\eta$ from dual problem~\eqref{DualProb} and condition~\eqref{NormCond}.
		
		\STATE Compute path distribution $\tilde{q}(\bar{x}_i^{(0)}, \bar{\ell}_i^{[y]})$ of each trajectory in $\mathcal{Y}_i$ by \eqref{OldDist}. 
		
		\STATE Update parameter $\chi^{(k)}_i$ by solving weighted maximum likelihood problem \eqref{UpdatingStep}. 
		
		\UNTIL{convergence of parametric policy $\pi^{(k)}_i(\bar{u}^{(k)}_i | \bar{x}_i^{(k)}, \chi^{(k)}_i)$.}
		
		\STATE  Marginalize joint optimal control policy $\pi^{(0)}_i(\bar{u}^{(0)}_i | \bar{x}_i^{(0)}, \chi^{(0)}_i)$ by~\eqref{Marginalize_Policy}. 
		
		\STATE \parbox[t]{\dimexpr\linewidth-\algorithmicindent}{Sample and execute local control action $u^*_i(\bar{x}_i, t)$ from local optimal control policy $\pi_i^{*(0)}(u^{*(0)}_i|\bar{x}_i^{(0)}, \chi_i^{*(0)})$.\strut}
		
		\ENDFOR
		
		\ENDWHILE
	\end{algorithmic}     
\end{breakablealgorithm}


	\bibliographystyle{IEEEtran}
	\bibliography{myref}

\end{document}